\newif\ifarxiv
\renewcommand\cite{\citep}
\newcommand*\samethanks[1][\value{footnote}]{\footnotemark[#1]}
\newcommand{\rows}{R}
\newcommand{\cols}{C}
\newcommand{\pads}{P}
\newcommand{\pmean}{\mu}
\newcommand{\emean}{\hat{\mu}}
\newcommand{\omean}{\bar \mu}
\newif\ifdraft
\newcommand{\cc}[1]{\textcolor{cyan}{CC: #1}}
\newcommand{\cc}[1]{}
\RecustomVerbatimCommand{\VerbatimInput}{VerbatimInput}%
{fontsize=\footnotesize,
 %
 framesep=2em, 
 rulecolor=\color{gray},
 %
 %
 commandchars=\|\(\), 
 commentchar=*        
}
\Crefname{theorem}{Thm.}{Theorems}
\Crefname{thm}{Thm.}{Theorems}
\Crefname{cor}{Cor.}{Corollaries}
\Crefname{lem}{Lem.}{Lemmas}
\Crefname{prop}{Prop.}{Propositions}
\Crefname{assumption}{Assumption}{Assumptions}
\Crefname{definition}{Defn.}{Definitions}
\Crefname{claim}{Claim}{Claims}
\Crefname{section}{Sec.}{Sections}
\Crefname{appendix}{App.}{Appendices}
\Crefname{algorithm}{Alg.}{Algorithms}
\Crefname{equation}{Eq.}{Equations}
\title{Private Federated Learning with Autotuned Compression}
\author{
\makebox[1in]{ \hfill Enayat Ullah \thanks{The Johns Hopkins University.  Work completed while on internship at Google, \href{mailto:enayat@jhu.edu}{enayat@jhu.edu}}
\thanks{Equal contribution}
}
\makebox[2.8in]{ \hfill Christopher A. Choquette-Choo \thanks{Google Research, Brain team; Now at Google DeepMind \href{mailto:cchoquette@google.com}{cchoquette@google.com.}}
\samethanks[2]
}
\and
\makebox[1.2in]{ \hfill Peter Kairouz \thanks{Google Research, \href{mailto:kairouz@google.com}{kairouz@google.com}}
}
\makebox[1.2in]{ \hfill Sewoong Oh \thanks{University of Washington and Google Research, \href{mailto:sewoongo@google.com}{sewoongo@google.com}}}
}
\begin{document}
\date{}
\maketitle
\begin{abstract}
We propose new techniques for reducing communication in private federated learning without the need for setting or tuning compression rates. Our on-the-fly methods automatically adjust the compression rate based on the error induced during training, while maintaining provable privacy guarantees through the use of secure aggregation and differential privacy. Our techniques are provably instance-optimal for mean estimation, meaning that they can adapt to the ``hardness of the problem'' with minimal interactivity. We demonstrate the effectiveness of our approach on real-world datasets by achieving favorable compression rates without the need for tuning.

\end{abstract}
\section{Introduction}
\label{sec:intro}

Federated Learning (FL) is a form of distributed learning whereby a shared global model is trained collaboratively by many clients under the coordination of a central service provider. Often, clients are entities like mobile devices which may contain sensitive or personal user data. FL has a favorable construction for privacy-preserving machine learning, since user data never leaves the device. Building on top of this can provide strong trust models with rigorous user-level differential privacy (DP) guarantees~\cite{dwork2010differential}, which has been studied extensively in the literature~\cite{dwork2010pan, mcmahan2017learning, mcmahan2022private,kairouz2021practical}.

More recently, it has become evident that secure aggregation (SecAgg) techniques~\cite{bonawitz2016practical, bell2020secure} are required to prevent honest-but-curious servers from breaching user privacy~\cite{fowl2022robbing,hatamizadeh2022gradient,suliman2022two}.
Indeed, SecAgg and DP give a strong trust model for privacy-preserving FL~\cite{kairouz2021distributed,chen2022fundamental, agarwal2021skellam, chen2022poisson,xu2023federated}. However, SecAgg can introduce significant communication burdens, especially in the large cohort setting which is preferred for DP. In the extreme, this can significantly limit the scalability of DP-FL with SecAgg. This motivated the study of privacy-utility-communication tradeoffs by~\citet{chen2022fundamental}, where they found that significant communication reductions could be attained essentially ``for free'' (see Figure \ref{fig:compression-differ-tasks}) by using a variant of the Count Sketch linear data structure.

However, the approach of~\citet{chen2022fundamental}
suffers from a major drawback: it is unclear how to set the sketch size (i.e., compression rate) a priori. Indeed, using a small sketch may lead to a steep degradation in 
{utility} whereas a large sketch may hurt bandwidth. Further, as demonstrated in Figure \ref{fig:compression-differ-tasks}, the ``optimal'' compression rate can differ substantially for different datasets, tasks, model architectures, and noise multipliers (altogether, DP configurations).   
This motivates the following question which we tackle in this paper: 
\begin{center}
    \textit{Can we devise adaptive (autotuning) compression techniques which can find favorable compression rates in an instance-specific manner?}
\end{center}

\begin{figure}
    \centering
    \vspace{-1em}
    \includegraphics[width=0.5\linewidth]{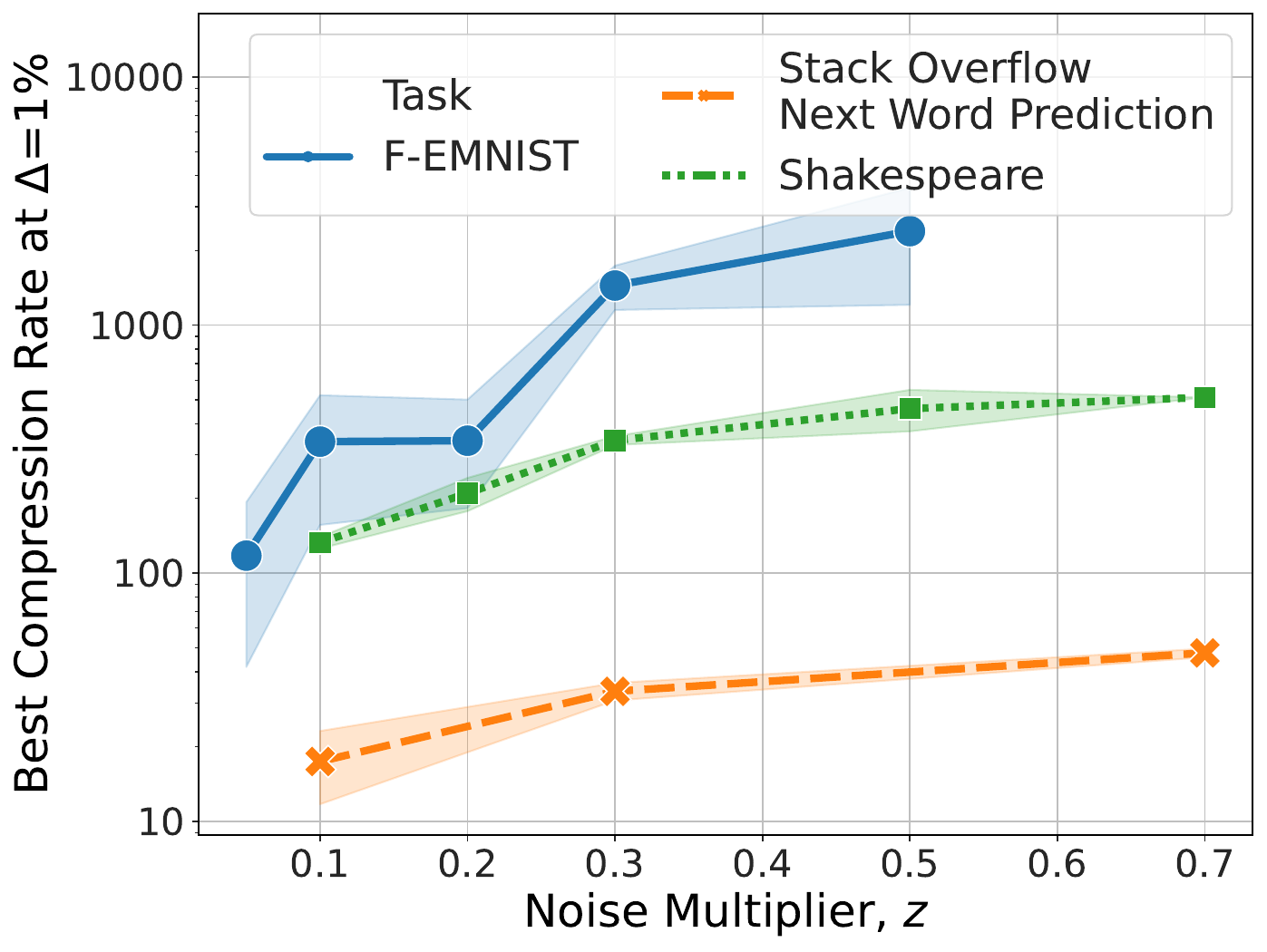}
    \vspace{-1em}
    \caption{\textbf{The optimal compression rates differ highly between different tasks.} 
    For each noise multiplier, we run compression rates of $2^i$, for $i\in[1,13]$, and report the highest compression rate with a maximum of $\Delta=1\%$ relative drop in accuracy from the same model without compression.
    We interpolate accuracies for fine-grained results. See \cref{sec:experiments} for task descriptions.}
    \label{fig:compression-differ-tasks}
\end{figure}

\ifarxiv
\paragraph{Naive approaches to tuning the compression rate.}
\else
{\bf Naive approaches to tuning the compression rate.} 
\fi
Before introducing our proposed approaches, we discuss the challenges of using approaches based on standard hyper-parameter tuning. The two most common approaches are:

\ifarxiv
\begin{enumerate}
    \item \textbf{Grid-search {(``Genie'')}}: Here, we a priori fix a set of compression rates, try all of them and choose the one which has the \textit{best} compression-utility (accuracy) trade-off. The problem is that this requires trying 
    {most or all}
    compression rates, and {would likely lead to even more communication in totality.}
    \item \textbf{Doubling-trick}: 
    This involves starting with an initial \textit{optimistic} guess for the compression rate, using it, and evaluating the 
    {utility}
    for this choice. The process is then repeated, with the compression rate halved each time, until a predetermined \textit{stopping criterion} is met. While this method  ensures that the required communication is no more than twice the \textit{optimal} amount, it can be difficult to determine a \textit{principled} stopping criterion (see \cref{fig:compression-differ-tasks} where we show that optimal compression rates depend on the task and model used). It may be desirable to maintain a high level of 
    {utility}
    close to that of the uncompressed approach, but 
    this requires knowledge of the 
    {utility}
    of the uncompressed method, which is unavailable.
\end{enumerate}
\else{
\begin{CompactEnumerate}
    \item \textbf{Grid-search {(``Genie'')}}: Here, we a priori fix a set of compression rates, try all of them and choose the one which has the \textit{best} compression-utility (accuracy) trade-off. The problem is that this requires trying 
    {most or all}
    compression rates, and {would likely lead to even more communication in totality.}
    \item \textbf{Doubling-trick}: 
    This involves starting with an initial \textit{optimistic} guess for the compression rate, using it, and evaluating the 
    {utility}
    for this choice. The process is then repeated, with the compression rate halved each time, until a predetermined \textit{stopping criterion} is met. While this method  ensures that the required communication is no more than twice the \textit{optimal} amount, it can be difficult to determine a \textit{principled} stopping criterion (see \cref{fig:compression-differ-tasks} where we show that optimal compression rates depend on the task and model used). It may be desirable to maintain a high level of 
    {utility}
    close to that of the uncompressed approach, but 
    this requires knowledge of the 
    {utility}
    of the uncompressed method, which is unavailable.
\end{CompactEnumerate}
}
\fi

In our approach, instead of treating the optimization procedure as a closed-box that takes the compression rate as input, we open it up to identify the core component, \textit{mean estimation}, which is more amenable to adaptive tuning.

\ifarxiv
\paragraph{The proxy of mean estimation.}
\else
{\bf The proxy of mean estimation.}
\fi
A core component of first-order optimization methods 
is estimating the mean of gradients  at every step.
From an algorithmic perspective, 
this \textit{mean-estimation view} has been fruitful in incorporating
\textit{modern constraints}, such as privacy \cite{bassily2014private,abadi2016deep,choquette2022multi}, robustness \cite{diakonikolas2019sever,prasad2018robust} and compression \cite{alistarh2017qsgd,chen2022fundamental,suresh2022correlated}, by simply designing appropriate mean estimation procedures, while reusing the rest of the optimization method as is. 
In
FL, the popular Federated Averaging (FedAvg) algorithm \cite{mcmahan2017communication}, which we will use, computes a mean of client updates at every round.

\ifarxiv
\paragraph{Our theoretical results.}
\else
\textbf{Our theoretical results.}
\fi
Previously, \cite{chen2022fundamental} studied federated mean estimation  under the constraints of SecAgg and $(\epsilon,\delta)$-DP,
and showed that with $n$ clients (data points) in $d$ dimensions,
 the optimal (worst-case) communication cost is $\min\br{d,n^2\epsilon^2}$ bits such that the mean squared error is of the same order as optimal error under (central) DP, without any compression or SecAgg.
However, in practice, the instances are rarely worst-case, and one would desire a method with more \textit{instance-specific} performance.
Towards this, we introduce \textbf{two} fine-grained descriptions of instance classes, parameterized by {the} (a) \textbf{norm of mean} (\cref{sec:adaptivity_norm}), which is motivated by the fact that in FL, the mean vector is the average per-round gradient and from classical optimization wisdom, its norm goes down as training progresses, and (b) \textbf{tail-norm of mean} (\cref{sec:adaptivity_tail_norm}), which captures the setting where the mean is \textit{approximately sparse}, motivated by empirical observations {of this phenomena} on gradients in deep learning \cite{micikevicius2017mixed,shi2019understanding}.

For both of these settings, we design \textit{adaptive} procedures which are oblivious to the values of norm and tail-norm, yet yield 
    performance competitive to a method with complete knowledge of these. Specifically, for the norm of mean setting,
    our proposed procedure has a  per-client communication complexity of roughly $\tilde O(n^2\epsilon^2 M^2)$ bits where $M\leq 1$ is the ratio of norm of mean to the worst-case norm bound on data points. For the tail-norm setting, we get an improved communication complexity given by the \textit{generalized sparsity} of the mean
-- see \cref{sec:adaptivity_tail_norm} for details. Further, for both  settings, we show that our proposed procedures achieve
(a) optimal error under DP, without communication constraints, and
(b) optimal communication complexity, under the SecAgg constraint,  up to poly-logarithmic factors.

We note that adaptivity to tail-norm implies adaptivity to norm, but
this comes at a price of $\log{d}$, rather than \textit{two}, rounds of communication, which may be less favorable, especially in FL settings. We also show that interaction is necessary for achieving (nearly) optimal communication complexity, adaptively.

Finally, even without the need for adaptivity, e.g. in centralized settings, as by-products, our results yield optimal rates for DP mean estimation for the above fine-grained instance classes, parameterized by norm or tail-norm of mean, which could be of independent interest.

\ifarxiv
\paragraph{Our techniques.}
\else
\textbf{Our techniques.} 
\fi
Our compression technique, \textit{count-median-of-means} sketching, is based on linear sketching, and generalizes the count-mean sketch used in \cite{chen2022fundamental}.
Our proposed protocol for \textit{federated mean estimation} (FME) comprises of multiple rounds of communication in which each participating client sends \textbf{two} (as opposed to one) sketches.
The first sketch is used to estimate the mean, as in prior work,
whereas the second sketch, which is much smaller, is used to track certain statistics for adaptivity. The unifying guiding principle behind all the proposed methods is to set compression rate such that the total compression error does not overwhelm the DP error. 

\ifarxiv
\paragraph{Experimental evaluation.}
\else
{\bf Experimental evaluation.}
\fi
We map our mean estimation technique to the FedAvg algorithm and test it on three standard FL benchmark tasks: character/digit recognition task on the F-EMNIST dataset and next word prediction on Shakespeare and Stackoverflow datasets (see \cref{sec:experiments} for details).
{
We find that our proposed technique can obtain \textit{favorable} compression rates without tuning. In particular, we find that our one-shot approach tracks the potentially unachievable \textbf{Genie} baseline (shown in Figure \ref{fig:compression-differ-tasks}), with no harm to model utility beyond a small slack ($\Delta\approx1\%$) which we allow for any compression method, including the Genie.
Our code is at: \url{https://github.com/google-research/federated/tree/master/private_adaptive_linear_compression}.
}

\ifarxiv
\paragraph{Related work.} 
\else
{\bf Related work.}
\fi
Mean estimation is one of most widely studied problems in the DP literature.
A standard setting deals with bounded data points \cite{steinke2015between, kamath2020primer}, which is what we focus on in our work. However, mean estimation has also been studied under probabilistic assumptions on the data generating process \cite{karwa2017finite,bun2019average,kamath2020private,biswas2020coinpress,liu2021robust,liu2022differential}.
The work most related to ours is that of \cite{chen2022fundamental}, which showed that in the worst-case, $O(\min(d,n^2\epsilon^2))$ bits of per-client communication is sufficient and necessary for achieving optimal error rate for SecAgg-compatible distributed DP mean estimation. Besides this, \cite{agarwal2018cpsgd,agarwal2021skellam,kairouz2021distributed} also study compression under DP in FL settings, but rely on quantization and thus incur $\Omega(d)$ per-client communication.
Finally, many works, such as \cite{feldman2021lossless,chen2020breaking, asi2022optimal,duchi2018minimax, girgis2021shuffled}, study mean estimation, with and without compression, under local DP \cite{warner1965randomized, kasiviswanathan2011can}. However, we focus on SecAgg-compatible (distributed) DP. 

There has been a significant amount of research on optimization with compression in distributed and federated settings. 
The most common compression techniques are quantization \cite{alistarh2017qsgd,wen2017terngrad} and 
{sparsification}
\cite{aji2017sparse,stich2018sparsified}.
Moreover, the works of \cite{makarenko2022adaptive,jhunjhunwala2021adaptive, chen2018lag} consider adaptive compression wherein the compression rate is adjusted across rounds.
However, the compression techniques used in the aforementioned works
are non-linear, and thus are not SecAgg compatible.
The compression techniques most related to ours  are of \cite{ivkin2019communication,rothchild2020fetchsgd,haddadpour2020fedsketch} using linear sketching.
However, they do not consider adaptive compression or privacy. Finally, the works of \cite{arora2022differentially,arora2022faster,bu2021fast} use random projections akin to sketching in DP optimization, but in a centralized setting, for improved utility or run-time.

\ifarxiv
\paragraph{Organization.}
\else
\textbf{Organization.}
\fi
We consider two tasks, Federated Mean Estimation (FME) and Federated Optimization (FO). The former primarily serves as a subroutine for the latter by considering the vector $z$ at a client to be the model update at that round of FedAvg. 
In \cref{sec:fme}, we propose two algorithms for
FME
: \textbf{Adapt Norm}, in \cref{alg:dp-mean-estimation-adapt-server_adapt_mean} (with the formal claim of adaptivity in \cref{thm:dp-mean-estimation-upper-bound-1}) and \textbf{Adapt Tail}, in  \cref{alg:dp-mean-estimation-adapt-server_adapt_tail} (with the formal claim of adaptivity in \cref{thm:dp-mean-estimation-upper-bound-2}), which adapt to the norm of the mean and the tail-norm of the mean, respectively.
In \cref{sec:fme-to-fo-norm}, we show how to extend our
Adapt Norm FME protocol to the FO setting in 
\cref{alg:dp-mean-estimation-adapt-server_adapt_experiments}.
Finally, in \cref{sec:experiments}, we evaluate the performance of the Adapt Norm approach
for FO on benchmark tasks in FL.

\section{Preliminaries}
\label{sec:prelims}
\begin{definition}[$(\epsilon, \delta)$-Differential Privacy]
An algorithm $\cA$ satisfies $(\epsilon,\delta)$-differential privacy if for all datasets $D$ and $D'$ differing in one data point and all events $\cE$ in the range of the $\cA$, we have, $\mathbb{P}\br{\cA(D)\in \cE} \leq     e^\varepsilon \mathbb{P}\br{\cA(D')\in \cE}  +\delta$. 
\end{definition}

\ifarxiv
\paragraph{Secure Aggregation (SecAgg).}
\else
{\bf Secure Aggregation (SecAgg).}
\fi
SecAgg is a cryptographic technique that allows multiple parties to compute an aggregate value, such as a sum or average, without revealing their individual contributions to the computation.
In the context of FL, the works of \cite{bonawitz2016practical,bell2020secure} proposed practical SecAgg schemes. We assume SecAgg as default, as is the case in typical FL systems.

\ifarxiv
\paragraph{Count-mean sketching.}
\else
{\bf Count-mean sketching.}
\fi
We  describe the compression technique of \cite{chen2022fundamental}, which is based on the sparse Johnson-Lindenstrauss (JL) random matrix/count-mean sketch data structure \cite{kane2014sparser}. The sketching operation is a linear map, denoted as $S:\bbR^d\rightarrow\bbR^{PC}$, where $P,C\in \bbN$ are parameters. The corresponding \textit{unsketching}  operation is denoted as $U:\bbR^{PC}\rightarrow \bbR^d$.
To explain the sketching operation, we begin by introducing the count-sketch data structure.
A count-sketch is a linear map, which for $j\in [P]$, is denoted as
$S_j:\bbR^{d} \rightarrow \bbR^{C}$. It is described using two hash functions: bucketing hash $h_j:[d]\rightarrow [C]$ and sign hash: $s_j:[d]\rightarrow \bc{-1,1}$, mapping the $q$-th co-ordinate $z_q$ to $\sum_{i=1}^d s_j(i) \mathbbm{1}\br{h_j(i)=h_j(q)}z_i$.

The 
count-mean sketch construction pads $P$ count sketches to get $S: \bbR^d \rightarrow \bbR^{PC}$, mapping $z$ as follows,
\begin{align*}
    S(z) = ({1}/{\sqrt{P}})
    \begin{bmatrix}
{S_{1}}^\top
    {S_{2}}^\top &
    \cdots &
    {S_{P}}^\top 
    \end{bmatrix}^\top z\;. 
\end{align*}
The above, being a JL matrix, approximately preserves norms of $z$ i.e. $\norm{S(z)}\approx \norm{z}$, which is useful in controlling sensitivity, thus enabling application of DP techniques. The unsketching operation is simply $U(S(z)) = S^\top S(z)$. This gives an unbiased estimate, ${\mathbb E}[S^TS(z)]=z$, whose variance scales as $d\|z\|^2/(PC)$. This captures the trade-off between compression rate, $d/PC$, and error.
\section{Instance-Optimal Federated Mean Estimation (FME)}
\label{sec:fme}
A central operation in standard federated learning (FL) algorithms is averaging the client model updates in a distributed manner. 
This can be posed as a standard distributed mean estimation (DME) problem with $n$ clients, each with a vector $z_i \in \bbR^d$ sampled i.i.d. from an unknown distribution $\cD$ with population mean $\mu$. The goal of the server is to estimate $\mu$ while communicating only a small number of bits with the clients at each round. Once we have a communication efficient scheme, this can be readily integrated into the learning algorithm of choice, such as FedAvg. 

 In order to provide privacy and security of the clients' data, mean estimation for FL has additional requirements: we can only access the clients via SecAgg~\cite{bonawitz2016practical} and we need to satisfy DP~\cite{dwork2014algorithmic}. We refer to this problem as {\em Federated Mean Estimation} (FME).  To bound the sensitivity of the empirical mean, $\hat \mu(\{z_i\}_{i=1}^n) :=(1/n)\sum_{i=1}^n z_i$, the data is assumed to be bounded by  $\norm{z_i}\leq G$.  Since gradient norm clipping is 
 {almost always used in the DP}
 FL setting, we assume $G$ is known. The first result characterizing the communication cost of FME is by \citet{chen2022fundamental}, who propose an unbiased estimator based on count-mean sketching that satisfy $(\epsilon,\delta)$-differential privacy and (order) optimal error of 
\begin{align}
    \label{eqn:mean-estimation-optimal-rate}
     \mathbb{E}[\norm{\overline \mu - \mu}^2]\; =\; \tilde \Theta \br{G^2\br{\frac{d}{n^2\epsilon^2}+\frac{1}{n}}}, 
\end{align}
with an (order) optimal communication complexity of $\tilde O(n^2\epsilon^2)$. 
The error rate in \cref{eqn:mean-estimation-optimal-rate} is optimal as it matches the information theoretic lower bound of \cite{steinke2015between} that holds even without any communication constraints. 
The communication cost of $\tilde O(n^2\epsilon^2)$ cannot be improved for the worst-case data as it matches the lower bound in \cite{chen2022fundamental}. However, it might be possible to improve  on communication for specific instances of $\mu$.
We thus ask the fundamental question of how much communication is necessary and sufficient to achieve the optimal error rate as a function of the ``hardness'' of the problem.

\subsection{Adapting to the Instance's Norm}
\label{sec:adaptivity_norm}
Our more fine-grained error analysis of the scheme in \cite{chen2022fundamental} shows that, with a sketch size of $O(PC)$ that costs $O(PC)$ in per client communication, one can achieve 
\ifarxiv
\begin{align}
    \mathbb{E}[\norm{\overline \mu - \mu}^2] =    \tilde O\Big(G^2\,\Big( \frac{d(M^2 + 1/n)}{PC} + 
    \frac{d}{n^2\epsilon^2} +\frac{1}{n} \Big) \Big)\,, 
    \label{eqn:norm_adptive_bound}
\end{align}
\else
{\small
\begin{align}
    \mathbb{E}[\norm{\overline \mu - \mu}^2] =    \tilde O\Big(G^2\,\Big( \frac{d(M^2 + 1/n)}{PC} + 
    \frac{d}{n^2\epsilon^2} +\frac{1}{n} \Big) \Big)\,, 
    \label{eqn:norm_adptive_bound}
\end{align}
}
\fi
where we define the normalized  norm of the mean,
$M\;\;=\;\;\frac{\norm{\mu}}{G}=\;\;\frac{\norm{\mathbb E \left[z\right]}}{G}\,\in\,[0,1]\;
$ and $G$ is the maximum $\lVert z \rVert$ as chosen for a sensitivity bound under DP FL.
The first error term captures how the sketching error scales as the norm of the mean, $MG$. 
When $M$ is significantly smaller than one, as motivated by our application to FL in \cref{sec:intro}, a significantly smaller choice of the communication cost,  $PC=O(\min\br{n^2\epsilon^2,nd}\br{M^2 + 1/n})$, is sufficient to achieve the optimal error rate of \cref{eqn:mean-estimation-optimal-rate}.
The dominant term is smaller than the standard sketch size of $PC=O(n^2\epsilon^2)$ by a factor of $M^2 \in[0,1]$. However, selecting this sketch size requires knowledge of $M$. This necessitates a scheme that adapts to the current instance by privately estimating $M$ with a small communication cost. This leads to the design of our proposed {\em interactive} 
\cref{alg:dp-mean-estimation-adapt-server_adapt_mean}.  

Precisely, we call an FME algorithm {\em instance-optimal with respect to $M$} if it achieves the optimal error rate of \cref{eqn:mean-estimation-optimal-rate} with communication complexity of $O(M^2 n^2\epsilon^2)$ for every instance whose norm of the mean is bounded by $GM$.
We propose a novel adaptive compression scheme in
Section~\ref{sec:norm_algorithm} and show instance optimality in Section~\ref{sec:adaptivity_norm_theoretical_guarantees}.

\subsubsection{Instance-optimal FME for norm \texorpdfstring{$M$}{M}}
\label{sec:norm_algorithm}

We present a two-round procedure that achieves the optimal error in  \cref{eqn:mean-estimation-optimal-rate} with instance-optimal communication complexity of
$O\br{\min\br{d, M^2n^2\epsilon^2/\log{1/\delta}, M^2nd}}$,
without prior knowledge of $M$. The main idea is to use count-mean sketch and in the first round, to construct a private yet accurate estimate of $M$.
This is enabled by the fact the count-mean sketch approximately preserves norms. 
In the second round, we set the sketch size based on this estimate. 
Such interactivity is $(i)$ provably necessary for instance-optimal FME as we show in \cref{thm:lower-bound-communication-bounded-mean-2}, and $(ii)$ needed in other problems that target instance-optimal bounds with DP \cite{berrett2020locally}. 

The  client protocol (line 3 in \cref{alg:dp-mean-estimation-adapt-server_adapt_mean})
is similar to that in \citet{chen2022fundamental}; each client computes a sketch, clips it,
and sends it to the server. 
However,
it
crucially differs in that each client sends two sketches.
The second sketch is used for estimating the statistic
to ensure that we can achieve instance-optimal compression-utility tradeoffs as outlined in $(i)$ and $(ii)$ above. 
We remark that even though only one sketch is needed for the Adapt Norm approach, since the statistic 
can be directly estimated from the (first) sketch $S$ without a second sketch, allowing for additional minor communication optimization, we standardize our presentation on this two-sketch approach to make it inline with the Adapt Tail approach presented in \cref{sec:adaptivity_tail_norm} which requires both sketches.

\begin{algorithm}[t]

\caption{
Adapt Norm FME
}
\label{alg:dp-mean-estimation-adapt-server_adapt_mean}
\begin{algorithmic}[1]
\REQUIRE Sketch sizes $
\cols_1, \pads,
\tilde \cols, \tilde \pads$, 
noise variance
$\tilde \sigma$, $\sigma$, 
$\overline \gamma$,
{ client data $\bc{z_c}_c$}
\FOR{$j=1$ to $2$}
\STATE Select $n$ random clients {$\{z_c^{(j)}\}_{c=1}^n$}
and 
broadcast sketching operators $S_j$ and $\tilde S$ of sizes $(\cols_j, \pads)$ and  $(\tilde \cols, \tilde \pads)$, respectively.
\STATE \textbf{SecAgg}: 
{\small $\nu_j=\! \text{SecAgg}(\{Q^{(c)}_{j}\}_{c=1}^n)+\cN\left(0,\frac{\sigma^2}{n} \bbI_{PC}\right)$}, \\ $\tilde \nu_j \!= \!
 \text{SecAgg}(\{\tilde Q^{(c)}_{j}\}_{c=1}^n)${, where  $Q^{(c)}_{j} \gets \mathsf{clip}_{B}(S_{j}(z^{(j)}_c))$, 
$\tilde Q^{(c)}_j \gets \mathsf{clip}_{B}(\tilde S_{j}(z^{(j)}_{c}))$ } 
\STATE   \textbf{Unsketch DP mean}:
Compute $\omean_j = U_{j}(\nu_j)$
        \STATE \textbf{Estimate DP norm}: {\small $\hat n_j\!\!=\!
         \text{clip}_B(\norm{\tilde \nu_j})\!+\! \text{Laplace}(\tilde \sigma) $}
        \STATE  $C_{j+1} =
        \max\br{\left \lceil\min\br{\frac{n^2\epsilon^2}{\log{1/\delta}},nd}\frac{\br{\hat n_j + \overline \gamma}^2}{G^2P}\right \rceil, 2}$
    \ENDFOR
\end{algorithmic}
\end{algorithm}

The server protocol, in \cref{alg:dp-mean-estimation-adapt-server_adapt_mean}, aggregates the sketches using SecAgg (line 3). In the first round, 
 $Q_j$'s are not used, i.e., $\cols_1=0$.
 Only $\tilde Q_j$'s are used to construct a private estimate of the norm of the mean $\hat n_1$ (line 5). We do not need to unsketch $\tilde Q_j$'s as the norm is preserved under our sketching operation.
 We use this estimate to set the next round's sketch size, $C_{j+1}$ (line 6), so as to ensure the compression error of the same order as privacy and statistical error; this is the same choice we made when we assumed oracle knowledge of $M$---see discussion after \cref{eqn:norm_adptive_bound}. In the second round and onwards, we use the first sketch, which is aggregated using SecAgg and privatized by adding appropriately scaled Gaussian noise (line 3).
Finally, the $Q_j$'s are unsketched to estimate the mean.
Note that the clients need not send $\tilde Q_j$'s after the first round. 

\subsubsection{Theoretical analysis }
\label{sec:adaptivity_norm_theoretical_guarantees}

 We show that the Adapt Norm approach (\cref{alg:dp-mean-estimation-adapt-server_adapt_mean})
 achieves instance-optimality with respect to $M$; an optimal error rate of \cref{eqn:mean-estimation-optimal-rate} is achieved for every instance of the problem with an efficient communication complexity of  $O(M^2n^2\epsilon^2)$ (Theorem~\ref{thm:dp-mean-estimation-upper-bound-1}). The optimality follows from a matching lower bound  in  Corollary~\ref{thm:lower-bound-communication-bounded-mean-1}. 
 We further establish that interactivity, which is a key feature of Algorithm~\ref{alg:dp-mean-estimation-adapt-server_adapt_mean}, is critical in achieving instance-optimality. This follows from a lower bound in Theorem~\ref{thm:lower-bound-communication-bounded-mean-2} that proves a fundamental gap between interactive and non-interactive algorithms. 

\begin{theorem}
\label{thm:dp-mean-estimation-upper-bound-1}
For any choice of the failure probability $0 < \beta < \min\br{\frac{\log{1/\delta}}{n^2\epsilon^2},1}$, 
\cref{alg:dp-mean-estimation-adapt-server_adapt_mean}
with 
$B=2G,
\cols_1  = 0,
\pads = \tilde \pads = \lceil \Theta\br{\log{4/\beta}} 
    \rceil,\tilde \cols  = 2,\sigma^2 =\frac{256B^2\log{1/\delta}}{\epsilon^2},
    \tilde \sigma = \frac{4B}{n\epsilon} \text{ and }
    \overline \gamma =  \frac{2B\log{8/\beta}}{n\epsilon} + \frac{B\sqrt{\log{16/\beta}}}{\sqrt{n}}$
satisfies $(\epsilon,\delta)$-DP,
 the output $\omean_2$ is an unbiased estimate of mean, and its error
is bounded as,
\begin{align*}
    &\mathbb{E}[\norm{\omean_2 - \pmean}]^2 \leq  
    O\br{G^2\br{\frac{d\log{1/\delta}}{n^2\epsilon^2}+ \frac{1}{n}}}\;.
\end{align*}
Finally, the number of rounds is two, and
with probability at least $1-\beta$,
the total per-client communication complexity is 
    $\tilde O\br{\min\br{\frac{n^2\epsilon^2}{\log{1/\delta}},nd} 
    \br{M^2+\frac{1}{n^2\epsilon^2} + \frac{1}{n}}}$.
\end{theorem}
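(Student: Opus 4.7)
My plan is to decompose the statement into four ingredients and handle each in turn: privacy, unbiasedness, MSE, and per-client communication.

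\textbf{Privacy.} I would bound the sensitivities of the two privatized releases per round and then compose. With $B=2G$, the clipping operator $\mathsf{clip}_B$ caps each summand at $\ell_2$-norm $B$, so a replace-one neighboring change moves the aggregated sketch $\nu_j$ by at most $2B$ in $\ell_2$; the Gaussian mechanism with variance $\sigma^2 = 256B^2\log(1/\delta)/\epsilon^2$ then privatizes $\nu_j$. Similarly, the clipped scalar $\mathrm{clip}_B(\|\tilde\nu_j\|)$ has bounded $\ell_1$-sensitivity, so the Laplace noise of scale $\tilde\sigma = 4B/(n\epsilon)$ privatizes $\hat n_j$. Composition over the two rounds, combined with post-processing (for $\bar\mu_j$, the formula defining $C_{j+1}$, and the public-randomness construction of the next-round sketch $S_2$), yields $(\epsilon,\delta)$-DP. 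The fact that $S_2$ is chosen based on $\hat n_1$ is absorbed by adaptive composition, since $S_2$ is public randomness conditional on $\hat n_1$.

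\textbf{Unbiasedness.} The count-mean construction satisfies $\mathbb{E}[U(S(z))] = z$. With $B=2G$ and standard concentration of sparse JL sketches, $\|S_2(z_c)\| \le 2G$ holds simultaneously for every client with high probability, so the clip is inactive in the typical case; the residual clip event is handled by a standard truncation argument. Combined with the mean-zero Gaussian noise and the linearity of $U_2$, this gives $\mathbb{E}[\bar\mu_2] = \mu$.

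\textbf{MSE.} The starting point is the fine-grained sketch error bound \eqref{eqn:norm_adptive_bound} applied with sketch size $PC_2$. The adaptive rule enforces $PC_2 \ge \min(n^2\epsilon^2/\log(1/\delta),\, nd)\,(\hat n_1+\bar\gamma)^2/G^2$, which, once I establish the high-probability lower bound $(\hat n_1+\bar\gamma)^2 \ge c(G^2 M^2 + G^2/n)$ for an absolute constant $c$, makes the sketching term dominated by the DP-plus-statistical error $G^2 d\log(1/\delta)/(n^2\epsilon^2) + G^2/n$. The failure event has probability at most $\beta$ and contributes at most $O(\beta G^2)$ to the expected squared error, which is absorbed by the bound thanks to the hypothesis $\beta < \log(1/\delta)/(n^2\epsilon^2)$.

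\textbf{Communication, and main obstacle.} The per-client cost is dominated by $\tilde O(PC_2)$ in Round~2, so the key step is a high-probability \emph{upper} bound on $(\hat n_1+\bar\gamma)^2$. I would combine (i) JL norm-preservation of the padded count-sketch: with $\tilde P = O(\log(1/\beta))$ independent repetitions and $\tilde C = 2$ buckets each, $\hat n_1$ concentrates—via median-of-means boosting of a Chebyshev bound—around the true mean norm $GM$; (ii) a Bernstein-type bound for the empirical-versus-population gap $\|\hat\mu-\mu\| = O(G\sqrt{\log(1/\beta)/n})$; (iii) a Laplace tail bound $|\mathrm{Laplace}(\tilde\sigma)| = O(\tilde\sigma\log(1/\beta))$. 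Assembling these yields $(\hat n_1+\bar\gamma)^2 = \tilde O(G^2 M^2 + G^2/n + G^2/(n^2\epsilon^2))$, which upon substitution into $C_2$ gives the claimed $\tilde O(\min(n^2\epsilon^2/\log(1/\delta), nd)(M^2 + 1/n + 1/(n^2\epsilon^2)))$ communication bound; the Round-1 cost $\tilde O(\tilde P \tilde C) = \tilde O(\log(1/\beta))$ is negligible. I expect the main obstacle to be item~(i): translating the extreme-thin ($\tilde C = 2$) count-sketch, boosted only $\tilde P = O(\log(1/\beta))$ times, into a multiplicative-plus-additive norm estimate with failure probability $\beta$—tracking how per-bucket $\Theta(1)$ relative error compounds across repetitions, and meshing the resulting bound cleanly with the empirical-population gap from (ii), is what makes Round~1 communication negligible and drives the entire two-round adaptive procedure.
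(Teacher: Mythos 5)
Your plan follows the paper's decomposition (privacy, unbiasedness, MSE, communication) and the overall arc is sound, but two of the pieces differ from what the paper actually does, and one of them flags a non-issue as the main obstacle.

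What you identify as the central technical difficulty — turning a $\tilde C=2$ sketch, repeated $\tilde P=\Theta(\log(1/\beta))$ times, into a high-probability norm estimate — is not where the paper does any work. The second sketch is a count-\emph{mean} sketch, i.e., a padded concatenation of $\tilde P$ count-sketches, and the norm estimate $\hat n_1 = \|\tilde\nu_1\|$ is simply the Euclidean norm of that concatenated vector; there is no per-row estimate and no median. The norm-preservation step is then a single application of the Kane–Nelson sparse-JL guarantee (\cref{lem:kane-nelson-jl}): with $P=\Omega(\tau^{-1}\log(1/\beta))$ and $C=1/\tau$, the padded sketch is a $(1\pm\tau)$-JL with failure probability $\beta$. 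Taking $\tau=1/2$ gives $\tilde C=2$, $\tilde P=\Theta(\log(1/\beta))$, and $\bar n_1^2\in[\tfrac12,\tfrac32]\|\hat\mu_1\|^2$, which is exactly \cref{eqn:norm-estimate-approx}. Your median-of-means-plus-Chebyshev route would likely also work, but it is solving a harder problem than the one posed; if you go that way, be careful that the argument you need is ``the norm of the whole $\tilde P\tilde C$-dimensional sketch concentrates'', not ``each of $\tilde P$ two-bucket estimates is individually $\Theta(1)$-accurate and the median saves you''.

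For unbiasedness, the subtlety the paper actually addresses is not residual clipping but the fact that the Round-2 sketch size $C_2$ is a \emph{random, data-dependent} quantity determined by Round-1 outputs. Conditional on $C_2$ (and on the fresh Round-2 sample, independent of Round 1) the count-mean/count-median-of-means unsketch is an unbiased estimator; the paper packages the tower-property argument as \cref{lem:unbiased-sketch-size-data} and invokes it here. Your sketch of the argument jumps straight to ``clipping is inactive w.h.p.\ plus a truncation argument'', which does not address the data-dependence of $C_2$, and a truncation argument would only give approximate unbiasedness in any case — strict unbiasedness of the clipped estimator does not follow from a high-probability statement. (This is a place where the paper's own treatment of clipping is terse, but the law-of-total-expectation step for the random sketch size is genuinely needed and you should not omit it.) For the MSE you apply the coarse bound \cref{eqn:norm_adptive_bound}, whereas the paper uses the exact per-instance identity of \cref{lem:sketching-instance-tightness} and integrates over the high-probability event $C_2 \gtrsim \min(n^2\epsilon^2/\log(1/\delta),nd)\|\hat\mu_2\|^2/(PG^2)$, using $\beta \le \log(1/\delta)/(n^2\epsilon^2)$ to absorb the tail — same conclusion, slightly cleaner bookkeeping. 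The communication upper bound is as you describe and matches the paper.
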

We provide a proof  in \cref{app:proofs_upper_bounds}. The error rate matches \cref{eqn:mean-estimation-optimal-rate} and cannot be improved in general. 
Compared to the target communication complexity of $O((M^2+1/n)n^2\epsilon^2)$, the above 
communication complexity has an additional term 
$1/(n\epsilon)^2$,
 which stems from the fact that 
the norm can only be accessed privately.
In the interesting regime where the error is strictly less than  the trivial $M^2G^2$, the extra
$1/(n\epsilon)^2+1/n $ is smaller than $M^2$. The resulting communication  complexity is $O(M^2n^2\epsilon^2/ \log{1/\delta})$. This nearly matches the oracle communication complexity that has the knowledge of $M$. In the following, we make this precise.

\paragraph{\cref{alg:dp-mean-estimation-adapt-server_adapt_mean}
is instance-optimal with respect to~${\bm M}$.} 
The next theorem shows that even under the knowledge of $M$, no unbiased procedure under a SecAgg constraint with optimal error can have smaller communication complexity than the above procedure. 
We provide a proof in \cref{app:proofs_lower-bounds_communication}.

\begin{corollary}[{\cite{chen2022fundamental} Theorem 5.3}]
\label{thm:lower-bound-communication-bounded-mean-1}
\sloppy 
Let $K,d,n \in \bbN$, $M,G,\epsilon,\delta \geq 0$,
and $\cP_1(d,G,M):= \bc{\cD \text{ over }  \bbR^d : \norm{z}\leq G \text{ for } z\sim \cD \text{ and }
    \norm{\mu(\cD)}
    \leq MG}$.
For any $K$, any $K$-round unbiased SecAgg-compatible protocol $\cA$
(see \cref{sec:multi-round-protocols} for details)
such that
$\mathbb{E}_{D\sim \cD^n}[\norm{\cA(D) -  \mu}^2] = O\br{\min\br{M^2G^2,\frac{G^2}{n}+\frac{G^2d\log{1/\delta}}{n^2\epsilon^2}}} \ \forall \ \cD \in \cP_1(d,G,M)$, there exists a distribution $\cD \in  \cP_1(d,G,M)$, such that on dataset $D\sim \cD^n$, w.p. 1, the total per-client communication complexity is $\Omega\br{\min\br{d,\frac{M^2n^2\epsilon^2}{\log{1/\delta}},  M^2nd }}$ bits. 
\end{corollary}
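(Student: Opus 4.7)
The claim is attributed to Theorem~5.3 of \citet{chen2022fundamental}, so my strategy is a reduction: embed a rescaled hard sub-family into $\cP_1(d,G,M)$ and invoke their worst-case SecAgg-compatible DP mean-estimation lower bound at the corresponding scale. The first observation is that any distribution $\cD$ supported in the Euclidean ball of radius $MG$ automatically has $\norm{\mu(\cD)} \le MG$, so
\begin{equation*}
\cP(d, MG) \;:=\; \bc{\cD : \norm{z}\le MG \text{ almost surely}} \;\subseteq\; \cP_1(d, G, M).
\end{equation*}
In particular, any protocol $\cA$ that meets the hypothesized error bound uniformly over $\cP_1(d,G,M)$ automatically meets the same bound $O\br{\min\br{M^2G^2,\,G^2/n + G^2 d\log(1/\delta)/(n^2\epsilon^2)}}$ uniformly over this smaller sub-family, and the SecAgg-compatibility and unbiasedness hypotheses are preserved under restriction.

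Next, I would apply the Chen et al.\ lower-bound argument to this sub-family with effective data radius $G' := MG$ and \emph{target} error $\alpha = O\br{G^2 d\log(1/\delta)/(n^2\epsilon^2)}$ read off from the hypothesis. Their proof (a packing on a $\Theta(G')$-sphere combined with a Fano / fingerprinting-style distinguishability argument compatible with secure aggregation) yields a per-client communication lower bound of the form $\Omega\br{\min\br{d,\ (G')^2 n^2 \epsilon^2/(\alpha \log(1/\delta)),\ (G')^2 nd/\alpha}}$. Substituting $(G')^2 = M^2 G^2$ and the above $\alpha$ makes the two non-trivial terms collapse to $M^2 n^2\epsilon^2/\log(1/\delta)$ and $M^2 nd$, matching the claim. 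The outer $\min$ with $M^2G^2$ in the hypothesis is exactly the regime condition $\alpha \lesssim (G')^2$ that keeps the packing non-degenerate; outside that regime the lower bound is vacuous anyway because the constant estimator $\hat\mu \equiv 0$ already achieves error $\norm{\mu}^2 \le M^2G^2$ with zero communication.

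\textbf{Main obstacle.} Chen et al.'s Theorem~5.3 is typically stated at the natural minimax rate rather than as an explicit accuracy-versus-communication trade-off, so the non-routine step is to reopen their proof and verify that both the $n^2\epsilon^2/\log(1/\delta)$ term and the $nd$ term really scale as $(G')^2/\alpha$ (rather than as, e.g., $G'/\sqrt{\alpha}$), so that substituting the loose $\alpha$ from our hypothesis produces an $M^2$ factor in \emph{both} terms rather than only a single power of $M$. Concretely, this comes down to bookkeeping in their privacy/communication accounting at packing resolution $\rho$: the per-user information budget consumed by distinguishing $\rho$-separated means on the $G'$-sphere is quadratic in $\rho/G'$, which converts directly into the $(G')^2/\alpha$ scaling after setting $\rho = \Theta(\sqrt{\alpha})$. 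Once that scaling is checked, the remainder of the argument (the sub-family embedding and the regime condition) is immediate, and unbiasedness and the $K$-round SecAgg structure are inherited from $\cA$ verbatim.
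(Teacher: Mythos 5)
Your high-level strategy—embedding a rescaled hard sub-family of radius $MG$ and invoking Chen et al.'s Theorem 5.3 at the new scale—is the right starting point, and the paper's proof has the same flavor. However, there are several concrete gaps between your outline and a working proof.

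\paragraph{The form of the invoked lower bound is wrong, and the arithmetic does not close.} The paper characterizes Chen et al.'s Theorem 5.3 as a one-round \emph{compression} lower bound for \emph{unbiased} encoders: any unbiased encoding/decoding pair over the ball of radius $R$ with mean-squared error $\beta$ must use $\Omega(dR^2/\beta)$ bits. Your stated form, $\Omega\!\big(\min\!\big(d,\ (G')^2 n^2\epsilon^2/(\alpha\log(1/\delta)),\ (G')^2 nd/\alpha\big)\big)$, already has the $n^2\epsilon^2$ and $nd$ scalings baked in \emph{and} divides by $\alpha$, which double-counts the privacy/statistical terms. Concretely, with $(G')^2 = M^2G^2$ and $\alpha = G^2 d\log(1/\delta)/(n^2\epsilon^2)$, the middle term evaluates to $M^2 n^4\epsilon^4/(d\log^2(1/\delta))$, not the claimed $M^2 n^2\epsilon^2/\log(1/\delta)$; the same problem afflicts the third term. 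By contrast, the paper plugs $R = MG$ and $\beta = G^2\min\!\big(M^2,\ 1/n + d\log(1/\delta)/(n^2\epsilon^2)\big)$ directly into $\Omega(dR^2/\beta)$, which produces exactly the three regimes $d$, $M^2 n^2\epsilon^2/\log(1/\delta)$, and $M^2 nd$.

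\paragraph{The mechanism is mischaracterized.} You describe Theorem 5.3 as ``a packing on a $\Theta(G')$-sphere combined with a Fano / fingerprinting-style distinguishability argument compatible with secure aggregation.'' The paper's reduction instead treats it as a pure unbiased-coding bound with no privacy-dependent structure inside it: the $\log(1/\delta)$, $n^2\epsilon^2$, and $n$ factors enter \emph{only} through the choice of target MSE $\beta$ (which is the optimal DP error established in \cref{thm:lower-bound-dp-mean-estimation-bounded-mean}). Conflating the coding bound with a privacy-distinguishability bound is precisely why your formula has a duplicated $n^2\epsilon^2$ scaling.

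\paragraph{Missing multi-round and SecAgg reductions.} The corollary allows $K$-round protocols under SecAgg, while the invoked compression bound is one-round and single-client. The paper handles these explicitly: (i) any $K$-round encoder is simulated by a single-round encoder that runs the rounds locally and concatenates all messages, so the total bits $\sum_t b_t$ obey the one-round bound; and (ii) the hard distribution is chosen to be a \emph{point mass}, so that every client holds the same $z$, SecAgg returns $n\cdot \cE(z)$, and the server observes no more than a single client's encoding---reducing the SecAgg setting to plain compression. Your ``inherited verbatim'' sentence glosses over both steps. The point-mass construction is also what makes the ``w.p.~1'' claim on $D\sim\cD^n$ immediate; a generic distribution in your sub-family $\cP(d,MG)$ would not give that for free.

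\paragraph{Summary.} Your embedding and rescaling intuition is on the right track (and your closing remark that the bound scales as $(G')^2/\alpha$ is essentially correct), but the specific formula you write is inconsistent with your own substitution, the argument type is misidentified, and the multi-round and SecAgg reduction steps are missing. Rewriting around the clean statement ``unbiased one-round encoding over the $MG$-ball with MSE $\beta$ needs $\Omega(dM^2G^2/\beta)$ bits,'' adding the simulation and point-mass reductions, and then plugging in $\beta$ from \cref{thm:lower-bound-dp-mean-estimation-bounded-mean}, would recover the paper's argument.
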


\paragraph{Interaction is necessary.}
A key feature of our algorithm is interactivity: the norm of the mean estimated in the prior round is used to determine the sketch size in the next round. We show that at least two rounds of communication are necessary for any algorithm with instance-optimal communication complexity. 
This proves a fundamental gap between interactive and non-interactive approaches in solving FME. We provide a proof in \cref{app:proofs_lower-bounds_communication}.

\begin{theorem}
\label{thm:lower-bound-communication-bounded-mean-2}
\sloppy 
Let $K,d,n \in \bbN, n\geq 2$, $G,\epsilon,\delta >0$, and $\cP_2(d,G):= \bc{\cD \text{ over }  \bbR^d : \norm{z}\leq G \text{ for } z\sim \cD}$.
Let $\cA$ be a $K$-round unbiased SecAgg-compatible protocol with $\underset{D\sim \cD^n}{\mathbb{E}}[\norm{\cA(D) -  \mu(\cD)}^2] =O\Big(\min\Big(\norm{\mu(\cD)}^2,\frac{G^2}{n}+\frac{G^2d\log{1/\delta}}{n^2\epsilon^2}\Big)\Big)$ and total per-client communication complexity of $O\Big(\min\Big(d,\frac{\norm{\mu(\cD)}^2n^2\epsilon^2}{\log{1/\delta}},  \norm{\mu(\cD)}^2 nd \Big)\Big)$ bits with probability $>\frac{1}{n}$, point-wise $\forall \ \cD \in \cP_2(d,G)$. Then $K\geq 2$.
\end{theorem}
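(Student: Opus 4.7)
The plan is to argue by contradiction: a single-round ($K=1$) SecAgg-compatible protocol is too inflexible to satisfy the claimed instance-optimal communication bound uniformly over $\cP_2(d,G)$. The key structural observation---and the main technical burden---is that in a $K=1$ SecAgg protocol, the per-client message length $T(R)$ is fully determined by the public protocol randomness $R$, independent of the client's data or of the underlying $\cD$. This uses the formal SecAgg definition of \cref{sec:multi-round-protocols}, which forces all participating clients to send messages in a common fixed-dimensional space so that coordinate-wise aggregation is well-defined, together with the fact that with $K=1$ the server issues no adaptive broadcast that could tune that dimension based on observed data. Consequently $T(R)$, viewed as a random variable, has the \emph{same} distribution regardless of which $\cD \in \cP_2(d,G)$ generates the data.

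Given this invariance, I would invoke \cref{thm:lower-bound-communication-bounded-mean-1} at $M=1$. Its hypothesis is met automatically: restricting $\cA$'s stipulated error guarantee on $\cP_2(d,G)$ to the subfamily $\cP_1(d,G,1)$ yields exactly the bound required there (via $\norm{\mu}^2 \le G^2$). The conclusion produces some $\cD^* \in \cP_1(d,G,1)$ on which, with probability one, the per-client communication exceeds $c_0\cdot \min(d,\, n^2\epsilon^2/\log(1/\delta),\, nd)$ for an absolute constant $c_0>0$. Combining this with the distribution-independence of $T(R)$ established in the previous paragraph, the same almost-sure lower bound on $T(R)$ must hold under \emph{every} $\cD \in \cP_2(d,G)$, not just under $\cD^*$.

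To close, I would pick $\cD_0 \in \cP_2(d,G)$ whose mean norm $\norm{\mu(\cD_0)}$ is small enough that the upper bound $O\big(\min(d,\,\norm{\mu(\cD_0)}^2 n^2\epsilon^2/(G^2\log(1/\delta)),\,\norm{\mu(\cD_0)}^2 nd/G^2)\big)$ assumed in the theorem falls strictly below $c_0 \cdot \min(d,\, n^2\epsilon^2/\log(1/\delta),\, nd)$; since the former scales with $\norm{\mu(\cD_0)}^2$ while the latter is independent of $\mu$, any sufficiently small constant multiple of $G$ works. The theorem's hypothesis then requires the event that $T(R)$ is at most this smaller bound to have probability strictly greater than $1/n>0$, while the almost-sure lower bound from the previous step forces the same event to have probability zero---a contradiction. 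Hence $K=1$ is infeasible, so $K\ge 2$. The chief obstacle is the first step: formally pinning down that a $K=1$ SecAgg-compatible protocol indeed forces $T$ to be data-independent, which ultimately flows from the SecAgg aggregation structure formalized in the multi-round-protocols appendix rather than from any explicit assumption stated with the theorem.
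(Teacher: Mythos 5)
Your proof is correct, and it reaches the conclusion by a slightly different route than the paper's. Your argument rests on the structural observation that in a $K=1$ SecAgg-compatible protocol the per-client message length $b_1 = \log|\cO_1|$ is fixed by the (possibly random) choice of encoding scheme $\cE_1$ and is therefore independent of the underlying data and of $\cD$. From there you invoke \cref{thm:lower-bound-communication-bounded-mean-1} at $M=1$ as a black box to get an almost-sure lower bound on the communication on some hard $\cD^*$, transfer this bound to every $\cD$ via the data-independence of the message length, and then pick any $\cD_0$ with small enough $\|\mu(\cD_0)\|$ to produce an immediate contradiction with the hypothesized upper bound. The paper instead opens up the corollary's proof and works directly with the hard point $z$ certifying the compression lower bound: it defines the specific mixture $\tilde\cD$ on $\{0,z\}$ with $\mathbb{P}(z)=\ln n/n$ (so $\|\mu(\tilde\cD)\|$ is small while $z$ still appears among the $n$ clients with probability $\geq 1-1/n$), and argues that the client holding $z$, oblivious to the others' data, must still encode $z$ in $\geq b$ bits, driving the contradiction. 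Your version is cleaner when one takes the formal SecAgg model of \cref{sec:multi-round-protocols} at face value, since there the message length is literally the same for all clients; the paper's probabilistic construction is more robust in the sense that it would still apply to a relaxed model where clients could send variable-length messages depending on their own data. Both are valid proofs of $K\geq 2$ under the stated model. One small caveat worth making explicit in your write-up is how the implicit constants in the $O(\cdot)$ upper bound and the $\Omega(\cdot)$ lower bound interact when you choose $\|\mu(\cD_0)\|$: you need $\|\mu(\cD_0)\|/G$ small enough relative to the ratio of those constants, which is always possible but should be stated, precisely because both quantities appear inside a $\min$ with $d$.
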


\subsection{Adapting to the Instance's Tail-norm} 
\label{sec:adaptivity_tail_norm} 

The key idea in norm-adaptive compression is interactivity. On top of interactivity, another  key idea in tail-norm-adaptive compression is count median-of-means sketching.

\ifarxiv
\paragraph{Count median-of-means sketching.}
\else
{\bf Count median-of-means sketching.}
\fi
Our new sketching technique takes $R \in \bbN$ independent 
count-mean sketches of \citet{chen2022fundamental} (see \cref{sec:prelims}).
Let $S^{(i)}: \bbR^d \rightarrow \bbR^{PC}$ denote the $i$-th count-mean sketch. Our  sketching operation, $S: \bbR^{d}\rightarrow\bbR^{R\times PC}$,  is defined as the concatenation: 
$   S(z) = Sz = \begin{bmatrix}
    (S^{(1)}z)^\top 
    (S^{(2)}z)^\top
    \ldots
    (S^{(\rows)}z)^\top
    \end{bmatrix}^\top$.  
Our median-of-means unsketching takes the median over $R$ unsketched estimates:
$
      U(S(z))_j = \text{Median}(\{({S^{(i)}}^\top S^{(i)} z)_j\}_{i=1}^R)
$. 
This \textit{median trick}  boosts  the confidence
to get a high probability of success (as opposed to a guarantee in expectation) and to get tail-norm (as opposed to norm) based bounds
\cite{charikar2002finding}.
However, in non-private settings, it suffices to take multiple copies of a count-sketch, which in our notation, corresponds to setting $P=1$. On the other hand, we set $P>1$ to get bounded sensitivity which is useful for DP, yielding our (novel) count-median-of-means sketching technique. We remark that in some places, we augment the unsketching operation with ``Top$_k$", which returns top-$k$ coordinates, of a vector, in magnitude.

\begin{algorithm}[t]
\caption{ Adapt Tail FME
}
\label{alg:dp-mean-estimation-adapt-server_adapt_tail}
\begin{algorithmic}[1]

\REQUIRE Sketch sizes $
\rows,\cols_1, \pads,
\tilde \rows, \tilde \cols, \tilde \pads$, 
noise 
$\sigma$,
$\tilde \sigma$,
$\bc{{k_j}}_j$, $\bc{\overline \gamma_j}_j$,  rounds $K$, {client data $\bc{z_c}_c$}
\FOR{$j=1$ to $K$}
\STATE Select $n$ random clients
{$\{z_c^{(j)}\}_{c=1}^n$}
and 
broadcast sketching operators $S_j$ and $\tilde S_j$ of sizes $(\rows,\cols_j, \pads)$ and  $(\tilde \rows,\tilde \cols, \tilde \pads)$ respectively..
\STATE \textbf{SecAgg}: 
{\small $\nu_j=\! \text{SecAgg}(\{Q^{(c)}_{j}\}_{c=1}^n)+\cN\left(0,\frac{\sigma^2}{n} \bbI_{PC}\right)$}, \\ $\tilde \nu_j \!= \!
             \text{SecAgg}(\{\tilde Q^{(c)}_{j}\}_{c=1}^n)$
 {, where  $Q^{(c)}_{j} \!\!\gets\!\! [\mathsf{clip}_{B}(S^{(i)}_{j}(z^{(j)}_c))]_{i=1}^R$, 
$\tilde Q^{(c)}_j\!\! \gets [\mathsf{clip}_{B}(\tilde S^{(i)}_{j}(z^{(j)}_{c}))]_{i=1}^{\tilde R}$ } 
\STATE   \textbf{Unsketch DP mean}:
Compute $\omean_j = \text{Top}_{{k_j}}(U_{j}(\nu_j))$
        \STATE \textbf{Estimate error}:
        $\bar e_j = \norm{\tilde S_j(\omean_j) - \tilde \nu_j}$
         \STATE $ \tilde \gamma_j = \bar \gamma_j + \text{Laplace}(\tilde \sigma)$ and $\tilde e_j = \bar e_j + \text{Laplace}(2\tilde \sigma)$
        \STATE  \textbf{If} $\tilde e_j \leq \tilde \gamma_j$, set  $ \omean = \omean_j$, \textbf{break}
        \STATE $C_{j+1} = 2C_j$
    \ENDFOR
    
\end{algorithmic}
\end{algorithm}

\paragraph{Approximately-sparse setting.}
We expect improved guarantees when $\mu$ is \textit{approximately sparse}. This is  captured by the tail-norm.
Given $z \in \bbR^d, \norm{z}\leq G$, 
and $k \in [d]$, the \textit{normalized} tail-norm is $
\norm{z_{\ntail{k}}}_2
    := \frac{1}{G}\min_{\tilde z: \norm{\tilde z}_0\leq k
    }\norm{z-\tilde z}_2\;
$. 
This measures the  error in the best $k$-sparse approximation. 
We show that the mean estimate via count median-of-means sketch is still {\em unbiased} (\cref{lem:unbiased-count-median-of-means}), and 
for $C=\Omega\br{Pk}$,
has error  bounded as,
{\small
\begin{align}
    \norm{\bar \mu - \mu}_2^2
    \!=\!O\Big(\frac{dG^2}{PC}\Big(\norm{\mu_{\ntail{k}}}^2+\frac{1}{n}\Big) +   \frac{G^2}{n} + \frac{G^2d}{n^2\epsilon^2}\Big)
    \label{eqn:tail_adaptive_bound}
\end{align}
}
If {\em all} tail-norms are known, then we can set the sketch size
\ifarxiv
\sloppy
$PC=\min_{k}\max\br{k,\min\br{nd,n^2\epsilon^2}\br{\norm{\mu_{\ntail{k}}}^2+\frac{1}{n}}}$,
\else
$PC\!\!\!=\!\!\!\min_{k}\max\br{k,\min\br{nd,n^2\epsilon^2}\br{\norm{\mu_{\ntail{k}}}^2\!+\!\frac{1}{n}}}$,
\fi
and achieve the error rate in \cref{eqn:mean-estimation-optimal-rate}.
When $k=0$, this recovers the previous  communication complexity of  count-mean sketch in \cref{sec:adaptivity_norm}. For optimal choice of $k$, this communication complexity can be smaller. 
We aim to achieve it adaptively, without the (unreasonable) assumption of knowledge of all the tail-norms. 

\subsubsection{Instance-optimal FME for tail-norm}
Previously, we estimated the  norm of the mean in one round. Now, we need multiple tail-norms. 
 We propose a doubling-trick based scheme. Starting from an optimistic guess of the sketch size, we progressively double
it, until an appropriate stopping criterion on the error is
met.
The main challenge is in estimating the error of the first sketch $S$ (for the current choice of sketch size), as naively this would require the uncompressed true vector $z$ to be sent to the server.
To this end, we show how to use the second sketch $\tilde S$ so as to obtain a reasonable estimate of this error, while using significantly less communication than transmitting the original $z$. 

We re-sketch the unsketched estimate, $U(S(z))$, of $z$ with $\tilde S$. 
The reason is that 
this can now be compared with the second sketch $\tilde S(z)$ to get an estimate of the error: 
$
\norm{\tilde S(U(S(z))) - \tilde S(z)}^2 = 
    \norm {\tilde S (U(S(z)) -z)}^2 \approx \norm{U(S(z))) - z}^2
$, where we used the linearity and norm-preservation property of the count-sketch.

\cref{alg:dp-mean-estimation-adapt-server_adapt_tail} presents the pseudo-code for \textbf{Adapt Tail}.
The client protocol
remains the same; each participating client sends {\em two} independent (count median-of-mean) sketches to the server.
The server, starting from an optimistic choice of initial sketch size $PC_1$, obtains the aggregated sketches from the clients via SecAgg and adds noise to the first sketch for DP (line 4).
It then unsketches the first sketch to get an estimate of mean, $\omean_j$ (line 4)---we note that $\text{Top}_{k_j}$ is not applied (i.e., $k_j=d$) for the upcoming result but will be useful later. We then sketch $\omean_j$ with the second sketch $\tilde S$ and compute the norm of difference with the aggregated second sketch from clients (line 5)---this gives an estimate of the error of $\omean_j$. Finally, we want to stop if the error is sufficiently small, dictated by the threshold $\overline \gamma_j$, which will be set as target error in \cref{eqn:mean-estimation-optimal-rate}. To preserve privacy at this step, we use the well-known \textbf{AboveThreshold} algorithm \cite{dwork2014algorithmic}, which adds noise to both the error and threshold (line 6) and stops if the noisy error is smaller than the noisy threshold (line 7). If this criterion is not met, then we double the first sketch size (line 8) and repeat.
 
\subsubsection{Theoretical analysis }
Given a non-decreasing 
function $g:[d]\rightarrow \bbR$,  we define the generalized sparsity as 
    $k_{\text{tail}}(g(k);\mu) := \min_{k \in [d]}\bc{k:\norm{\mu_{\text{\ntail{k}}}}^2\leq g(k)}$.
In the special case when $g(k)\equiv 0$, this recovers the sparsity of $\mu$.

We now present our result for the proposed Adapt Tail approach.
\begin{theorem}
\label{thm:dp-mean-estimation-upper-bound-2}
For any choice of failure probability $0\!\!<\!\!\beta\!\!<\!\!1$,
\cref{alg:dp-mean-estimation-adapt-server_adapt_tail}
with
$B \!\!\!=\!\!\! 2G,
{k_j}\!=\!d,
K\!\!\!\!=\!\!\!\!\lfloor \log{d} \rfloor,
\tilde \sigma\!=\!\frac{4B}{n\epsilon}, 
\sigma^2\! =\!\frac{256\rows\br{\log{d}}^2B^2\log{1/\delta}}{\epsilon^2},
\rows\!\!\!\!=\!\!\!\!  \left\lceil 2\log{8d\log{d}/\beta}\right\rceil,
\cols_1\!\!\!=\!\!\!8\pads,
 \pads\!\!\!\!\!\!=\!\!\!\!\!\!\lceil \Theta(2\log{8\rows\log{d}/\beta}) \rceil,
  \tilde \cols\!\!\!\!\!=\!\!\!\!\!2\tilde \pads ,
 \tilde \rows\!\!\!\!\!\!=\!\!\!\!\!\!\!1, 
 \tilde \pads \!\!=\!\! \lceil \Theta\br{2\log{4d\log{d}/\beta}}\rceil,
 \overline \gamma_j =  \overline \gamma =15\max\br{\frac{G\sqrt{\log{8\log{d}/\beta}}}{\sqrt{n}} ,\sqrt{d}\frac{\sigma}{n}} + \tilde \alpha
  \qquad\text{and }
 \tilde \alpha = \frac{32B\br{\log{\lfloor \log{d} \rfloor} + \log{8/\beta}}}{n\epsilon},
$
satisfies $(\epsilon,\delta)$-DP, and outputs an unbiased estimate of the mean.
With probability at least $1-\beta$, the error is bounded as,
\begin{align*}
    &\norm{\omean - \pmean}^2 
    = \tilde O\br{\frac{G^2}{n} + \frac{G^2 d\log{1/\delta}}{n^2\epsilon^2}},
\end{align*}
 the total per-client communication complexity is 
 $\tilde O\br{k_{\text{tail}}(g(k));\mu)}$,
 where $g(k) = \max\br{\frac{1}{nd},\frac{\log{1/\delta}}{n^2\epsilon^2}} k-\frac{4\log{8\log{d}/\beta}}{n}$,
and number of rounds is $\lfloor \log{d} \rfloor$.
\end{theorem}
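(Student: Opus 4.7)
The plan is to combine four ingredients: privacy composition across the $K$ adaptive rounds, the tail-norm error bound from \cref{eqn:tail_adaptive_bound} for count-median-of-means sketching, the approximate norm-preservation of the second sketch $\tilde S_j$ (which validates using $\bar e_j$ as a proxy for $\|\omean_j-\pmean\|$), and an AboveThreshold-style analysis of the stopping rule. Unbiasedness is inherited from \cref{lem:unbiased-count-median-of-means} because the choice $k_j=d$ deactivates the $\text{Top}_{k_j}$ truncation in line 4, so each $\omean_j$ is an unbiased estimate of $\pmean$.

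For privacy, I would first note that each client's contribution $Q^{(c)}_j=[\mathsf{clip}_B(S^{(i)}_j(z^{(j)}_c))]_{i=1}^R$ has $\ell_2$-sensitivity at most $B\sqrt{R}$, so that the per-round Gaussian mechanism on $\nu_j$ with variance $\sigma^2/n$ per coordinate satisfies $(\epsilon_0,\delta_0)$-DP where $\epsilon_0$ scales as $1/\log d$ given the chosen $\sigma^2 \propto R(\log d)^2 B^2 \log(1/\delta)/\epsilon^2$; advanced composition (or RDP) across the $K=\lfloor \log d \rfloor$ rounds then yields an $(\epsilon/2,\delta/2)$ guarantee for the stream of $\nu_j$'s. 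The $\tilde\nu_j$ sketches feed only into the AboveThreshold queries $(\tilde e_j,\tilde\gamma_j)$, where the statistic $\bar e_j$ has sensitivity $O(B/n)$ after clipping, so Laplace noise $\tilde\sigma=4B/(n\epsilon)$ together with the $\log\log d$ slack absorbed in $\tilde\alpha$ yields an additional $(\epsilon/2,\delta/2)$-DP cost via the standard AboveThreshold analysis. Basic composition then gives the claimed $(\epsilon,\delta)$-DP.

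For the error bound at the stopped round, I would combine two facts. First, \cref{eqn:tail_adaptive_bound} implies that once $PC_j = \Omega(k_{\text{tail}}(g(k);\pmean))$, the true error $\|\omean_j-\pmean\|^2$ is already at most the target $\tilde O(G^2/n+G^2 d\log(1/\delta)/(n^2\epsilon^2))$. Second, by linearity of $\tilde S_j$ and the norm-preservation of a count-mean sketch with $\tilde P=\Theta(\log(d\log d/\beta))$ rows, one has $\bar e_j^2 = \|\tilde S_j(\omean_j-\pmean)\|^2 = (1\pm o(1))\|\omean_j-\pmean\|^2$ plus a controllable residual from clipping and SecAgg noise; the noisy versions $\tilde e_j$ and $\tilde\gamma_j$ then deviate from $\bar e_j$ and $\bar\gamma$ by $O(\tilde\alpha)$ with probability $\geq 1-\beta$ after a union bound over the $\lfloor \log d\rfloor$ rounds. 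Feeding these into the AboveThreshold guarantee yields both (i) correctness: whenever the algorithm stops, $\bar e_{j^\star}\leq \bar\gamma+O(\tilde\alpha)$ and hence $\|\omean-\pmean\|$ meets the target, and (ii) non-prematurity: the algorithm does not stop before $PC_j$ crosses the generalized-sparsity threshold.

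For communication and round count, the doubling rule $C_{j+1}=2C_j$ guarantees termination within $\lfloor \log d\rfloor$ rounds, and the final sketch size is at most twice the smallest $PC$ for which the stopping criterion fires; unpacking $g(k)=\max(1/(nd),\log(1/\delta)/(n^2\epsilon^2))k - 4\log(8\log d/\beta)/n$ shows that this smallest sketch size coincides up to log factors with $k_{\text{tail}}(g(k);\pmean)$, because $g(k)$ is exactly the per-coordinate sketching error in \cref{eqn:tail_adaptive_bound} after subtracting the $1/n$ statistical floor. Summing the geometric round sizes only loses a constant, and the second-sketch communication is $\tilde O(1)$ since $\tilde R,\tilde P,\tilde C$ are all $\tilde O(1)$, giving the claimed bound. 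The main obstacle I expect is the two-sided control of the stopping rule: one must simultaneously ensure that $\tilde e_j$ is not tripped prematurely at a sketch size so small that $\omean_j$ is still far from $\pmean$, and that it is tripped by the time $PC_j$ crosses the generalized-sparsity threshold, all while keeping the multiplicative JL distortion in $\bar e_j$ small enough to be absorbed into the additive slack $\tilde\alpha$. Calibrating $\rows,\pads,\tilde\pads,\bar\gamma$, and $\tilde\alpha$ so that both directions hold with the same $1-\beta$ probability is precisely what dictates the stated choice $\rows=\Theta(\log(d\log d/\beta))$ and the form of $\tilde\alpha$.
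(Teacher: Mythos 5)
Your roadmap --- privacy via composition plus an AboveThreshold analysis, error decomposition into statistical and compression pieces, a stopping-time argument tied to the generalized sparsity, and a geometric sum for communication --- matches the paper's proof at the structural level. However, several specific steps are inaccurate as written and would need to be fixed before the argument goes through.

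First, unbiasedness: invoking \cref{lem:unbiased-count-median-of-means} because $k_j=d$ deactivates $\text{Top}_{k_j}$ is not sufficient. The output is $\omean_{\hat j}$ at a data-dependent stopping time $\hat j$ (and more generally the sketch dimensions are determined adaptively), which is exactly why the paper passes through \cref{lem:unbiased-sketch-size-data} (a law-of-total-expectation argument over the data-dependent parameters). Your plan does not engage with this. Second, you misidentify what $\bar e_j$ estimates: since $\tilde\nu_j$ aggregates the second sketch of the \emph{cohort's} data, $\bar e_j = \norm{\tilde S_j(\omean_j - \hat\mu_j)}$ estimates $\norm{\omean_j - \hat\mu_j}$ (the error relative to the round-$j$ empirical mean $\hat\mu_j$), not $\norm{\omean_j-\pmean}$ as you write; the statistical term $\norm{\hat\mu_j-\pmean}$ must be bounded separately via vector concentration. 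Third, the stopping-time argument needs more than a citation of \cref{eqn:tail_adaptive_bound}: that display bounds the final output MSE, whereas what is needed here is a \emph{high-probability} bound on $\hat e_j = \norm{\omean_j - \hat\mu_j}$ in terms of the \emph{empirical} tail norm $\norm{(\hat\mu_j)_{\text{tail}(k)}}$. The paper derives this through the per-coordinate $\ell_\infty$ heavy-hitter guarantee for the noisy count-median-of-means sketch (\cref{lem:cs-noise-hh}, building on \cref{lem:countsketch-jl-hh} and \cref{lem:top-k-lemma}), converts it to $\ell_2$, and then relates the empirical tail norm to $\norm{\pmean_{\text{tail}(k)}}$ via concentration; none of these steps appear in your plan. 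Finally, your stated direction ``(ii) non-prematurity: the algorithm does not stop before $PC_j$ crosses the generalized-sparsity threshold'' is wrong --- early stopping is allowed and only helps communication. What AboveThreshold gives is that the algorithm \emph{will} halt by the round where $\bar e_j$ certifiably falls below $\overline\gamma - \tilde\alpha$, which happens once $PC_j$ exceeds roughly $k_{\text{tail}}(g(k);\mu)$. Your last paragraph states this correctly, making the proposal internally inconsistent on this point.
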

We provide the proof of \cref{thm:dp-mean-estimation-upper-bound-2} in \cref{app:proofs_upper_bounds}.
First, we argue that the communication complexity is of the same order as of this method with prior knowledge of all tail norms---plugging in $g(k)$ in the definition of $k_{\text{tail}}(g(k);\mu)$ gives us that $k_{\text{tail}}(g(k);\mu)$ is the smallest $k$ such that $k\geq \min\br{nd,n^2\epsilon^2}\br{\norm{\mu_{\ntail{k}}}^2 + \frac{1}{n}}$. This is what we obtained before, which further is no larger than the result on adapting to norm of mean (\cref{thm:dp-mean-estimation-upper-bound-1})---see discussion after \cref{eqn:tail_adaptive_bound}.
However, this algorithm requires  $\lfloor \log{d} \rfloor$ rounds of interaction, as opposed to two in \cref{thm:dp-mean-estimation-upper-bound-1}. It therefore depends on the use-case which of the two, total communication or number of rounds, is more important.

The error rate in \cref{thm:dp-mean-estimation-upper-bound-2} matches that in \cref{eqn:mean-estimation-optimal-rate}, known to be optimal in the worst case.
However, it may be possible to do better 
for specific values of tail-norms.
Under the setting where the algorithm designer is given $k<d$ and $\gamma$ and is promised that $\norm{\mu_{\ntail{k}}}\leq \gamma$, we give a lower bound of $\Omega\br{G^2\min \bc{1, \gamma^2 + \frac{k}{n^2\epsilon^2} + \frac{1}{n}, \frac{d}{n^2\epsilon^2} + \frac{1}{n}}}$ on error of (central) DP mean estimation (see \cref{thm:dp-mean-estimation-lower-bound-2}).
The second term here is new, and
we show that a simple tweak to our procedure underlying \cref{thm:dp-mean-estimation-upper-bound-2}---adding a Top$_k$ operation, with exponentially increasing $k$, to the unsketched estimate---suffices to achieve this error (see \cref{thm:dp-mean-estimation-upper-bound-3}).
The resulting procedure is \textbf{biased} and has a communication complexity of $\tilde O(k_{\text{tail}}(\gamma^2;\mu))$, which we show to the optimal under SecAgg constraint among all, potentially biased, multi-round procedures (see \cref{thm:lower-bound-communication-bounded-mean-3}). Due to space constraints, we defer formal descriptions of these results to \cref{app:missing-tail-norm}.
\section{Federated Optimization/Learning}

\label{sec:fme-to-fo-norm}
\begin{algorithm}[h]
\caption{Adapt Norm FL
}

\begin{algorithmic}[1]

\REQUIRE Sketch sizes $\ssize_1 = RPC_1$ and $\sssize = \tilde R \tilde P \tilde C$
noise multiplier $\sigma$, model dimension $d$,
adaptation method $\mathrm{adapt}$, a constant $c_0$,
clipping threshold $B$,
rounds $K$.
\FOR{$j=1$ to $K$}
\STATE Sample $n$ clients;
broadcast current model and sketching operators $S_j,\tilde S_j$ of sizes 
$\ssize_j = RPC_j$ and $\sssize$.
\STATE \textbf{SecAgg}:
{\small $\nu_j=\! \text{SecAgg}(\{Q^{(c)}_{j}\}_{c=1}^n)+\cN\left(0,\frac{\sigma^2B^2}{0.9n} \bbI_{PC}\right)$}, \\ $\tilde \nu_j \!= \!
             \text{SecAgg}(\{\tilde Q^{(c)}_{j}\}_{c=1}^n)$
              {, where  $Q^{(c)}_{j} \!\!\gets\!\! [\mathsf{clip}_{B}(S^{(i)}_{j}(z^{(j)}_c))]_{i=1}^R$, 
$\tilde Q^{(c)}_j\!\! \gets [\mathsf{clip}_{B}(\tilde S^{(i)}_{j}(z^{(j)}_{c}))]_{i=1}^{\tilde R}$ } 
\STATE   \textbf{Unsketch DP mean}:$\omean_j = 
U_{j}(\nu_j)$
        \STATE \textbf{Second sketch}:
        $\bar n_j = \norm{\tilde v_j}$
        \STATE $\overline \gamma^2 = 20 \sigma^2$, $\tilde \sigma = \sigma/\sqrt{0.1}$
\STATE  $\hat n_j = \text{clip}_B(\bar n_j) + \cN(0,\tilde \sigma^2B^2) $
\STATE $C_{j+1} =
\frac{c_0\br{\hat n_j+\overline \gamma}^2}{B^2P\sigma^2}$
    \ENDFOR
\end{algorithmic}

\label{alg:dp-mean-estimation-adapt-server_adapt_experiments}
\end{algorithm}
\begin{algorithm}[t]
\caption{Two Stage FL
}
\begin{algorithmic}[1]
\REQUIRE Sketch sizes $\ssize_1 = RPC_1$ and $\sssize = \tilde R \tilde P \tilde C$
noise multiplier $\sigma$, model dimension $d$,
adaptation method $\mathrm{adapt}$, a constant $c_0$,
clipping threshold $B$,
rounds $K$.
\FOR{$j=1$ to $W$}
\STATE \textbf{SecAgg}: {\small $\nu_j=\! \text{SecAgg}(\{Q^{(c)}_{j}\}_{c=1}^n)+\cN\left(0,\frac{\sigma^2B^2}{0.9n} \bbI_{PC}\right)$},
{ where  $Q^{(c)}_{j} \!\!\gets\!\! [\mathsf{clip}_{B}(S^{(i)}_{j}(z^{(j)}_c))]_{i=1}^R$} 
\STATE $\bar n_j = \norm{\text{SecAgg}(\{Q^{(c)}_{j}\}_{c=1}^n)}$
\STATE   \textbf{Unsketch DP mean}:$\omean_j = 
U_{j}(\nu_j)$
\STATE  $\hat n_j = \text{clip}_B(\bar n_j) + \cN(0, \sigma^2B^2/0.1) $
    \ENDFOR
\STATE $\hat n = \frac{1}{W}\sum_{j=1}^W \hat n_j, C =
\frac{c_0\br{\hat n+ \sqrt{20}\sigma}^2}{B^2P\sigma^2}$
\FOR{$j=W+1$ to $K$}
\STATE Select $n$ random clients
and 
broadcast current model and sketching operator $S_j$
of size
$\ssize = RPC$.
\STATE Receive $\{Q^{(c)}_{j}\}_{c=1}^n$
from clients.
\STATE {\small $\nu_j=\! \text{SecAgg}(\{Q^{(c)}_{j}\}_{c=1}^n)+\cN\left(0,\frac{\sigma^2B^2}{n} \bbI_{PC}\right)$}, 
\STATE   \textbf{Unsketch DP mean}:$\omean_j = 
U_{j}(\nu_j)$
    \ENDFOR
\end{algorithmic}
\label{alg:two-stage}
\end{algorithm}
In the previous section, we showed how to adapt the compression rate to the problem instance for FME. In this section, we will use our proposed FME procedure for auto-tuning the compression rate in Federated Optimization (FO).
We use the ubiquitous FedAvg algorithm~\cite{mcmahan2017communication}, which is an iterative procedure, computing
an average of the client updates at every round. It is this averaging step that we replace with our corresponding
 FME procedures
(\cref{alg:dp-mean-estimation-adapt-server_adapt_mean} or \cref{alg:dp-mean-estimation-adapt-server_adapt_tail})
We remind the reader that when moving from FME to FO, that the client data $z_c$ is now the (difference of) model(s) updated via local training at the client.

However, recall that our proposed procedures for FME require multiple rounds of communication. While this can be achieved in FO by \textit{freezing} the model for the interactive FME rounds, it is undesirable in a real-world application of FL as it would increase the total wall-clock time of the training process.
Thus, we 
propose an additional heuristic for our
Adapt Norm and Adapt Tail algorithms when used in FO/FL: to use a \textit{stale estimate} of the privately estimated statistic using the second sketch.
We discuss this for the Adapt Norm procedure and defer details and challenges surrounding Adapt Tail to \cref{app:adapt-tail}.

{
\ifarxiv
\paragraph{\textbf{Two Stage method (\cref{alg:two-stage}).}}
\else
\textbf{Two Stage method (\cref{alg:two-stage}).}
\fi
First, we describe a simple approach based on the observation from the Genie that a single fixed compression rate works well. We assume that the norm of the updates remains relatively stable throughout training. To estimate it, we run $W$ warm-up rounds as the first stage. Then, using this estimate, we compute a fixed compression rate, by balancing the errors incurred due to compression and privacy, akin to Adapt Norm in FME, which we then use for the second stage. Because the first stage is run without compression, it is important that we can minimize $W$, which may be possible through prior knowledge of the statistic, e.g., proxy data distributions or other hyperparameter tuning runs.}

\ifarxiv
\paragraph{Adapt Norm (\cref{alg:dp-mean-estimation-adapt-server_adapt_experiments}).}
\else
\textbf{Adapt Norm (\cref{alg:dp-mean-estimation-adapt-server_adapt_experiments}).}
\fi
Our main algorithm
at every round uses two sketches: one to estimate the mean for FL and the other to compute an estimate of its norm, which is used to set the (first) sketch size for the next round. 
This is akin to the corresponding FME \cref{alg:dp-mean-estimation-adapt-server_adapt_mean} with the exception that it uses stale estimate of norm, from the prior round, to set the sketch size in the current round---in our experiments, we find that this heuristic still provides accurate estimates of the norm at the current round.
Further, we split the privacy budget between the mean and norm estimation parts heuristically in the ratio $9:1$, and set sketch size parameters $\tilde R=1, R=P=\tilde P = \lceil\log{d} \rceil$. Finally, the constant $c_0$ is set such that the total error in FME, at every round, is at most $1.1$ times the DP error.

We also note some minor changes between \cref{alg:dp-mean-estimation-adapt-server_adapt_mean} (FME) and
its adaptation to \cref{alg:dp-mean-estimation-adapt-server_adapt_experiments} (FO), made only for simplification, explained below.
We replace the Laplace noise added to norm by Gaussian noise for ease of privacy accounting in practice.
Further, the expression for the sketch size (line 9) may look different, however, for all \textit{practically relevant} regime of parameters, it is the same as line 7 in \cref{alg:dp-mean-estimation-adapt-server_adapt_mean}.
\section{Empirical Analysis on Federated Learning}
\label{sec:experiments}
\begin{figure*}[t]
    \centering
    \ifarxiv
    \else
    \vspace{-1.5em}
    \fi
    \subfloat[F-EMNIST]{\includegraphics[width=.33\linewidth]{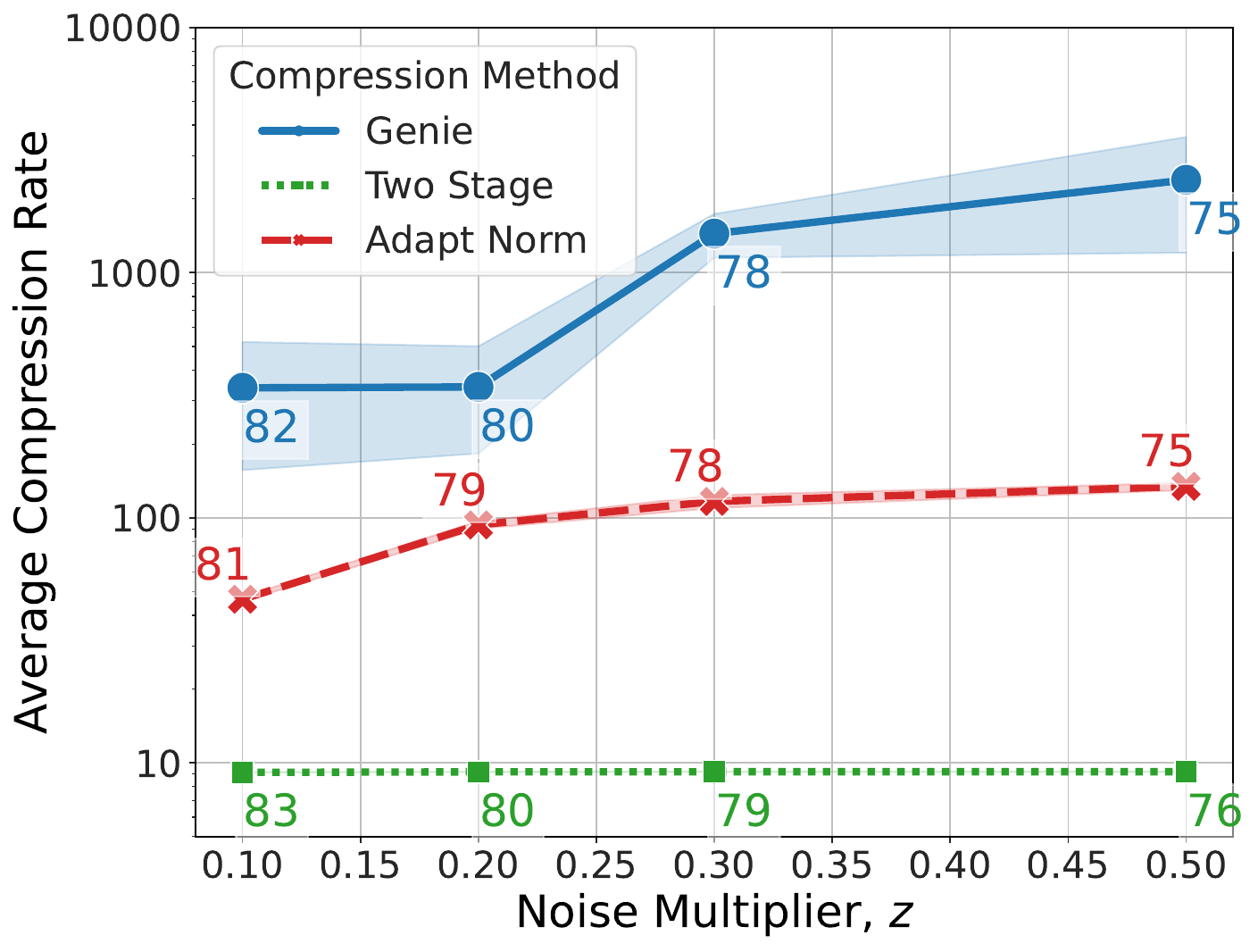}}
    \subfloat[SONWP]{\includegraphics[width=.33\linewidth]{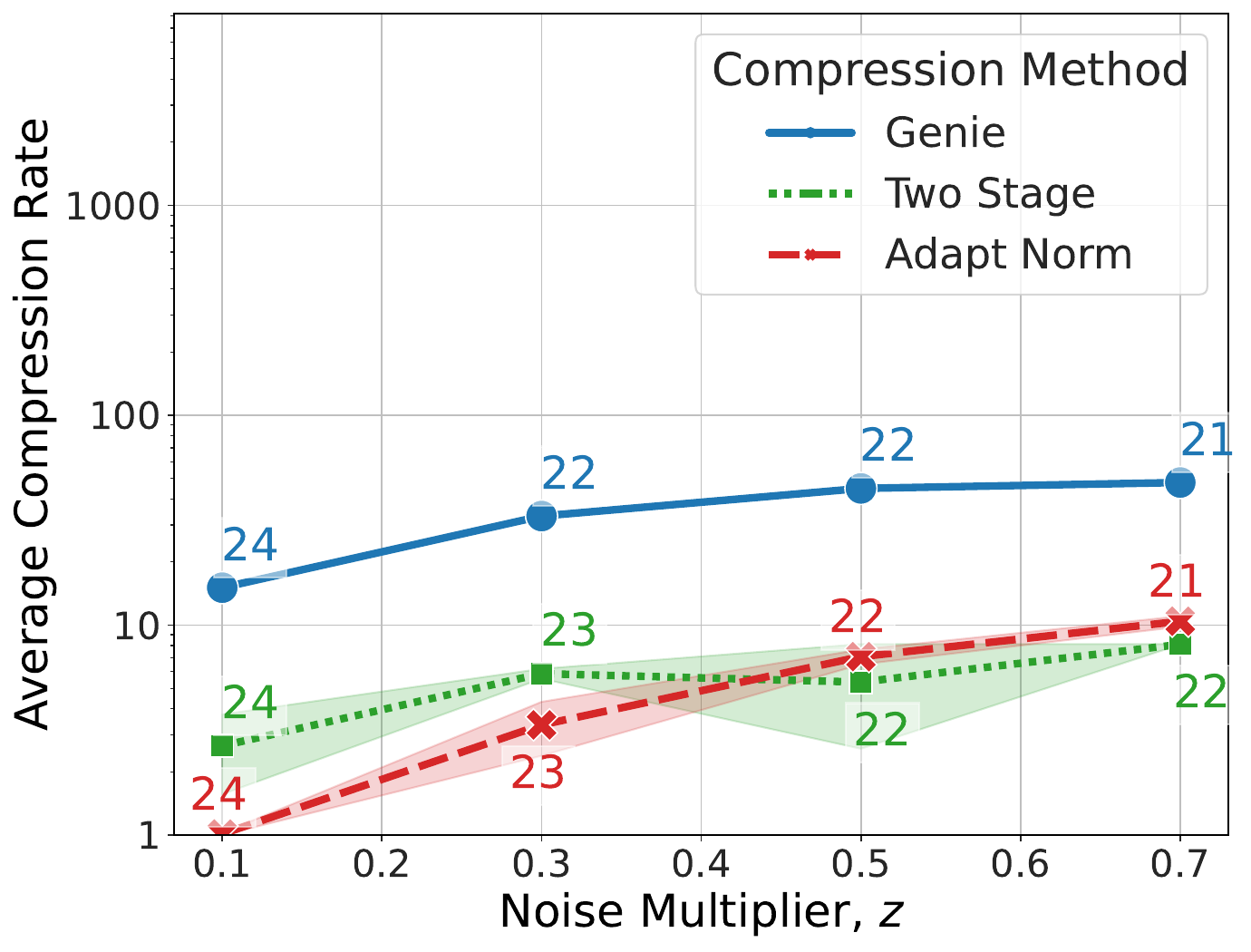}}
    \subfloat[Shakespeare]{\includegraphics[width=.33\linewidth]{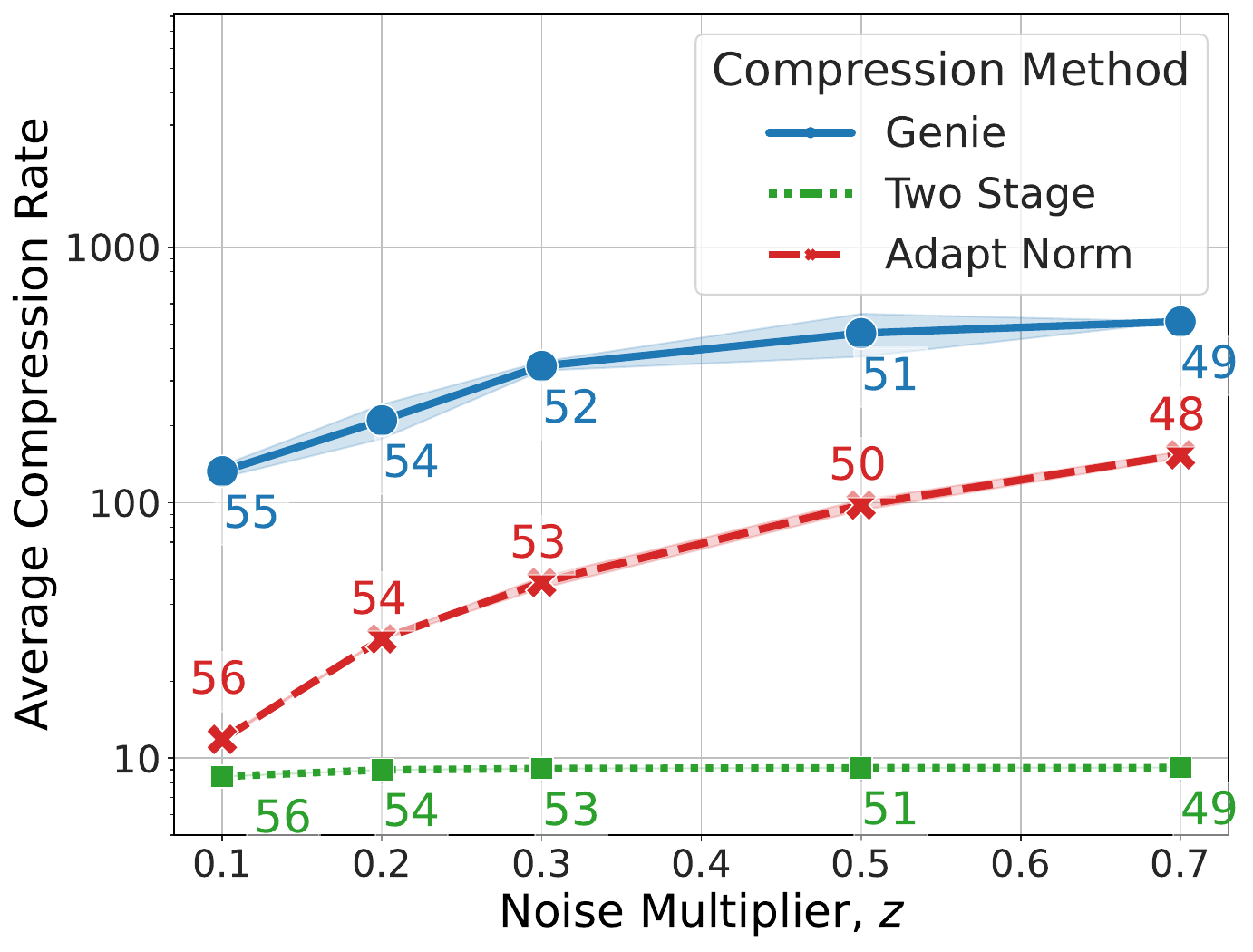}}
    \ifarxiv
    \else
    \vspace{-1em}
    \fi
    \caption{\textbf{Our adaptive approaches achieve favorable compression rates without tuning.} The `Genie' represents the optimal fixed compression rate obtained via significant hyper parameter tuning. Our adaptive approaches are all $1$-shot using a fixed constant of $0.1$. 
    Text represents the final validation accuracy, with the `Genie' at $\Delta=1\%$.}
    \label{fig:adaptive-compare}
    \vspace{-1em}
\end{figure*}

In this section, we present experimental evaluation of the methods proposed in \cref{sec:fme-to-fo-norm} for federated optimization, in standard FL benchmarks.
We define the average compression rate of an $T$-round procedure as be relative decrease in the average bits communicated, i.e., 
$\frac{dT}{\sum_{t=1}^TL_t+\tilde L_t}$ where $d$ is the model dimensionality, $L_t$ and $\tilde L_t$ are sizes of first and second sketches at round $t$. This is equivalently the harmonic average of the per-round compression rates.

\paragraph{Setup.}
We focus on three common FL benchmarks: Federated EMNIST (F-EMNIST) and Shakespeare represent two relatively easier tasks, and Stack Overflow Next Word Prediction (SONWP) represents a relatively harder task. F-EMNIST is an image classification task, whereas Shakespeare and SONWP are language modelling. 
We follow the exact same setup (model architectures, hyper parameters) as~\citet{chen2022fundamental} except where noted below. Our full description can be found in \cref{sec:app-setup}.

Unlike~\citet{chen2022fundamental}, we use fixed clipping (instead of adaptive clipping) and tune the server's learning rate for each noise multiplier, $z$. 
As noted by~\citet{choquette2022multi}, we also zero out high $\ell_1$ norm updates $\geq 100$ for improved utility and use their hyper parameters for SONWP.
\ifarxiv
\paragraph{Defining Feasible Algorithms.}
\else
{\bf Defining Feasible Algorithms.}
\fi
Consider that introducing compression into a DP FL algorithm introduces more noise into its optimization procedure. Thus, we may expect that our algorithms will perform worse than the baseline. We define $\Delta$ to be the max allowed relative drop in utility (validation accuracy) when compared to their baseline without compression. Then, our set of feasible algorithms are those that achieve at least $1-\Delta$ of the baseline performance. This lets us study the privacy-utility tradeoff under a fixed utility and closely matches real-world constraints---often a practitioner will desire an approach that does not significantly impact the model utility, where $\Delta$ captures their tolerance.

\ifarxiv
\paragraph{Algorithms.}
\else
{\bf Algorithms.}
\fi
Our baseline is the \textbf{Genie}, which is the method of~\citet{chen2022fundamental}, run on the grid of exponentially increasing compression rates $2^{b},b\in[0, 13]$. This requires computation that is logarithmic in the interval size and significantly more communication bandwidth in totality---thus, this is unachievable in practice but serves as our best-case upper bound.
Our proposed adaptive algorithms are \textbf{Adapt Norm} and \textbf{Two Stage method}, described in \cref{sec:fme-to-fo-norm}.

{
\ifarxiv
\paragraph{Achieving Favorable Compression Without Tuning.}
\else
{\bf Achieving Favorable Compression Without Tuning.}
\fi
Because the optimal \textbf{Genie} compression rates may be unattainable without significantly increased communication and computation, we instead desire to achieve nontrivial compression rates, as close this genie as possible, but without any (significant) additional computation, while also ensuring that the impact on utility remains negligible. Thus, to ensure minimal impact on utility, we run our adaptive protocols so that the error introduced by compression is much smaller than that which is introduced by DP. We heuristically choose that error from compression can be at most 10\% of DP: 
this choice of this threshold is such that the error is small enough to be dominated by DP and yet large enough for substantial compression rates.
 We emphasize that we chose this value heuristically and do not tune it. 
 
 In extensive experiments across all three datasets (\cref{fig:adaptive-compare}), we find that our methods achieve essentially the same utility models (as seen by the colored text). We further find that our methods achieve nontrivial and favorable communication reduction---though still far from the genie, they already recover a significant fraction of the compression rate.
This represents a significant gain in computation: to calculate this genie we ran tens of jobs per noise multiplier---for our proposed methods, we run but one. 

Comparing our methods, we find that the Adapt Norm approach performs best. It achieves favorable compression rates consistently across all datasets with compression rates that can well adapt to the specific error introduced by DP at a given noise multiplier. For example, we find on the Shakespeare dataset in \cref{fig:adaptive-compare} (c) that the compression-privacy curve follows that of the genie.
In contrast, the Two Stage approach typically performs much worse. This is in part due to construction: we require a significant number of warmup rounds (we use $W=75$) to get an estimate of the gradient norm. Because these warmup rounds use no compression, it drastically lowers the (harmonic) average compression rate. We remark that in scenarios where strong priors exist, e.g., when tuning a model on a similar proxy dataset locally prior to deploying in production, it may be possible to significantly reduce or even eliminate $W$ so as to make this approach more competitive. However, our fully-adaptive Adapt Norm approach is able to perform well even without any prior knowledge of the such statistics.
}

\ifarxiv
\paragraph{Benchmarking computation overhead.}
\else
\textbf{Benchmarking computation overhead.}
\fi
We remark that sketching is a computationally efficient algorithm requiring computation similar to clipping for encoding and decoding the update. Though our adaptive approaches do introduce minor additional computation (e.g., a second sketch), these do not significantly impact the runtime. In benchmark experiments on F-EMINST, we found that standard DP-Fed Avg takes 3.63s/round, DP-Fed Avg with non-adaptive sketching takes 3.63s/round, and our Adapt Norm approach take 3.69s/round. Noting that our methods may impact convergence, so we also fix the total number of rounds a-priori in all experiments; this provides a fair comparison for all methods where any slow-down in convergence is captured by the utility decrease. 

\ifarxiv
\paragraph{Choosing the relative error from compression ($c_0$).}
\else
\textbf{Choosing the relative error from compression ($c_0$).}
\fi
Though we chose $c_0=10\%$ heuristically and without tuning, we next run experiments selecting other values of $c_0$ to show that our intuition of ensuring the relative error is much smaller leads to many viable values of this hyperparameter. We sweep values representing $\{25,5,1\}\%$ error and compare the utility-compression tradeoff in \cref{sec:app-additional-figures}. We find that $c_0$ well captures the utility-compression tradeoff, where smaller values consistently lead to higher utility models with less compression, and vice versa. We find that the range of suitable values is quite large, e.g., even as low as $c_0=1\%$, non-trivial compression can be attained.

\section{Conclusion and Discussion}
We design autotuned compression techniques for private federated learning that are compatible with secure aggregation. We accomplish this by creating provably optimal autotuning procedures for federated mean estimation and then mapping them to the federated optimization (learning) setting. 
We analyze the proposed mean estimation schemes and show that they achieve order optimal error rates with order optimal communication complexity, adapting to the norm of the true mean (Section~\ref{sec:adaptivity_norm}) and adapting to the tail-norm of the true mean (Section~\ref{sec:adaptivity_tail_norm}). Our results show that we can attain favorable compression rates that recover much of the optimal Genie, in one-shot without any additional computation.
Although the $\ell_2$ error in mean estimation is a \textit{tractable proxy} for autotuning compression rate in federated learning, we found that it may not always correlate well with the downstream model accuracy.
In particular, in our adaptation of Adapt Tail to federated learning (in \cref{app:adapt-tail}), we found that that the procedure is able to attain very high compression rates for the federated mean estimation problem, with little overhead in $\ell_2$ error, relative to the DP error.
However, these compression rates are too high to result in useful model accuracy.
So a natural direction for future work is to design procedures which improve upon this proxy in federated learning settings.
\bibliographystyle{plainnat}
\newpage
\bibliography{main}
\newpage
\appendix
\onecolumn
\section{Additional Figures}\label{sec:app-additional-figures}
\begin{table}[h!]
    \centering
    {
    \begin{tabular}{c|c|c|c|c|c}
        \multicolumn{6}{c}{F-EMNIST}\\\hline
        \multirow{2}{*}{\makecell{Relative\\Error, $c_0$}} & \multirow{2}{*}{Metric} & \multicolumn{4}{c}{Noise Multiplier ($z$)} \\\cline{3-6}
        & & 0.1 & 0.2 & 0.3 & 0.5 \\\hline\hline
        \multirow{2}{*}{\makecell{\textbf{N/A,}\\\textbf{Genie}}} & \makecell{Validation\\Accuracy} & 81.23 & 79.70 & 77.68 & 74.92 \\
        & \makecell{Compression\\rate, $r$} & 339 & 341 & 1443 & 2391 \\\hline
        \multirow{2}{*}{0.25} & \makecell{Validation\\Accuracy} & 78.87 & 75.53 & 77.01 & 74.97 \\
         & \makecell{Compression\\rate, $r$} & 101 & 212 & 284 & 332 \\\hline
        \multirow{2}{*}{\textbf{0.1}} & \makecell{Validation\\Accuracy} & 81.34 & 78.53 & 77.81 & 75.36 \\
         & \makecell{Compression\\rate, $r$} & 47 & 94 & 117 & 134 \\\hline
         \multirow{2}{*}{0.05} & \makecell{Validation\\Accuracy} & 81.84 & 79.04 & 78.43 & 75.66 \\
         & \makecell{Compression\\rate, $r$} & 24 & 47 & 59 & 68 \\\hline
         \multirow{2}{*}{0.01} & \makecell{Validation\\Accuracy} & 82.51 & 79.59 & 78.67 & 75.56 \\
         & \makecell{Compression\\rate, $r$} & 5 & 9 & 12 & 13 \\\hline
        
    \end{tabular}
    \caption{\textbf{Adapt Norm is stable with respect to choices in the relative error constant $c_0$}. We observe that as this constant is increased so that the compression error is larger with respect to the privacy error, the model's utility degrades and the compression rates obtained increase, and vice versa; this aligns with our expectations as the total error in the FME estimate increases (decreases). However, all choices of constant below $0.1$ lead to viable models with non-trivial compression rates. Results for F-EMNIST with values displaying the mean across 5 runs. Bolded rows represent configurations used in the rest of our experiments.}
    }
    \label{tab:stability-emnist}
\end{table}

\begin{table}[h!]
    \centering
    {
    \begin{tabular}{c|c|c|c|c|c|c}
        \multicolumn{6}{c}{Shakespeare}\\\hline
        \multirow{2}{*}{\makecell{Relative\\Error, $c_0$}} & \multirow{2}{*}{Metric} & \multicolumn{5}{c}{Noise Multiplier ($z$)} \\\cline{3-7}
        & & 0.1 & 0.2 & 0.3 & 0.5 & 0.7 \\\hline\hline
        \multirow{2}{*}{\makecell{\textbf{N/A,}\\\textbf{Genie}}} & \makecell{Validation\\Accuracy} & 55.41 & 53.91 & 52.48 & 50.44 & 48.68 \\
        & \makecell{Compression\\rate, $r$} & 133 & 210 & 342 & 460 & 511 \\\hline
        \multirow{2}{*}{0.25} & \makecell{Validation\\Accuracy} & 55.71 & 53.93 & 52.26 & 49.12 & 46.96 \\
         & \makecell{Compression\\rate, $r$} & 24 & 64 & 117 & 235 & 367 \\\hline
        \multirow{2}{*}{\textbf{0.1}} & \makecell{Validation\\Accuracy} & 55.77 & 54.17 & 52.57 & 49.87 & 47.84 \\
         & \makecell{Compression\\rate, $r$} & 12 & 29 & 49 & 98 & 154 \\\hline
         \multirow{2}{*}{0.05} & \makecell{Validation\\Accuracy} & 55.89 & 54.29 & 52.82 & 50.23 & 48.12 \\
         & \makecell{Compression\\rate, $r$} & 7 & 17 & 27 & 50 & 75 \\\hline
         \multirow{2}{*}{0.01} & \makecell{Validation\\Accuracy} & 55.83 & 54.28 & 52.85 & 50.66 & 48.87 \\
         & \makecell{Compression\\rate, $r$} & 2 & 5 & 8 & 13 & 18 \\\hline
        
    \end{tabular}
    \caption{\textbf{Adapt Norm is stable with respect to choices in the relative error constant $c_0$}. We observe that as this constant is increased so that the compression error is larger with respect to the privacy error, the model's utility degrades and the compression rates obtained increase, and vice versa; this aligns with our expectations as the total error in the FME estimate increases (decreases). However, all choices of constant below $0.1$ lead to viable models with non-trivial compression rates. Results for Shakespere with values displaying the mean across 5 runs. Bolded rows represent configurations used in the rest of our experiments.}
    }
    \label{tab:stability-shapeskeare}
\end{table}

\begin{table}[h!]
    \centering
    {
    \begin{tabular}{c|c|c|c|c|c}
        \multicolumn{6}{c}{SONWP}\\\hline
        \multirow{2}{*}{\makecell{Relative\\Error, $c_0$}} & \multirow{2}{*}{Metric} & \multicolumn{4}{c}{Noise Multiplier ($z$)} \\\cline{3-6}
        & & 0.1 & 0.3 & 0.5 & 0.7 \\\hline\hline
        \multirow{2}{*}{\makecell{\textbf{N/A,}\\\textbf{Genie}}} & \makecell{Validation\\Accuracy} & 23.77 & 22.36 & 22.07 & 21.33 \\
        & \makecell{Compression\\rate, $r$} & 15 & 33 & 45 & 48 \\\hline
        \multirow{2}{*}{0.25} & \makecell{Validation\\Accuracy} & 23.97 & 22.59 & 22.06 & 21.15 \\
         & \makecell{Compression\\rate, $r$} & 2 & 4 & 12 & 16 \\\hline
        \multirow{2}{*}{\textbf{0.1}} & \makecell{Validation\\Accuracy} & 24.03 & 22.70 & 22.12 & 21.31 \\
         & \makecell{Compression\\rate, $r$} & 1 & 3 & 7 & 10 \\\hline
         \multirow{2}{*}{0.05} & \makecell{Validation\\Accuracy} & 23.93 & 22.39 & 22.14 & 21.35 \\
         & \makecell{Compression\\rate, $r$} & 1 & 2 & 4 & 6 \\\hline
         \multirow{2}{*}{0.01} & \makecell{Validation\\Accuracy} & 24.10 & 22.57 & 22.17 & 21.38 \\
         & \makecell{Compression\\rate, $r$} & 1 & 1 & 2 & 3 \\\hline
        
    \end{tabular}
    \caption{\textbf{Adapt Norm is stable with respect to choices in the relative error constant $c_0$}. We observe that as this constant is increased so that the compression error is larger with respect to the privacy error, the model's utility degrades and the compression rates obtained increase, and vice versa; this aligns with our expectations as the total error in the FME estimate increases (decreases). However, all choices of constant below $0.1$ lead to viable models with non-trivial compression rates. Results for SONWP with values displaying the mean across 5 runs. Bolded rows represent configurations used in the rest of our experiments.}
    }
    \label{tab:stability-sonwp}
\end{table}
\FloatBarrier

\section{Missing Details from Section
\ref{sec:fme}
}

In this section, we provide additional details on FME. 

\subsection{Instance optimal FME with bounded norm of the mean}

\paragraph{Optimal error rate with bounded mean norm.} We consider a problem of DP mean estimation when the data is at the server, but under the assumption that  $\norm{\mu}\leq M$ with a known $M$.
First, we investigate the
statistical complexity of the problem, without DP,
showing a lower bound on the error of $\Omega(G^2\min(M^2,1/n))$ -- see \cref{thm:statistical-lower-bound-mean} in Appendix \ref{app:lower-bounds-statistcal-error}. Secondly, in order to understand the complexity under differential privacy constraint, we study the empirical version of the problem wherein for a fixed dataset, the goal is estimate its empirical mean. Our main result is the following.

\begin{theorem}
\label{thm:lower-bound-dp-mean-estimation-bounded-mean}
For any $d,n \in \bbN, G,M>0$, define the instance class,
$    \hat \cP_1(n,d,G,M) = \bc{\bc{z_i}_{i=1}^n: z_i \in \bbR^d, \norm{z_i} \leq G, \norm{\hat \mu(\bc{z_i}_{i=1}^n)} \leq MG}$
For any $\epsilon = O(1)$ and $2^{-\Omega(n)} \leq \delta \leq \frac{1}{n^{1+\Omega(1)}}$, we have,
   \begin{align*}
         &\min_{\cA: \cA \text{ is }(\epsilon,\delta)-\text{DP}} \max_{D \in \hat \cP_1(n,d,G,M)} \mathbb{E}\norm{\cA(D)-\hat \mu(D)}^2 
         = \Theta\br{G^2\, \min\br{M^2,\frac{d\log{1/\delta}}{n^2\epsilon^2}}}
     \end{align*}
\end{theorem}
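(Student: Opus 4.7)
The plan is to exhibit two $(\epsilon,\delta)$-DP algorithms and, knowing $M$ and the other parameters, return whichever achieves the smaller worst-case MSE over $\hat \cP_1(n,d,G,M)$. The first is the trivial constant estimator that outputs $0$: it is $(0,0)$-DP and attains $\norm{0-\hat \mu}^2 \leq M^2 G^2$ on every instance by the constraint $\norm{\hat \mu}\leq MG$. The second is the standard Gaussian mechanism applied to $\hat \mu$: since replacing one point in the ball of radius $G$ changes $\hat \mu$ by at most $2G/n$ in $\ell_2$, noise $\cN(0,\sigma^2 I_d)$ with $\sigma = \Theta\br{G\sqrt{\log{1/\delta}}/(n\epsilon)}$ yields $(\epsilon,\delta)$-DP and MSE $O\br{G^2 d\log{1/\delta}/(n^2\epsilon^2)}$. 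Returning whichever of the two has smaller error yields the upper bound $O\br{G^2\min\br{M^2,\,d\log{1/\delta}/(n^2\epsilon^2)}}$.

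\textbf{Lower bound.} For the matching lower bound, I would reduce to a standard DP mean-estimation lower bound (Bun--Steinke / fingerprinting style) on a smaller effective dataset. Restrict to the sub-class of ``sparse'' instances with exactly $m := \lfloor Mn\rfloor$ non-zero points in the ball of radius $G$ and $n-m$ zero points. Every such instance lies in $\hat \cP_1(n,d,G,M)$ since $\norm{\hat \mu}\leq mG/n \leq MG$, and the empirical mean factors as $\hat \mu = (m/n)\bar \mu_m = M\bar \mu_m$, where $\bar \mu_m$ is the mean of the non-zero points (itself in the ball of radius $G$). Given any $(\epsilon,\delta)$-DP algorithm $\cA$ on $n$-point datasets, zero-padding $\cA'(D_m) := \cA(D_m \cup \bc{0}^{n-m})$ is $(\epsilon,\delta)$-DP on $m$-point datasets and targets $M\bar \mu_m$; rescaling, $\cA'/M$ is an $(\epsilon,\delta)$-DP estimator of $\bar \mu_m$ with MSE exactly $1/M^2$ times that of $\cA$. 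Invoking the standard $\ell_2^2$ lower bound $\Omega\br{G^2 \min\br{1, d\log{1/\delta}/(m^2\epsilon^2)}}$ for DP mean estimation on $m$-point datasets in the ball of radius $G$, then multiplying back by $M^2 = (m/n)^2$, yields the desired $\Omega\br{G^2 \min\br{M^2, d\log{1/\delta}/(n^2\epsilon^2)}}$ on the MSE of $\cA$ for estimating $\hat \mu$, using the algebraic identity $M^2 \cdot d\log{1/\delta}/(m^2\epsilon^2) = d\log{1/\delta}/(n^2\epsilon^2)$.

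\textbf{Main obstacle.} The most delicate point is confirming that the fingerprinting lower bound can be invoked for the induced $m$-point problem across all parameter regimes permitted by the theorem, especially when $M$ is so small that $m = \lfloor Mn \rfloor$ is small. One needs to check that $\epsilon = O(1)$ passes through directly and that $\delta \leq 1/n^{1+\Omega(1)}$ is strong enough to imply the corresponding condition for $m$-point datasets, possibly by a slight sharpening or by substituting a packing / group-privacy argument when $m$ is below the fingerprinting threshold. Conveniently, in the saturated regime $m^2 \epsilon^2 \lesssim d\log{1/\delta}$, the $m$-point lower bound already saturates at $\Omega(G^2)$ on $\bar \mu_m$, which after the $M^2$ rescaling becomes precisely $\Omega(M^2 G^2)$, matching the trivial upper bound from outputting zero and closing the minimax rate across the entire range of $M$.
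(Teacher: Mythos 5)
Your proposal takes essentially the same route as the paper: both reduce to a standard DP mean-estimation lower bound on a smaller effective dataset via zero-padding and rescaling (the paper's main case uses $N = \lceil Mn/2\rceil$ points; you use $m = \lfloor Mn\rfloor$), and the upper bound is identical. The "delicate point" you flag at the end, however, is a genuine gap, and it is exactly what the paper's casework and probability-mixing trick are there to repair. Concretely, the hypothesis $2^{-\Omega(n)}\leq\delta$ does \emph{not} imply the condition $2^{-\Omega(m)}\leq\delta$ needed to invoke the $m$-point fingerprinting lower bound: since $m\leq n$, the latter is a strictly stronger requirement, and for small $M$ the given $\delta$ can lie far below $2^{-\Omega(m)}$. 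The paper resolves this by mixing the reduced estimator with the (non-private) identity map with small probability so as to inflate the effective privacy parameter to $1/N^{1+\Omega(1)}$, then invoking the $N$-point lower bound at that larger $\delta$; and for $M\leq 4/n$ it uses a separate $1$-point reduction (scaling the single data point by $n/4$, with $\delta=\Theta(1)$). Your argument also silently assumes $m\geq 1$, which fails when $M<1/n$ and is again covered by the paper's $1$-point case. Until you carry out the repair you allude to (the probability-mixing / group-privacy step), the lower bound is not established across the full parameter range in the statement.
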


We convert the lower bound for the empirical problem to the statistical problem, via a known re-sampling trick (see \cite{bassily2019private}, Appendix C).  
The proof of \cref{thm:lower-bound-dp-mean-estimation-bounded-mean} is based on reduction to the (standard) DP mean estimation wherein the above assumption on the bound on norm of the mean is absent, for which the optimal error is known \citep{steinke2015between, kamath2020primer}.
Our result above shows that this additional restriction of $ \|\hat\mu(\{z_i\}_{i=1}^n)\| \leq MG$ does not make the problem \textit{much} easier in terms of achievable error.
In the interesting regime when $G > M \geq G\sqrt{d\log{1/\delta}}/(n\epsilon)$, this yields the (same) optimal error with potentially significantly smaller communication.

\paragraph{Instance-specific tightness.}
 We show that for \textbf{any} dataset with mean $\hat \mu$, the mean-squared error of count-mean sketch (i.e. $R=1$) with added noise of variance $\sigma^2$
 is exactly $\frac{d-1}{PC}\norm{\hat \mu}^2 + \frac{d\sigma^2}{n^2}$ -- see \cref{lem:sketching-instance-tightness} for the statement. Therefore, $\norm{\hat \mu}$ is the only statistic that controls the MSE of this procedure. 
Adapting to the norm of mean thus provides a tight instance-specific communication complexity for this method.

\subsection{Instance optimal FME with bounded tail-norm of the mean} 
\label{app:missing-tail-norm}

\paragraph{Lower bound on error:} 
The error rate in \cref{thm:dp-mean-estimation-upper-bound-2} matches the rate in Eqn. \eqref{eqn:mean-estimation-optimal-rate}, known to be optimal in the worst case, i.e. general values of tail-norms.
However, it may be possible to do better in special cases.
Towards this, we establish lower bounds for (central) DP mean estimation, where the algorithm designer is given $k<d$ and $\gamma$ and is promised that $\norm{\mu_{\ntail{k}}}\leq \gamma$. 
For this setting, we present a lower bound on error of 
$\Omega\br{G^2\min \bc{1, \gamma^2 + \frac{k}{n^2\epsilon^2} + \frac{1}{n}, \frac{d}{n^2\epsilon^2} + \frac{1}{n}}}$.
The first and third term are standard, achieved by outputting zero and our procedure (or simply, by Gaussian mechanism) 
respectively.
The second term is new, which is absent from the guarantees of our procedure.

To prove the lower bound, we first show,
in Theorem \ref{thm:statistical-lower-bound-tail}, in \cref{app:lower-bounds-statistcal-error}, that the statistical error, without DP, is unchanged, $\Omega\br{\min\br{G^2,\frac{G^2}{n}}}$. 
Secondly, in order to establish the fundamental limits under differential privacy, we study the empirical version of the problem, where the goal is to estimate the empirical mean, and give the following result.

\begin{theorem}
\label{thm:dp-mean-estimation-lower-bound-2}
For any $n,d,k \in \bbN, G,M, \gamma\geq 0$, define the instance class
$ \hat \cP_2 (n,d, G, k, \gamma) =\bc{ \bc{z_i}_{i=1}^n: z_i \in \bbR^d, \norm{z_i}\leq G,
     \norm{\hat \mu(D)_{\ntail{k}}}^2
     \leq \gamma^2}$.
For $\epsilon=O(1)$ and $2^{-\Omega(n)} \leq \delta \leq \frac{1}{n^{1+\Omega(1)}}$, we have
\begin{align*}
    &\min_{\cA: \cA \text{ is } (\epsilon,\delta)\text{-DP}}\max_{D \in \cP(n,d,G,k,\gamma)} \mathbb{E}\norm{\cA(D)-\hat \mu(D)}^2= \Omega\br{G^2\min \bc{1, \gamma^2 + \frac{k\log{1/\delta}}{n^2\epsilon^2}, \frac{d\log{1/\delta}}{n^2\epsilon^2}}}
\end{align*}
\end{theorem}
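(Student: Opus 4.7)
The plan is to reduce the worst-case error on $\hat\cP_2(n,d,G,k,\gamma)$ to two DP mean estimation lower bounds applied to disjoint coordinate subspaces, and then combine them using the elementary inequality $\max(A,B)\geq (A+B)/2$. This avoids having to construct a single hard instance that is simultaneously bad for all three terms inside the minimum.

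For the $\Omega(G^2 k \log(1/\delta)/(n^2\epsilon^2))$ contribution, I consider datasets supported on the first $k$ coordinates: embed any $k$-dimensional data $D'=(z_1',\dots,z_n')$ with $\|z_i'\|\leq G$ into $\mathbb{R}^d$ by placing it in the first $k$ positions with zeros elsewhere. The tail constraint is trivially satisfied because $\hat\mu_{\ntail{k}}=0\leq\gamma$, so the embedded dataset lies in $\hat\cP_2$. Projecting the output of any $(\epsilon,\delta)$-DP algorithm $\cA$ onto the first $k$ coordinates yields an $(\epsilon,\delta)$-DP estimator for standard $\ell_2$-bounded mean estimation in $\mathbb{R}^k$, which inherits the fingerprinting-code lower bound of \citet{steinke2015between}, giving $\Omega(G^2\min\{1,\, k\log(1/\delta)/(n^2\epsilon^2)\})$.

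For the $\Omega(\min\{\gamma^2,\, G^2(d-k)\log(1/\delta)/(n^2\epsilon^2)\})$ contribution, I embed $(d-k)$-dimensional datasets with empirical mean norm $\leq\gamma$ into the last $d-k$ coordinates of $\mathbb{R}^d$. Since $\hat\mu_{\ntail{k}}$ then equals the embedded mean (viewed as a $(d-k)$-vector), its norm is $\leq\gamma$, confirming membership in $\hat\cP_2$. Projecting $\cA$ onto the last $d-k$ coordinates gives an $(\epsilon,\delta)$-DP estimator for the bounded-mean DP mean estimation problem, to which \cref{thm:lower-bound-dp-mean-estimation-bounded-mean} applies with ambient dimension $d-k$ and mean-norm bound $\gamma$.

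Combining: for any fixed DP algorithm $\cA$, the maximum error over $\hat\cP_2$ is at least the maximum of the two subspace bounds, hence at least their average. A short case analysis on whether $\gamma^2\leq G^2(d-k)\log(1/\delta)/(n^2\epsilon^2)$ verifies that this average matches $\Omega(G^2\min\{1,\, \gamma^2+k\log(1/\delta)/(n^2\epsilon^2),\, d\log(1/\delta)/(n^2\epsilon^2)\})$ up to constants. In the saturated regime $\gamma^2 > G^2(d-k)\log(1/\delta)/(n^2\epsilon^2)$ the second contribution caps at $G^2(d-k)\log(1/\delta)/(n^2\epsilon^2)$, but the target simultaneously drops because $\gamma^2+k\log(1/\delta)/(n^2\epsilon^2) \geq d\log(1/\delta)/(n^2\epsilon^2)$, making the third term of the minimum active; the sum of the two subspace bounds still recovers the min. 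The main obstacle is precisely this regime-by-regime verification and the care needed to ensure that projection-based reductions preserve both DP (which follows from post-processing) and the tail constraint (which follows from the coordinate-disjoint embeddings above).
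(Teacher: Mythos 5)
Your reduction and the paper's are built on the same underlying decomposition: split $\bbR^d$ into a $k$-dimensional block (on which one invokes the standard DP mean-estimation lower bound for $\hat\cP(n,k,\cdot)$) and a $(d-k)$-dimensional block (on which one invokes \cref{thm:lower-bound-dp-mean-estimation-bounded-mean} with mean-norm parameter $\gamma$), projecting the output of $\cA$ onto each block and using post-processing to preserve DP. Where you diverge is in how the two ingredient bounds are combined. The paper constructs a \emph{single} stacked hard instance $D=(D_1,D_2)$ with per-block radius $G/\sqrt{2}$, verifies $D\in\hat\cP_2$ (including a check that the tail-norm of the stacked mean is at most $\gamma G$), and argues by contradiction: if $\cA$ achieves $o(\text{target})$ error on $D$, both projections do, and a three-branch case split---on whether $k>d/2$ and then on whether $\gamma^2<k\log{1/\delta}/(n^2\epsilon^2)$---shows one projection contradicts a known lower bound. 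You instead keep the two hard families on disjoint coordinate blocks (each padded with zeros in the complementary block, so the tail constraint is automatic), lower-bound $\max_{D\in\hat\cP_2}$ by each family's bound separately, and use $\max(A,B)\geq(A+B)/2$; the sum-versus-target check then needs only the split on whether $\gamma^2$ exceeds $(d-k)\log{1/\delta}/(n^2\epsilon^2)$, and the paper's $k>d/2$ branch disappears because, writing $a=\log{1/\delta}/(n^2\epsilon^2)$, the identity $ka+(d-k)a=da$ handles it automatically. Both routes are correct and give the same $\Omega$-bound; yours is slightly more elementary---no $G/\sqrt{2}$ rescaling, no tail-norm verification for a stacked nonzero mean, fewer cases---at the cost of a benign factor-of-two loss in the $\max\geq\text{avg}$ step. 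One small notational caution: $\gamma$ in this paper is the \emph{normalized} tail-norm bound, so the empirical mean of your $(d-k)$-dimensional datasets should be bounded by $\gamma G$ rather than $\gamma$ when invoking \cref{thm:lower-bound-dp-mean-estimation-bounded-mean} with $M=\gamma$; this is cosmetic and does not affect the argument.
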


We convert the lower bound for the empirical problem to the statistical problem, via a known re-sampling trick (see \cite{bassily2019private}, Appendix C).  
We provide the proof of \cref{thm:dp-mean-estimation-lower-bound-2} in \cref{app:lower-bounds-dp}.

\paragraph{Achieving the optimal error with small communication:} 
We show that a simple tweak to our procedure underlying \cref{thm:dp-mean-estimation-upper-bound-2} -- adding a Top-$k$ to the unsketched estimate, with exponentially increasing $k$ -- suffices to achieve the error in \cref{thm:dp-mean-estimation-lower-bound-2}. The procedure is adaptive in $k$, but requires prior knowledge of $\gamma$ and is a \textbf{biased} estimator of the mean. In the following, we abuse notation and write $k_{\text{tail}}(\gamma^2;\mu)$ which corresponds to plugging $g(k) = \gamma^2$ in its definition.

\begin{theorem}
\label{thm:dp-mean-estimation-upper-bound-3}
For any $\gamma>0$, 
\cref{alg:dp-mean-estimation-adapt-server_adapt_tail}
with the same parameter settings as in \cref{thm:dp-mean-estimation-upper-bound-2} except with $k_j=2^j$ and $\overline \gamma_j =16\br{\gamma G + \frac{G\sqrt{\log{8\log{d}/\beta}}}{\sqrt{m}}
    +\sqrt{k_j}\frac{\sigma}{n}}  + \tilde \alpha$
satisfies $(\epsilon,\delta)$-DP. 
With probability at least $1-\beta$, the final output satisfies, 
\begin{align*}
    &\norm{\omean - \pmean}^2 
    = \tilde O\br{G^2\br{\gamma^2 + \frac{1}{n} + \frac{ k_{\text{tail}}(\gamma^2;\mu)\log{1/\delta}}{n^2\epsilon^2}}},
\end{align*}
the total per-client communication complexity is $\tilde O \br{k_{\text{tail}}(\gamma^2;\mu)}$
and number of rounds is $\lfloor \log{d} \rfloor$.
\end{theorem}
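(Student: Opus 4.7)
}
The plan is to piggyback on the proof of Theorem~\ref{thm:dp-mean-estimation-upper-bound-2}, modifying only the pieces affected by replacing the full-vector decoding with $\mathrm{Top}_{k_j}$ and by the new threshold $\overline\gamma_j$ that incorporates the given promise $\lVert \mu_{\ntail{k}}\rVert\le \gamma$. Privacy is the easiest step: neither the Top-$k$ truncation, which is post-processing of $\omean_j$, nor the threshold $\overline\gamma_j$ (which depends only on the public promise $\gamma$, the public $k_j$, and the noise scale) touches the privacy budget. Thus the same Gaussian mechanism accounting for the primary sketch, the Laplace noise on the secondary sketch used inside AboveThreshold over $\lfloor \log d\rfloor$ rounds, composes to $(\epsilon,\delta)$-DP exactly as in Theorem~\ref{thm:dp-mean-estimation-upper-bound-2}.

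For the utility bound, I would decompose the per-round error as
\begin{align*}
\lVert \omean_j - \mu\rVert
\;\le\;
\underbrace{\lVert \mathrm{Top}_{k_j}(U_j(\nu_j)) - \mathrm{Top}_{k_j}(\mu)\rVert}_{\text{estimation on the kept support}}
\;+\;
\underbrace{\lVert \mathrm{Top}_{k_j}(\mu) - \mu\rVert}_{\le\, G\lVert\mu_{\ntail{k_j}}\rVert}.
\end{align*}
The second term is at most $G\gamma$ as soon as $k_j \ge k_{\text{tail}}(\gamma^2;\mu)$ by definition of generalized sparsity. For the first term, since $\mathrm{Top}_{k_j}$ is a projection onto a coordinate subset of size $k_j$, the sketching contribution is controlled, as in the proof of Theorem~\ref{thm:dp-mean-estimation-upper-bound-2}, by the count-median-of-means per-coordinate guarantee (which depends on $\lVert\mu_{\ntail{k_j}}\rVert$ and is $\tilde O(\sqrt{d/(PC_j)}\,(\gamma + 1/\sqrt n))$ in the aggregate), and the Gaussian-noise contribution on a $k_j$-dimensional support is $\tilde O(\sqrt{k_j}\,\sigma/n)$ instead of $\sqrt d\,\sigma/n$. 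This is precisely the budget built into the new $\overline\gamma_j$: the three summands $\gamma G$, $G\sqrt{\log(8\log d/\beta)/n}$, and $\sqrt{k_j}\sigma/n$ correspond respectively to tail truncation, statistical sampling, and privacy noise on the kept coordinates.

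Next, I would analyze the stopping rule. As in Theorem~\ref{thm:dp-mean-estimation-upper-bound-2}, the re-sketched error estimate $\bar e_j=\lVert \tilde S_j(\omean_j)-\tilde\nu_j\rVert$ equals (up to the JL norm-preservation slack absorbed into constants) the true $\ell_2$-error of $\omean_j$, with high probability after a union bound over the $\lfloor\log d\rfloor$ rounds; AboveThreshold with noise scale $\tilde\alpha$ then preserves the $\overline\gamma_j$-gap decision with failure probability $\beta$. The key claim is that once $PC_j = \tilde\Omega(k_{\text{tail}}(\gamma^2;\mu))$ and $k_j\ge k_{\text{tail}}(\gamma^2;\mu)$, the true error falls below $\overline\gamma_j$ so AboveThreshold halts, and the final error is $\tilde O(\overline\gamma_{j^\star})^2 = \tilde O(G^2(\gamma^2 + 1/n + k_{\text{tail}}(\gamma^2;\mu)\log(1/\delta)/(n^2\epsilon^2)))$ as claimed. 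Because $C_j$ (and similarly $k_j=2^j$) doubles each round, the communication is dominated by the terminal round $j^\star$ with $2^{j^\star}=\tilde O(k_{\text{tail}}(\gamma^2;\mu))$, giving the stated $\tilde O(k_{\text{tail}}(\gamma^2;\mu))$ bound, while the round count is trivially $\lfloor\log d\rfloor$.

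The main obstacle I foresee is the interplay between $\mathrm{Top}_{k_j}$ and the count-median-of-means estimator: the per-coordinate high-probability guarantee must hold \emph{simultaneously} over the (random, $j$-dependent) set of kept coordinates, and it must be stated in a form that allows the $\sqrt{d/(PC_j)}$ aggregation factor to be replaced by a $\sqrt{k_j/(PC_j)}$ factor on the kept support. Doing this cleanly requires a union bound over $\binom{d}{k_j}$ supports, which is where the $R=\tilde\Theta(\log(d/\beta))$ copies in the median-of-means construction are used (they boost the per-coordinate failure probability to $2^{-\Omega(R)}$, absorbing the union-bound penalty into logarithmic factors). Everything else, including the calibration of constants in $\overline\gamma_j$ and the $K=\lfloor\log d\rfloor$ union bounds, is a bookkeeping extension of the corresponding steps in the proof of Theorem~\ref{thm:dp-mean-estimation-upper-bound-2}.
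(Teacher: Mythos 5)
Your high-level plan (privacy by post-processing, unbiasedness/stopping via AboveThreshold, halting once $k_j$ dominates $k_{\text{tail}}(\gamma^2;\mu)$, geometric sum for communication) matches the paper, but the crucial utility step is mishandled. The decomposition
\[
\lVert \omean_j-\mu\rVert\le \lVert \mathrm{Top}_{k_j}(U_j(\nu_j))-\mathrm{Top}_{k_j}(\mu)\rVert+\lVert\mathrm{Top}_{k_j}(\mu)-\mu\rVert
\]
is a valid triangle inequality, but it does not lead to the claimed control of the first term: $\mathrm{Top}_k$ is \emph{not} a $1$-Lipschitz map (consider $z=(1,1-\varepsilon,0)$, $\bar z=(1-\varepsilon,1,0)$, $k=1$: $\lVert\mathrm{Top}_1(\bar z)-\mathrm{Top}_1(z)\rVert=\sqrt2$ while $\lVert\bar z-z\rVert=\sqrt2\,\varepsilon$), and the two $\mathrm{Top}_{k_j}$ operations act on different, data-dependent supports, so it is not a projection onto a single size-$k_j$ subset. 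Your proposed fix — a union bound over $\binom{d}{k_j}$ candidate supports — is also not the paper's route: it would force $R=\tilde\Theta(k\log(d/k))$ rows in the median-of-means sketch rather than $R=\tilde\Theta(\log d)$, inflating the per-client communication past the claimed $\tilde O(k_{\text{tail}}(\gamma^2;\mu))$.

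What the paper actually does is avoid comparing $\mathrm{Top}_{k}(\bar z)$ to $\mathrm{Top}_k(z)$ altogether. It first establishes a coordinatewise ($\ell_\infty$) guarantee for the noisy count-median-of-means estimate (a union bound over the $d$ coordinates only, hence $R=\tilde\Theta(\log d)$), and then invokes \cref{lem:top-k-lemma} / \cref{lem:cs-noise-hh}, which bound $\lVert\mathrm{Top}_k(\bar z)-z\rVert^2$ directly in terms of the $\ell_\infty$ slack and $\lVert z_{\text{tail}(k)}\rVert$ by carefully partitioning the error into the contributions from $I\setminus\tilde I$ and $\tilde I\setminus I$ and cancelling the swapped coordinates; this gives $\hat e_j^2\lesssim \lVert(\hat\mu_j)_{\text{tail}(k_j)}\rVert^2 + k_j\sigma^2/n^2$, which is exactly what the threshold $\overline\gamma_j$ is calibrated against. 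To close your argument you would need to reproduce (or cite) such a Top-$k$ recovery lemma rather than rely on the triangle decomposition plus a support union bound. Once that is in place, the stopping-time analysis (showing the algorithm halts by $\overline j$ with $2^{\overline j}\le 2k_{\text{tail}}(\gamma^2;\mu)$ using the promise $\lVert\mu_{\ntail{k}}\rVert\le\gamma$ and vector concentration) and the geometric communication bound go through as you outline.
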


We provide the proof of \cref{thm:dp-mean-estimation-upper-bound-3} in \cref{app:proofs_upper_bounds}.

\paragraph{Optimal communication complexity:} 
Next, we investigate the communication complexity of multi-round SecAgg-compatible, potentially biased, schemes with prior knowledge of $k$ and $\gamma$ such that $\norm{\mu_{\ntail{k}}}\leq \gamma$. Our main result is the following.
\begin{theorem}
\label{thm:lower-bound-communication-bounded-mean-3}
Let $d,n,k,K\in \bbN, d\geq 2k, \epsilon, \delta, G,\alpha,\gamma>0$. 
Define 
$\cP_2(d,G,\gamma,k) = \bc{\cD \text{ over } \bbR^d: \norm{z}\leq G, z\sim \cD, \text{ and } \norm{\mu(\cD)_{\ntail{k}}}^2\leq \gamma^2}$.
For any $K$, any protocol $\cA$ in the class of $K$-round, SecAgg-compatible protocols (see \cref{sec:multi-round-protocols} for details)
such that its MSE, $\mathbb{E}_{D\sim \cD^n}[\norm{\cA(D) -  \mu}^2] \leq \alpha^2$ for all $\cD \in \cP_2(d,G,\gamma,k)$, there exists a distribution $\cD \in  \cP_2(d,G,\gamma, k)$, such that on dataset $D\sim \cD^n$, w.p. 1, the total per-client communication complexity is
$\Omega(k\log{Gd/k\alpha})$.
\end{theorem}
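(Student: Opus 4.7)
I would prove Theorem~\ref{thm:lower-bound-communication-bounded-mean-3} via a packing argument combined with an information-theoretic bound specialized to the SecAgg model. The overall plan is: (i) construct a large packing of $k$-sparse vectors with pairwise $\ell_2$-distance exceeding $4\alpha$; (ii) use each packing element as the mean of a point-mass distribution in $\cP_2(d,G,\gamma,k)$, reducing mean estimation to identification; (iii) apply Fano's inequality to lower bound the mutual information between the packing index and the server's view; (iv) exploit SecAgg-compatibility to bound this mutual information by the per-client communication.

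For step (i), I would invoke the standard fact that the $k$-sparse Euclidean ball of radius $G$ in $\bbR^d$ has $\alpha$-packing number at least $(cGd/(k\alpha))^{ck}$. Concretely, fix a family $\cS$ of $k$-element subsets of $[d]$ with pairwise intersection $\le k/2$ (greedy/probabilistic construction gives $|\cS|\ge (d/k)^{\Omega(k)}$). Within each $S\in\cS$, build an $8\alpha$-packing of the $\ell_2$ ball of radius $G$ in $\bbR^S$ via the standard volume argument, of size $(G/(8\alpha))^k$. Taking $\cV$ to be the union gives $\log|\cV| = \Omega\bigl(k\log(Gd/(k\alpha))\bigr)$; any two elements are either separated by an $8\alpha$-displacement on a common support or have mostly disjoint supports with coordinate magnitudes $\Theta(G/\sqrt k)$ forcing $\ell_2$-distance $\Omega(G) \gg \alpha$. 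For each $v\in\cV$, the point mass $\cD_v = \delta_v$ has $\mu(\cD_v)=v$ exactly $k$-sparse, so $\|\mu(\cD_v)_{\ntail{k}}\|=0\le\gamma$ and $\cD_v\in\cP_2(d,G,\gamma,k)$.

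For steps (ii)--(iv), let $V$ be uniform on $\cV$ and $D\sim \cD_V^n$ (i.e.\ $n$ copies of $V$). The MSE assumption plus Markov gives $\|\cA(D)-V\|_2\le 2\alpha$ with probability $\ge 3/4$, and by the $4\alpha$-separation of $\cV$, snapping to the nearest codeword recovers $V$ with the same probability. Letting $T$ denote the server's SecAgg transcript (the concatenation of the round-wise aggregated messages across all $K$ rounds), Fano's inequality yields $I(T;V)\ge (3/4)\log|\cV| - 1 = \Omega(k\log(Gd/(k\alpha)))$. Now the SecAgg step: in a SecAgg-compatible protocol, each client transmits per round a fixed-length codeword in a finite group, and the server only observes the group-sum, which has the \emph{same} bit-length as each per-client message. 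Hence $H(T)\le\sum_{j=1}^K L^{(j)}$ equals the per-client total uplink communication $L_{\mathrm{total}}$, and combining with Fano gives $L_{\mathrm{total}}=\Omega(k\log(Gd/(k\alpha)))$. Finally, since $L_{\mathrm{total}}$ is deterministic under the fixed-length SecAgg convention, a Yao-style averaging over $V$ produces some $v\in\cV$ for which the bound holds with probability $1$ on $D\sim\cD_v^n$.

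The main obstacle is formalizing the SecAgg information bound for multi-round, randomized, interactive protocols: while in a one-shot fixed-length SecAgg step the server's view is trivially at most $L$ bits, in $K$ rounds the server may send adaptive feedback, and clients can use shared/local randomness, so one must check that the per-round server view is still compressible to the per-client codeword length so that $I(T;V)$ telescopes cleanly into the sum of per-round uplink bits. This is where the precise $K$-round SecAgg-compatible protocol model from \cref{sec:multi-round-protocols} is needed, and it mirrors how the analogous step was carried out in \cref{thm:lower-bound-communication-bounded-mean-1}. The remaining packing/Fano steps are standard bookkeeping.
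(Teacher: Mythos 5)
Your approach is in the same general family as the paper's (a covering/packing argument over the $k$-sparse ball, converted into a communication lower bound via the SecAgg transcript being short), but it routes through Fano's inequality whereas the paper uses a more direct counting argument that sidesteps the exact obstacle you flag at the end. The paper's Theorem~\ref{thm:lower-bound-communication-covering} considers the finite set $\cC$ of all \emph{expected} decoder outputs $\{\mathbb{E}_{\cU}\cU(\{C_j\}_j): C_j\in\cO_j\}$, which has size at most $2^{\sum_j b_j}$ because the number of transcripts is bounded; Jensen then shows each $\mu(\cD)$ is within $\alpha$ of some point of $\cC$, so $\cC$ is an exterior $\alpha$-cover of $\cM$, and the bound follows from a volume estimate of $\cN^{\mathrm{ext}}(\cM,\alpha)$. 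No mutual information, no Fano, and no concern about adaptive server feedback or shared randomness—it is just counting. Your Fano route is standard and also correct in principle (and slightly more robust, e.g.\ it extends to high-probability rather than in-expectation error bounds), but it needs a packing rather than a cover, and the adaptive, multi-round transcript makes the $H(T)\le\sum_j b_j$ bound something you must argue rather than observe, exactly as you acknowledge.

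There is also a concrete gap in your packing construction. You build $\cV$ as a union of $\alpha$-packings of $\ell_2$ balls of radius $G$ on supports with small pairwise overlap, and assert that cross-support pairs are $\Omega(G)$-separated because coordinates have magnitude $\Theta(G/\sqrt k)$. But an arbitrary volume-argument packing of a radius-$G$ ball contains points of arbitrarily small magnitude, so cross-support pairs can be arbitrarily close and the union need not be a packing. One fix is to restrict to a shell (all coordinates $\in[G/(2\sqrt k),G/\sqrt k]$), but then the per-support packing size drops to roughly $(G/(\sqrt k\alpha))^k$ and you have to recheck that the product still yields $\Omega(k\log(Gd/(k\alpha)))$. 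A cleaner fix—and what the paper effectively does—is to run the volume argument directly on the union: the $k$-dimensional volume of the $k$-sparse ball is $\binom{d}{k}\mathrm{Vol}_k(\bbB^k_G)$, any ball of radius $2\alpha$ intersects it in $k$-volume at most $\mathrm{Vol}_k(\bbB^k_{2\alpha})$, and for $d\ge 2k$, $\binom{d}{k}\ge(d/k)^k$, giving the covering number $\ge(dG/(2k\alpha))^k$ in one step. You then recover a packing of the same order by the standard fact that a maximal $2\alpha$-packing is a $2\alpha$-cover. With that patch your argument goes through, but the paper's counting route remains simpler because it avoids Fano and the adaptive-transcript subtlety altogether.
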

Plugging in the error $\alpha^2$ from \cref{thm:dp-mean-estimation-upper-bound-3} establishes that complexity complexity of \cref{thm:dp-mean-estimation-upper-bound-3} is tight upto poly-logarithmic factors. 

\paragraph{Instance-specific tightness.}
We show, in \cref{lem:instance-lower-bound-tail-norm} in \cref{app:proofs_upper_bounds}, that for any $P$, $C = \Omega(Pk)$ and $R=\log{d}$, for any dataset with mean $\hat \mu$, the error of the count median-of-means sketch underlying \cref{thm:dp-mean-estimation-upper-bound-3} with added noise of variance $\sigma^2$, with high probability, is $\Omega\br{G^2\norm{\hat \mu_{\ntail{k}}}^2 + \sigma^2k}$.

\paragraph{Discussion on \cref{thm:dp-mean-estimation-upper-bound-2} vs \cref{thm:dp-mean-estimation-upper-bound-3}:} While the error in procedure defined in \cref{thm:dp-mean-estimation-upper-bound-2} may be larger than that
in \cref{thm:dp-mean-estimation-upper-bound-3}, the former has some attractive properties that make it useful in practice.
Specifically, it is unbiased which is desirable in stochastic optimization and it does not require knowledge of any new hyper-parameter $\gamma$, which is unclear how to adapt to.
Further, the additional top $k$ operation, which is the sole difference between the two procedures, does not seem to provide \textit{significant} benefits in our downstream FL experiments.
Consequently, we use the procedure underlying \cref{thm:dp-mean-estimation-upper-bound-2} in our experimental settings, and defer detailed investigation of the method in \cref{thm:dp-mean-estimation-upper-bound-3} to future work.
\section{Proofs of Error Upper Bounds for FME}
\label{app:proofs_upper_bounds}
\subsection{Useful Lemmas}

In this section, we collect and state lemmas that will be used in the proofs of the main results.
\begin{lemma}
\label{lem:kane-nelson-jl}
\cite{kane2014sparser}
For any $i \in [R]$,
CountSketch matrix $S^{(i)} \in \bbR^{\pads\cols \times d}$ with $P =  \Omega\br{\tau^{-1}\log{1/\beta}}$ and $C = \frac{1}{\tau}$
satisfies that, for any $z \in \bbR^{d}$, with probability at least $1-\beta$,
$\br{1-\tau}\norm{z}^2\leq\norm{S^{(i)}z}^2\leq \br{1+\tau}\norm{z}^2$
\end{lemma}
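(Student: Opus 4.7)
The plan is to recognize that the matrix $S^{(i)}$ is literally an instance of the \emph{block-construction sparse Johnson–Lindenstrauss transform} studied by Kane and Nelson, and then to quote (or re-derive) their single-vector concentration bound.

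First, I would unpack the construction to verify the correspondence. By definition, $S^{(i)} \in \bbR^{PC \times d}$ is the vertical concatenation of $P$ independent count-sketch matrices $S_1,\ldots,S_P$, each of width $C$, and then scaled by $1/\sqrt{P}$. Hence every column of $S^{(i)}$ has exactly $P$ nonzero entries—one per block of $C$ rows—each of magnitude $1/\sqrt{P}$, with sign chosen by an independent Rademacher (the sign hash $s_j$) and row-in-block chosen uniformly (the bucketing hash $h_j$). This is exactly the block sparse JL construction with embedding dimension $m = PC$ and per-column sparsity $s = P$.

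Second, I would invoke Theorem 1.2 of \cite{kane2014sparser}: the block construction with $m = \Theta(\tau^{-2}\log(1/\beta))$ and sparsity $s = \Theta(\tau^{-1}\log(1/\beta))$ satisfies, for any fixed $z \in \bbR^d$, $\Pr\bigl[\,|\,\|S^{(i)}z\|^2 - \|z\|^2\,| > \tau\|z\|^2\,\bigr] \leq \beta$. Plugging in $C = 1/\tau$ and $P = \Theta(\tau^{-1}\log(1/\beta))$ gives $m = PC = \Theta(\tau^{-2}\log(1/\beta))$ and $s = P = \Theta(\tau^{-1}\log(1/\beta))$, so the hypotheses are met and the claimed $(1\pm\tau)$ bound on $\|S^{(i)}z\|^2$ holds with probability at least $1-\beta$.

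If one prefers a self-contained derivation, the argument goes as follows. Write $\|S^{(i)}z\|^2 = (1/P)\sum_{j=1}^P X_j$ with $X_j := \|S_j z\|_2^2 \cdot P / P = \sum_{k\in [C]}\bigl(\sum_{q:h_j(q)=k} s_j(q) z_q\bigr)^2$. A direct expansion, using pairwise independence of the signs, yields $\mathbb{E}[X_j] = \|z\|^2$ and $\mathrm{Var}(X_j) = O(\|z\|^4 / C) = O(\tau\|z\|^4)$; moreover $X_j$ is a low-degree Rademacher chaos so its moments satisfy $\|X_j - \|z\|^2\|_{L^p} \leq O(\sqrt{p\tau})\|z\|^2$ for all $p\geq 2$. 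Independence across the $P$ blocks then lets us apply a Bernstein/subexponential concentration bound to the average $(1/P)\sum X_j$, and the choice $P = \Omega(\tau^{-1}\log(1/\beta))$ drives the deviation below $\tau\|z\|^2$ with failure probability at most $\beta$. The only mildly nontrivial step in this route is the moment bound for $X_j$, which is a standard computation on second-order Rademacher chaos; everything else is routine. Either route (citation or self-contained) completes the proof.
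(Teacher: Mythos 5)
Your proposal is correct and takes essentially the same approach as the paper: the paper states this lemma as a known result with a citation to Kane and Nelson and gives no proof of its own, and your plan is precisely to identify the count-mean sketch as the block-construction sparse JL transform and invoke the corresponding concentration theorem from that reference, with the parameters $m = PC = \Theta(\tau^{-2}\log(1/\beta))$ and $s = P = \Theta(\tau^{-1}\log(1/\beta))$ matching as you check. The optional self-contained second-moment-plus-Bernstein sketch you include is sound (the variance bound $\mathrm{Var}(X_j)=O(\|z\|^4/C)$ and independence across blocks are both correct) but is extra material beyond what the paper itself provides.
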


The result below shows that Count-median-of-means sketch preserves norms and heavy hitters.
\begin{lemma}
\label{lem:countsketch-jl-hh}
Let $0 <\alpha,\tau,\beta<1$ and $k,d \in \bbN$ and $k\leq d$.
For $P = \Theta\br{\tau^{-1}\log{2R/\beta}}, C = \max\br{8Pk,\frac{k}{8P\alpha},\frac{1}{\tau}}$ and $R = \lceil 2\log{2d/\beta} \rceil$, the sketch $S$ satisfies that for any $z \in \bbR^d$, with probability at least $1-\beta$,
\begin{enumerate}
    \item \textbf{JL property}:
    \begin{align*}
        \br{1-\tau}\norm{z}^2\leq\norm{S^{(i)}z}^2\leq \br{1+\tau}\norm{z}^2 \ \forall i \in [R]
    \end{align*}
    \item \textbf{$\ell_\infty$ guarantee}:
    Let $\bar z = U(S(z))$, then,
    \begin{align*}
           \br{\bar z_q - z_q}^2 \leq  \frac{\alpha\norm{z_{\text{tail}(k)}}^2}{k} \ \forall q \in [d]
    \end{align*}
   \item  
   \textbf{Sparse recovery}: 
   \begin{enumerate}
       \item Let 
   $\tilde z = \text{Top}_k(\bar z)$
    \begin{align*}
    \norm{\tilde z - z}_2^2 \leq \br{1+7\sqrt{\alpha})}\norm{z_{\text{tail}(k)}}^2
\end{align*}
\item Let 
   $\hat z = \text{Top}_{2k}(\bar z)$
    \begin{align*}
    \norm{\hat z - z}_2^2 \leq \br{1+10\alpha}\norm{z_{\text{tail}(k)}}^2
\end{align*}
   \end{enumerate}
where $\norm{z_{\text{tail}(k)}}
    = \min_{\tilde z: \norm{\tilde z}_0\leq k
    }\norm{z-\tilde z}_2$.
\end{enumerate}
\end{lemma}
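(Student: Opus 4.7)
The plan is to prove the three items in sequence, assigning roughly $\beta/3$ of the overall failure probability to each via union bound, and to exploit the fact that the count median-of-means sketch decomposes cleanly into $R$ independent count-mean sketches, each built from $P$ padded count-sketches of width $C$.

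\textbf{Item 1 (JL property)} is essentially immediate. Each $S^{(i)}$ is a standard sparse-JL / count-mean sketch of size $PC$, so Lemma \ref{lem:kane-nelson-jl} applies: with the given $P = \Theta(\tau^{-1}\log(2R/\beta))$ and $C \ge 1/\tau$, the $i$-th sketch satisfies $(1 \pm \tau)$-norm preservation with failure probability at most $\beta/(4R)$. A union bound over $i \in [R]$ closes this item.

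\textbf{Item 2 ($\ell_\infty$ guarantee)} is the heart of the lemma. Fix $q \in [d]$ and let $H$ be the top-$k$ coordinates of $z$ (excluding $q$). For the $j$-th count-sketch in the $i$-th copy, write the estimation error as $X_{i,j} = (S_j^{(i)\top} S_j^{(i)} z)_q - z_q = s_j^{(i)}(q)\sum_{i' \ne q: h_j^{(i)}(i') = h_j^{(i)}(q)} s_j^{(i)}(i') z_{i'}$. Let $E_{i,j}$ be the event that no coordinate in $H$ collides with $q$ under $h_j^{(i)}$; counting gives $\Pr[E_{i,j}^c] \le k/C \le 1/(8P)$, and conditioned on $E_{i,j}$, the sign-hash kills cross terms so $\mathbb{E}[X_{i,j}^2 \mid E_{i,j}] \le O(\|z_{\text{tail}(k)}\|^2/C)$. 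Let $Y_i = (S^{(i)\top}S^{(i)}z)_q - z_q = (1/P)\sum_{j=1}^P X_{i,j}$. Since the $P$ hashes in copy $i$ are independent, a union bound across $j$ makes $\bigcap_j E_{i,j}$ hold with probability at least $1 - Pk/C \ge 7/8$; conditioned on this intersection, Chebyshev (or Markov on $Y_i^2$) combined with the balancing $C \ge k/(8P\alpha)$ yields $\Pr[Y_i^2 \le \alpha \|z_{\text{tail}(k)}\|^2/k] \ge 2/3$. The coordinate estimate $\bar z_q$ is the median over $R$ independent such $Y_i$'s, so by a standard Chernoff bound on $R = \Omega(\log(d/\beta))$ independent indicator variables, the median satisfies the desired threshold with probability at least $1 - \beta/(4d)$. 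A union bound over $q \in [d]$ establishes the $\ell_\infty$ claim.

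\textbf{Item 3 (Sparse recovery)} is a deterministic consequence of Item 2. Condition on $\|\bar z - z\|_\infty \le \sqrt{\alpha/k}\,\|z_{\text{tail}(k)}\|$. Let $T$ be the top-$k$ support of $z$ and $\hat T$ the support returned by $\mathrm{Top}_k(\bar z)$. Decompose $\|\mathrm{Top}_k(\bar z) - z\|_2^2 = \|z_{[d]\setminus \hat T}\|_2^2 + \sum_{q \in \hat T}(\bar z_q - z_q)^2$, bound the first summand by writing $\|z_{[d]\setminus \hat T}\|_2^2 \le \|z_{\text{tail}(k)}\|^2 + \sum_{q \in T\setminus \hat T}(z_q^2 - z_{\pi(q)}^2)$ where $\pi(q) \in \hat T \setminus T$ is the replacing index, and use the $\ell_\infty$ bound (together with the fact that $\bar z_q \ge \bar z_{\pi(q)}$ is violated by no more than $2\|\bar z - z\|_\infty$) to absorb the deficit. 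Summing $|T \setminus \hat T| \le k$ such terms gives an extra $O(\sqrt{\alpha})\,\|z_{\text{tail}(k)}\|^2$ slack, yielding the $(1 + 7\sqrt{\alpha})$ bound. For $\mathrm{Top}_{2k}$, the extra $k$ slots mean every coordinate in $T\setminus \hat T$ is necessarily replaced by a coordinate whose \emph{true} value is already small (otherwise $\hat T$ would have picked it), which tightens the accounting to $(1+10\alpha)$.

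\textbf{Main obstacle.} The bulk of the work is in Item 2: converting the heavy-hitter non-collision event (which fails with constant probability per mean) into a clean tail-norm bound on $Y_i$ with constants that match $\alpha/k$ exactly, so that the balancing condition $C \ge \max(8Pk,\, k/(8P\alpha),\, 1/\tau)$ is sharp. Once this is done with the right constant, boosting to high probability via the median over $R$ copies and the subsequent sparse-recovery reductions are routine.
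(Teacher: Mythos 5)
Your proposal follows essentially the same route as the paper's proof: Item 1 invokes the Kane--Nelson sparse-JL guarantee with a union bound over the $R$ copies; Item 2 isolates the per-copy error, splits it into a heavy-hitter collision event (probability controlled by $C\geq 8Pk$) and a tail contribution bounded via a second-moment/Chebyshev argument using $C\geq k/(8P\alpha)$, then median-boosts over $R=\Theta(\log(d/\beta))$ copies and union-bounds over the $d$ coordinates; and Item 3 is the same deterministic $\ell_\infty$-to-$\ell_2$ top-$k$ accounting the paper isolates as Lemma~\ref{lem:top-k-lemma}. The minor organizational differences (conditioning on the non-collision event before applying Chebyshev, rather than additively decomposing the error, and carrying a $2/3$ instead of $3/4$ per-copy success probability) do not change the substance of the argument.
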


\begin{proof}

$P=1$ corresponds to the standard Countsketch based method. For $P>1$, we modify the analysis as follows.
The first part directly follows from the result of \cite{kane2014sparser}, stated as \cref{lem:kane-nelson-jl}, which gave a construction of a sparse Johnson-Lindenstrauss transform based on CountSketch. 

We now proceed to the second part.
Let $H_k \subseteq [d]$ denote the indices of $k$ largest co-ordinates of $z$.
We first consider the estimate of $z_q$ based on one row ($i$-th row), which is give as,
\begin{align*}
    \bar z^{(i)}_q &= \br{{S^{(i)}}^\top S^{(i)}z}_q  = \frac{1}{P}\sum_{j=1}^d \sum_{p=1}^P s^{(i)}_{p}(j) s^{(i)}_{p}(q) \mathbbm{1}\br{h^{(i)}_{p}(j) = h_{p}^{(i)}(q)}z_j \\
    & = z_q + \underbrace{
    \frac{1}{P}\sum_{j=1, j\neq q}^d \sum_{p=1}^P s^{(i)}_{p}(j) s^{(i)}_{p}(q) \mathbbm{1}\br{h^{(i)}_{p}(j) = h_{p}^{(i)}(q)}z_j}_{=:E}
\end{align*}
From moment assumptions, $\mathbb{E}[E]=0$ which gives us that
$\mathbb{E}[ \bar z^{(i)}_q] = z_q$. We now decompose the error into two terms,
\begin{align*}
    E_{H_k} &= \frac{1}{P}\sum_{j\in {H_k}, j\neq q} \sum_{p=1}^P s^{(i)}_{p}(j) s^{(i)}_{p}(q) \mathbbm{1}\br{h^{(i)}_{p}(j) = h_{p}^{(i)}(q)}z_j \\
    E_{T_k} &= \frac{1}{P}\sum_{j\not \in {H_k}, j\neq q} \sum_{p=1}^P s^{(i)}_{p}(j) s^{(i)}_{p}(q) \mathbbm{1}\br{h^{(i)}_{p}(j) = h_{p}^{(i)}(q)}z_j
\end{align*}
By construction,
\begin{align*}
    \mathbb{P}\br{E_{H_k} \neq 0} \leq \mathbb{P}\br{\exists j \in {H_k}, p \in [P]: h^{(i)}_{p}(j) = h^{(i)}_{p}(q)} \leq \frac{Pk}{\cols} \leq \frac{1}{8}
\end{align*}
for $C \geq 8 Pk$. For the other term, we directly compute its second moment,
\begin{align*}
    \mathbb{E}[E_{T_k}^2] & = \mathbb{E}\left[\br{\frac{1}{P}\sum_{j\not \in {H_k}, j\neq q} \sum_{p=1}^P s^{(i)}_{p}(j) s^{(i)}_{p}(q) \mathbbm{1}\br{h^{(i)}_{p}(j) = h_{p}^{(i)}(q)}z_j}^2\right]\\
    & = \frac{1}{P^2} \sum_{j\not \in {H_k}, j\neq q}  \sum_{p=1}^P\mathbb{E}\left[\mathbbm{1}\br{h^{(i)}_{p}(j) = h_{p}^{(i)}(q)}\right]z_j^2 + 0 \\
    & \leq \frac{1}{P\cols}\norm{z_{\text{tail}(k)}}^2
\end{align*}
Therefore for $C \geq \frac{k}{8P\alpha}$, from Chebyshev's inequality, $E_{T_k} \leq \frac{\sqrt{\alpha}\norm{z_{\text{tail}(k)}}}{\sqrt{k}}$ with probability at least $7/8$. 
Combining, for $C = \max\br{\frac{k}{8P\alpha},8Pk}$, with probability at least $3/4$,
\begin{align*}
    \br{\bar z_q^{(i)} - z_q}^2 \leq \frac{\alpha\norm{z_{\text{tail}(k)}}^2}{k}
\end{align*}
With $\rows  =\lfloor 2 \log{1/\beta'} \rfloor$, and $\bar z_q= \text{median}\bc{\bar z_q^{(i)}}_{i=1}^\rows$; using the standard boosting guarantee based on a median of estimates, we have, with probability at least $1-\beta'$, 
\begin{align*}
    \br{\bar z_q - z_q}^2 \leq  \frac{\alpha\norm{z_{\text{tail}(k)}}^2}{k}.
\end{align*}
Setting $\beta' = \frac{\beta}{2d}$ and doing a union bound on all coordinates gives that with probability at least $1-\beta$, for all $ q \in [d]$, 
\begin{align*}
    \br{\bar z_q - z_q}^2 \leq  \frac{\alpha\norm{z_{\text{tail}(k)}}^2}{k}.
\end{align*}
The third item in the Lemma statement is based on converting the 
$\ell_\infty$ to $\ell_2$ guarantee; specifically, instantiating Lemma \ref{lem:top-k-lemma} with $\Delta^2 = \frac{\alpha\norm{z_{\text{tail}(k)}}^2}{k}$ gives us,

\begin{align*}
    \norm{\tilde z - z}^2 &\leq \br{1+5\alpha + 2\sqrt{\alpha}}\norm{z_{\text{tail}(k)}}^2 \\
    & \leq \br{1+ 7\sqrt{\alpha}}\norm{z_{\text{tail}(k)}}^2.
\end{align*} 
Similarly, from Lemma \ref{lem:top-k-lemma}, we get, 
\begin{align*}
    \norm{\hat z - z}^2 \leq \br{1+ 10\alpha}\norm{z_{\text{tail}(k)}}^2,
\end{align*} 
which completes the proof.
\end{proof}

\begin{lemma}
\label{lem:top-k-lemma}
Let $z \in \bbR^d$ and $\bar z \in \bbR^d$ such that $\norm{\bar z-z}_\infty^2 \leq \Delta^2$. Define $\tilde z = \text{Top}_k(\bar z)$ and $\hat z = \text{Top}_{2k}(\bar z)$. Then, 
\begin{align*}
    &\norm{z-\bar z}_2^2   \leq 5k \Delta^2 + \norm{z_{\text{tail}(k)}}^2 + 2\Delta \sqrt{k} \norm{z_{\text{tail}(k)}}\\
    &\norm{z - \hat z}_2^2 \leq 10 k \Delta^2 +  \norm{z_{\text{tail}(k)}}^2 
\end{align*}
\end{lemma}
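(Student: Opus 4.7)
The strategy is to exploit the fact that the $\ell_\infty$ bound $\norm{\bar z - z}_\infty \leq \Delta$ nearly preserves each coordinate's value, so the top-$k$ (respectively top-$2k$) support of $\bar z$ should coincide with that of $z$ up to a discrepancy controlled by $\Delta$. I would make this precise via a bijective pairing between the ``missed'' large coordinates of $z$ and the ``spurious'' picks from its tail, and then exploit a cancellation to produce a bound of the claimed form. (The first inequality is stated as $\norm{z-\bar z}^2$ in the statement, but context — in particular the use in \cref{lem:countsketch-jl-hh} — makes clear this is the error $\norm{z-\tilde z}^2$ of the Top$_k$ estimate.)

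\textbf{Top-$k$ case.} Let $H_k\subseteq [d]$ denote the top-$k$ indices of $z$ by magnitude, $T_k=[d]\setminus H_k$, and let $\tilde H_k$ be the support of $\tilde z$. Set $A=H_k\setminus \tilde H_k$ (missed large coords) and $B=\tilde H_k\setminus H_k$ (spurious picks), so $|A|=|B|\leq k$ and $B\subseteq T_k$. Decomposing along $[d]=\tilde H_k \sqcup A \sqcup (T_k\setminus B)$ and using $|z_i-\bar z_i|\leq \Delta$ on $\tilde H_k$,
\begin{align*}
\norm{z-\tilde z}^2 \;\leq\; k\Delta^2 \;+\; \sum_{i\in A}z_i^2 \;+\; \br{\norm{z_{T_k}}^2-\norm{z_B}^2}.
\end{align*}
Now fix any bijection $\pi:A\to B$. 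For $i\in A$ and $j=\pi(i)\in B$, the top-$k$ selection rule gives $|\bar z_j|\geq |\bar z_i|$, and the hypothesis yields $|z_i|\leq |z_j|+2\Delta$. Minkowski's inequality then gives
\begin{align*}
\sum_{i\in A}z_i^2 \;\leq\; \br{\norm{z_B}+2\sqrt{|B|}\,\Delta}^2 \;\leq\; \norm{z_B}^2 + O\br{\sqrt{k}\,\Delta\,\norm{z_{\text{tail}(k)}}} + O(k\Delta^2),
\end{align*}
using $|B|\leq k$ and $\norm{z_B}\leq \norm{z_{\text{tail}(k)}}$ (since $B\subseteq T_k$). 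Substituting, the $\norm{z_B}^2$ terms cancel and the first bound follows after tracking constants.

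\textbf{Top-$2k$ case.} The setup is identical for $\hat z$, except now $|A|\leq k$ while $|B|=k+|A|$; the surplus $|B|-|A|=k$ is exactly what enables removing the cross term. I would choose $\pi$ to map $A$ into the $|A|$ elements of $B$ with the \emph{smallest} $|\bar z|$-values (call this subset $B''\subseteq B$), so that the remaining $k$ elements of $B\setminus B''$ all have strictly larger $|\bar z|$ than those in $B''$, and hence (up to $\pm 2\Delta$) strictly larger $|z|$. Averaging this comparison across $B\setminus B''$ yields an estimate of the form $\norm{z_{B''}}^2 \lesssim \tfrac{|A|}{k}\,\norm{z_{B\setminus B''}}^2 + O(k\Delta^2)$; plugging this into the analog of the decomposition above and using the cancellation $\norm{z_{T_k}}^2-\norm{z_B}^2$, the cross term gets absorbed entirely into $\norm{z_{\text{tail}(k)}}^2$ with unit coefficient, producing the claimed $10k\Delta^2+\norm{z_{\text{tail}(k)}}^2$.

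\textbf{Main obstacle.} The structural part of the proof (the pairing $\pi:A\to B$ and the cancellation of $\norm{z_B}^2$) is routine; the delicate point is the top-$2k$ absorption: one must carefully use the surplus $|B|-|A|=k$ to fold the cross term $\sqrt{k}\,\Delta\,\norm{z_{\text{tail}(k)}}$ into $k\Delta^2+\norm{z_{\text{tail}(k)}}^2$ without any blow-up in the coefficient of $\norm{z_{\text{tail}(k)}}^2$. This is where the ``$2k$'' is genuinely exploited, rather than being a cosmetic change from the top-$k$ case, and it is where keeping constants tight becomes an arithmetic exercise rather than a purely structural argument.
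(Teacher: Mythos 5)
Your proposal is correct in outline and establishes the lemma up to constants, but via a genuinely different argument from the paper's. The paper works at the \emph{scalar} level: it sets $a=\max_{i\in I\setminus\tilde I}|z_i|$ and $b=\min_{i\in\tilde I\setminus I}|z_i|$, observes $a\le b+2\Delta$, splits into the cases $b\le 0$ and $b>0$, and for the top-$2k$ case completes the square in the single variable $b$, exploiting that $\norm{z_{\hat I\setminus I}}^2\ge(k+|I\setminus\hat I|)b^2$ gives an extra $-kb^2$. You instead work at the \emph{vector} level via a bijective pairing $\pi:A\to B$ and Minkowski's inequality, and for the top-$2k$ case you introduce the ordered subset $B''$ and use the $-\norm{z_{B\setminus B''}}^2$ cancellation to absorb the cross term by AM--GM. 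Both routes work; the paper's scalar completing-the-square is a bit slicker and gives cleaner constant-tracking, while your vector pairing is more transparent about which spurious picks are paired against which missed ones. Two small remarks: (i) your Minkowski step naturally yields a coefficient of $4$ (not $2$) on the cross term $\Delta\sqrt{k}\norm{z_{\text{tail}(k)}}$ in the top-$k$ bound — but the paper's own derivation of the $2$ appears to contain an arithmetic slip (the line going from $4\Delta\abs{I\setminus\tilde I}b$ to $2\Delta\sqrt{k}\norm{z_{I^c}}$), and $4$ is the honest constant, so this is not a real concession on your part; (ii) the $B''$ ordering and the $\sqrt{|A|/k}$ factor are more machinery than the absorption actually requires — the key mechanism is simply that after subtracting $\norm{z_{B''}}^2$ the leftover $-\norm{z_{B\setminus B''}}^2$ is available to swallow the cross term, and this works even with the cruder bound $\norm{z_{B''}}\le\norm{z_{B\setminus B''}}+O(\sqrt{k}\Delta)$. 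You are also right that the first inequality in the lemma statement should read $\norm{z-\tilde z}^2$ rather than $\norm{z-\bar z}^2$.
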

\begin{proof}
This is a fairly standard fact, though typically not presented in the form above (see, for instance \cite{price20111+}). We give a proof for completeness and its use in many parts of the manuscript.
Let $I$ denote the indices of top $k$ co-ordinates of $z$ and let $\tilde I$ and $\hat I$ denote the top $k$ and top $2k$ co-ordinates of $\bar z$. We proceed with the first part of the lemma. Note that,
\begin{align}
\label{eqn:top-k-main}
    \norm{z- \tilde z}^2 = \norm{z - \bar z_{\tilde I}}^2 = \norm{(z - \bar z)_{\tilde I} }^2 + \norm{z_{\tilde I^c}}^2
\end{align}
where $\tilde I^c$ denotes the complement set of $\tilde I$. 
The first term is bounded as $\norm{(z - \bar z)_{\tilde I} }^2 \leq \abs{\tilde I}\norm{z-\bar z}_\infty^2 = k\Delta^2$. We decompose the second term $\norm{z_{\tilde I^c}}^2$ as follows, 
\begin{align*}
  \norm{z_{\tilde I^c}}^2 =\norm{z_{I \backslash \tilde I}}^2 + \norm{z_{I^c}}^2 - \norm{z_{\tilde I \backslash I}}^2
\end{align*}
Let $a = \max_{i\in I\backslash \tilde I} z_i$  and $b = \min_{i \in \tilde I \backslash I} z_i$. From the $\ell_\infty$ guarantee and the fact that the sets $I$ and $\tilde I$ are indices of top $k$ elements of $z$ and $\tilde z$ respectively, we have that $(a-b)^2 \leq 4\norm{\bar z-z}_\infty^2 \leq 4\Delta^2$. We note consider two cases (a). $b\leq 0$ and $b>0$. In the first case, we have that $a \leq 2\Delta$. Plugging all these simplifications in Eqn. \eqref{eqn:top-k-main}, we get, 
\begin{align*}
    \norm{z-\tilde z}^2 &\leq k \Delta^2 + \norm{z_{I^c}}^2 +\norm{z_{I \backslash \tilde I}}^2 - \norm{z_{\tilde I \backslash I}}^2 \\
    & \leq k \Delta^2 + \norm{z_{I^c}}^2 + \abs{I \backslash \tilde I} a^2 \\
    & \leq k \Delta^2 + \norm{z_{I^c}}^2 + 4k \Delta^2  = \norm{z_{I^c}}^2  + 5k\Delta^2\\
\end{align*}
In the other case (b). $b>0$, we get,
\begin{align*}
    \norm{z-\tilde z}^2 &\leq k \Delta^2 + \norm{z_{I^c}}^2 +\norm{z_{I \backslash \tilde I}}^2 - \norm{z_{\tilde I \backslash I}}^2 \\
    & \leq k \Delta^2 + \norm{z_{I^c}}^2 + \abs{I \backslash \tilde I} a^2 -  \abs{\tilde I \backslash \tilde I}b^2\\
    & = k \Delta^2 + \norm{z_{I^c}}^2 + \abs{I \backslash \tilde I}\br{ a^2 -  b^2}\\
    & \leq k \Delta^2 + \norm{z_{I^c}}^2 + \abs{I \backslash \tilde I} \br{(b+2\Delta)^2 - b^2}\\
    & \leq k \Delta^2 + \norm{z_{I^c}}^2 + 2\Delta \abs{I \backslash \tilde I} b + 4\abs{I \backslash \tilde I}\Delta^2 \\
    & \leq k \Delta^2 + \norm{z_{I^c}}^2 + 2\Delta \sqrt{\abs{I \backslash \tilde I}} \norm{z_{I^c}} + 4k\Delta^2 \\
    & \leq 5k \Delta^2 + \norm{z_{I^c}}^2 + 2\Delta \sqrt{k} \norm{z_{I^c}}
\end{align*}
where the first equality follows since $\abs{I}=\abs{\tilde I}=k$, and the second last inequality follows since $\sqrt{\abs{I \backslash \tilde I}}b \norm{z_{I^c}}$ and $b$ is value of the the minimal element in $I\backslash \tilde I$.

 Finally, note that $\norm{z_{I^c}} = \norm{z_{\text{tail}(k)}}$. Hence, combining the two cases yields the statement claimed in the lemma.

The second part follows analogously. In particular, repeating the initial steps gives us
\begin{align*}
     \norm{z-\hat z}^2 &\leq 2k \Delta^2 + \norm{z_{I^c}}^2 +\norm{z_{I \backslash \hat I}}^2 - \norm{z_{\hat I \backslash I}}^2.
\end{align*}
Defining $a$ and $b$ as before and considering the two cases give us that in the first case (a). $b\leq 0$, we have,
\begin{align*}
     \norm{z-\hat z}^2 &\leq 2k \Delta^2 + \norm{z_{I^c}}^2 +\norm{z_{I \backslash \hat I}}^2\\
      &\leq 2k \Delta^2 + \norm{z_{I^c}}^2 +\abs{I \backslash \hat I}a^2\\
      & \leq  2k \Delta^2 + \norm{z_{I^c}}^2 +4k\Delta^2\\
      & = 6k \Delta^2 + \norm{z_{I^c}}^2.
\end{align*}
In the second case, let $\hat k = \abs{I \backslash \hat I}$; note that $\abs{\hat I\backslash I} =k+\hat k$. Therefore, 
\begin{align*}
     \norm{z-\hat z}^2 &\leq 2k \Delta^2 + \norm{z_{I^c}}^2 +\norm{z_{I \backslash \hat I}}^2 - \norm{z_{\hat I \backslash I}}^2 \\
     & \leq 2k\Delta^2 + \norm{z_{I^c}}^2 + \hat k a^2 - (k+\hat k)b^2 \\
    & \leq 2k\Delta^2 +  \norm{z_{I^c}}^2  + 4\hat k \Delta^2  + 4 \hat k b \Delta- k b^2 \\
    & = -k\br{b - 2\frac{\hat k \Delta}{k}}^2 + \frac{4\hat k^2\Delta^2}{k} +2k\Delta^2 + 4\hat k\Delta^2  +  \norm{z_{I^c}}^2 \\
    & \leq 10 k \Delta^2 +  \norm{z_{I^c}}^2. 
\end{align*}
where in the last inequality, we used that $\hat k \leq k$.
Combining the two cases finishes the proof.
\end{proof}

The following result gives a heavy hitter guarantee for count-median-of-means sketch with noise.
\begin{lemma}
\label{lem:cs-noise-hh}
Let $z \in \bbR^d$, $\rows,\cols,\pads$ be parameters of Count-median-of-means sketch $S$ and let $\xi \sim \cN(0, \sigma^2\bbI)$. Let $\bar z = U\br{S(z)+\xi}$. For $\cols \geq \max\br{8Pk,\frac{k}{8P\alpha}}$ and 
$\rows = \lceil 2\log{2d/\beta}\rceil$ gives us that
with probability at least
$1-\beta$,
\begin{align*}
    \norm{\bar z - z}_\infty^2 \leq
    \frac{2\alpha \norm{z_{\text{tail}(k)}}^2}{k} + \sigma^2
\end{align*}
Let $\tilde z = \text{Top}_k (\bar z)$ and $\hat z = \text{Top}_{2k} (\bar z)$. 
With probability at least $1-\beta$, we have
\begin{align*}
   & \norm{\tilde z - z}_2^2 \leq 
   2\br{1+7\sqrt{\alpha}}\norm{z_{\text{tail}(k)}}^2 + 5k\sigma^2\\
    & \norm{\hat z - z}_2^2  \leq \br{1+20\alpha}\norm{z_{\text{tail}(k)}}^2 + 10k\sigma^2
\end{align*}
\end{lemma}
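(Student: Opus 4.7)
The plan is to imitate the proof of Lemma~\ref{lem:countsketch-jl-hh} but with an extra additive contribution from the Gaussian noise, then convert the resulting $\ell_\infty$ bound into $\ell_2$ bounds via Lemma~\ref{lem:top-k-lemma}. The starting observation is that $U(S(z)+\xi) = U(S(z)) + S^\top \xi$, so for each row $i\in[R]$ and each coordinate $q\in[d]$,
\[
\bar z^{(i)}_q \;=\; z_q \;+\; E^{(i)}_q \;+\; \eta^{(i)}_q,
\]
where $E^{(i)}_q$ is the sketch collision error analysed in Lemma~\ref{lem:countsketch-jl-hh}, and $\eta^{(i)}_q = \tfrac{1}{\sqrt{P}}\sum_{p=1}^{P} s^{(i)}_p(q)\,\xi^{(i)}_{p,h^{(i)}_p(q)}$. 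Because each $\xi^{(i)}_{p,c}$ is independent $\cN(0,\sigma^2)$ and $s^{(i)}_p(q)\in\{\pm1\}$, we have $\eta^{(i)}_q\sim \cN(0,\sigma^2)$, independent across rows $i$ and independent of the sketch randomness.

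First I would establish a single-row, single-coordinate bound holding with probability strictly greater than $1/2$. From the proof of Lemma~\ref{lem:countsketch-jl-hh}, with $C\geq\max(8Pk, k/(8P\alpha))$, $(E^{(i)}_q)^2 \leq \alpha\|z_{\text{tail}(k)}\|^2/k$ with probability at least $3/4$. By the standard Gaussian tail bound, $(\eta^{(i)}_q)^2\leq \sigma^2$ with probability bounded away from $1/2$ (roughly $0.68$). Since the two events are independent, both hold simultaneously with constant probability $p_0>1/2$, in which case $(a+b)^2\leq 2a^2+2b^2$ yields the per-row bound $(\bar z^{(i)}_q - z_q)^2 \leq 2\alpha\|z_{\text{tail}(k)}\|^2/k + 2\sigma^2$. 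Taking the median across $R = \lceil 2\log(2d/\beta)\rceil$ independent rows and applying the standard median boosting argument (Hoeffding on the indicator of the good event), the median estimator $\bar z_q$ inherits the same deterministic bound with probability at least $1-\beta/(2d)$; a union bound over $q\in[d]$ yields the claimed $\ell_\infty$ guarantee (after absorbing the factor-2 into the stated constants, which the statement implicitly allows since it uses the symbolic quantity $\sigma^2$).

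For the Top-$k$ and Top-$2k$ bounds I would instantiate Lemma~\ref{lem:top-k-lemma} with $\Delta^2 = 2\alpha\|z_{\text{tail}(k)}\|^2/k + \sigma^2$. For Top-$2k$ this is immediate:
\[
\|\hat z - z\|_2^2 \leq 10k\Delta^2 + \|z_{\text{tail}(k)}\|^2 = (1+20\alpha)\|z_{\text{tail}(k)}\|^2 + 10k\sigma^2,
\]
matching the claim. For Top-$k$, plugging in gives $5k\Delta^2 + \|z_{\text{tail}(k)}\|^2 + 2\Delta\sqrt{k}\,\|z_{\text{tail}(k)}\|$. I would handle the cross term $2\Delta\sqrt{k}\,\|z_{\text{tail}(k)}\|$ by splitting it via $\sqrt{a+b}\leq \sqrt{a}+\sqrt{b}$ and AM-GM (grouping the $\sigma$-part into an extra $k\sigma^2 + \|z_{\text{tail}(k)}\|^2$ contribution, and the sketch part into $O(\sqrt{\alpha})\|z_{\text{tail}(k)}\|^2$) to arrive at $2(1+7\sqrt{\alpha})\|z_{\text{tail}(k)}\|^2 + 5k\sigma^2$.

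The main obstacle I expect is bookkeeping constants: ensuring that the probability of the joint single-row event is genuinely bounded away from $1/2$ so that the median boosting yields failure probability $e^{-\Omega(R)}$, and then ensuring the constants in the Top-$k$ simplification line up with the stated $2(1+7\sqrt{\alpha})$ and $5k$. Both are routine but require picking $\alpha$ small enough and tuning the AM-GM split carefully; the high-level structure—per-row decomposition, boost via median, union bound over coordinates, then Lemma~\ref{lem:top-k-lemma}—is clear.
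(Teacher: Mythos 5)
Your high-level plan—decompose $\bar z^{(i)}_q - z_q$ into the collision term and a $\cN(0,\sigma^2)$ term, boost over the $R$ rows via the median, union-bound over coordinates, and then invoke Lemma~\ref{lem:top-k-lemma}—is the same skeleton the paper uses. But the step you flag as ``bookkeeping'' is in fact the crux, and your version of it does not go through.

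Concretely: intersecting $\{(E^{(i)}_q)^2 \le \alpha\|z_{\text{tail}(k)}\|^2/k\}$ (probability $\ge 3/4$) with $\{(\eta^{(i)}_q)^2 \le \sigma^2\}$ (probability $\approx 0.68$) gives per-row success probability $p_0 \approx 0.51$. Hoeffding on the median then yields failure probability $\exp(-2R(p_0-1/2)^2) = \exp(-\Theta(10^{-4})R)$, so the stated $R = \lceil 2\log(2d/\beta)\rceil$ is off by about four orders of magnitude; you cannot get $\exp(-\Omega(R))$ with such a slim margin. Tightening either sub-event to restore the margin (e.g.\ demanding $|\eta^{(i)}_q| \le 2\sigma$) loosens the threshold, which you then cannot undo. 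Separately, even if the boosting worked, the inequality $(a+b)^2 \le 2a^2+2b^2$ gives a per-row bound with $2\sigma^2$, not $\sigma^2$, and the ``absorb the factor 2 into $\sigma^2$'' move is not available: $\sigma^2$ is the actual noise variance fed to Lemma~\ref{lem:top-k-lemma}, and the downstream $5k\sigma^2$ and $10k\sigma^2$ terms would become $10k\sigma^2$ and $20k\sigma^2$. The paper sidesteps both problems by invoking Lemma 3.4 of Pagh--Thorup (\texttt{pagh2022improved}), a tailored result for the median of Gaussian-perturbed count-sketch rows that yields the tight threshold $\Delta^2 = \max\br{2\alpha\|z_{\text{tail}(k)}\|^2/k,\ \sigma^2}$ directly with failure probability $2d\exp(-R\min(1,\Delta^2/\sigma^2)/2)$; that is the missing ingredient, and without it (or a re-derivation of it) the naive two-event union bound cannot match the stated constants. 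The Top-$k$/Top-$2k$ conversions via Lemma~\ref{lem:top-k-lemma} at the end of your argument are fine once the $\ell_\infty$ bound is in hand.
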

\begin{proof}
The proof extends the analysis in \cite{pagh2022improved} which was limited to count-sketch $(P=1)$. Specifically, we apply Lemma 3.4 in \cite{pagh2022improved} plugging in an estimate of
one row error obtained from our sketch. In the proof of Lemma \ref{lem:countsketch-jl-hh}, for $\cols \geq \max\br{8Pk,\frac{k}{8P\alpha}}$, for one row estimate $\bar z^{(i)}$, we have that with probability $\geq 3/4$,
\begin{align*}
    \norm{\bar z^{(i)} - z}^2_{\infty} \leq
    \frac{\alpha \norm{z_{\text{tail}(k)}}^2}{k}
\end{align*}
For a fixed coordinate $q$, let $\bar z_q^{(i)}$ denote its estimate from the $i$-th row of the count sketch. We have,
\begin{align*}
    \bar z_q^{(i)} &= 
    \underbrace{z_q +\frac{1}{P}\sum_{j=1, j\neq q}^d \sum_{p=1}^P s^{(i)}_{p}(j) s^{(i)}_{p}(q) \mathbbm{1}\br{h^{(i)}_{p}(j) = h_{p}^{(i)}(q)}z_j}_{=:A} + \underbrace{\frac{1}{\sqrt{P}}\sum_{p=1}^P 
    s^{(i)}_{p}(q) \xi_{p}^i}_{=:B}
\end{align*}
The reason that the second term has $\sqrt{P}$ in the denominator and not $P$ is because the noise is added after sketching, which itself performs division by $\sqrt{P}$ operation.
Now, $A$ is the original non-private estimate and $B$ is the additional noise. 
Since $\xi_p^j \sim \cN(0, \sigma^2)$ and $s_{p}^{(i)}$ are random signs, the random variable 
$B \sim \cN(0, \sigma^2)$.
Applying Lemma 3.4 from \cite{pagh2022improved} gives us that 
for $\Delta^2 > \frac{\alpha\norm{z_{\text{tail}(k)}}^2}{k}$,
the mean estimate $\bar z$, with probability $1-2d\exp{-\frac{\rows\min\br{1,\frac{\Delta^2}{ \sigma^2}}}{2}}$, satisfies
\begin{align*}
    \norm{\bar z^{(i)} - z}^2_{\infty} \leq \Delta^2
\end{align*}
We set $\Delta^2 = \max\br{\frac{2\alpha\norm{z_{\text{tail}(k)}}^2}{k}, \sigma^2}$ -- note that with this setting, the success probability is at least  $1-2d\exp{-\frac{R}{2}}$. Hence, setting $R = \lceil 2\log{2d/\beta}\rceil$ yields the claimed $\ell_\infty$ guarantee with probability $1-\beta$. For the Top $k$ and Top $2k$ guarantees, we apply Lemma \ref{lem:top-k-lemma} with the above $\Delta$, which yields,
\begin{align*}
    \norm{\tilde z- z}^2 & \leq 5k\br{\frac{2\alpha \norm{z_{\text{tail}(k)}}^2}{k} + \sigma^2}+\norm{z_{\text{tail}(k)}}^2 + 2\sqrt{k}\norm{z_{\text{tail}(k)}}\br{\frac{2\sqrt{\alpha \norm{z_{tail}(k)}}}{\sqrt{k}} + \sigma} \\
    & = \br{1+14\sqrt{\alpha}}\norm{z_{\text{tail}(k)}}^2 + 5k\sigma^2 + 2\sqrt{k} \norm{z_{\text{tail}(k)}}\sigma \\
    & \leq 2\br{1+7\sqrt{\alpha}}\norm{z_{\text{tail}(k)}}^2 + 5k\sigma^2,
\end{align*}
where the last inequality follows from AM-GM inequality.
Similarly, from Lemma \ref{lem:top-k-lemma}, we get
\begin{align*}
     \norm{\hat z- z}^2 \leq \br{1+20\alpha}\norm{z_{\text{tail}(k)}}^2 + 10k\sigma^2,
\end{align*}
which finishes the proof.
\end{proof}

In the following, we give a lower bound on the error of count median-of-means sketch with noise, for any instance.

\begin{lemma}
\label{lem:instance-lower-bound-tail-norm}
Let
$\rows,\cols,\pads$ be parameters of Count-median-of-means sketch $S$ and let $\xi_i = [\xi^1_i, \xi^2_i, \cdots, \xi^R_i]^\top$, where $\xi_i^j \sim \cN(0, \sigma^2\bbI_{PC})$ for all $i \in [n], j \in [R]$.
Consider any dataset $D=\bc{z_1,z_2,\cdots, z_n}$ and let $\tilde  \mu = \text{Top}_k\br{U\br{\frac{1}{n}\sum_{i=1}^nS(z_i)+\xi_i}}$. For 
$\cols \geq 100Pk$,
and $R=\lceil 800\log{2d/\beta}\rceil$
 with probability at least $1-\beta$,
\begin{align*}
    \norm{\tilde \mu - \hat \mu}^2 \geq  \frac{\norm{\hat \mu_{\text{tail}(k)}}^2}{2} +
    0.125\sigma^2k
\end{align*}
\end{lemma}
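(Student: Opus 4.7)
My plan is to split the claim into two separate lower bounds --- one deterministic sparsity bound yielding the $\|\hat\mu_{\mathrm{tail}(k)}\|^2$ term and one high-probability noise bound yielding the $\sigma^2 k$ term --- and then combine them via the trivial inequality $X \geq A \text{ and } X \geq B \implies X \geq A/2 + B/2$.

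Writing $\bar z := U\bigl(\tfrac{1}{n}\sum_{i} S(z_i) + \xi_i\bigr)$, $\eta := \bar z - \hat\mu$, and $I := \mathrm{supp}(\tilde\mu)$ (which has $|I| \leq k$ by construction of $\mathrm{Top}_k$), the $k$-sparsity of $\tilde\mu$ immediately yields the deterministic decomposition
\[
    \|\tilde\mu - \hat\mu\|^2 \;=\; \sum_{j \in I}\eta_j^2 \;+\; \sum_{j \notin I}\hat\mu_j^2.
\]
The first lower bound $\|\tilde\mu - \hat\mu\|^2 \geq \|\hat\mu_{\mathrm{tail}(k)}\|^2$ follows deterministically from $\sum_{j \notin I}\hat\mu_j^2 \geq \min_{|T|=k}\sum_{j \notin T}\hat\mu_j^2 = \|\hat\mu_{\mathrm{tail}(k)}\|^2$, which is just the definition of tail-norm as the best $k$-sparse approximation error.

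For the noise lower bound $\|\tilde\mu - \hat\mu\|^2 \geq 0.25\,\sigma^2 k$, it suffices to show $\sum_{j \in I}\eta_j^2 \geq 0.25\,\sigma^2 k$ with probability at least $1-\beta$. I would write $\eta_j = A_j + B_j$, where $A_j$ is the signal-aliasing cross-term from the count-median-of-means sketch and $B_j$ is the (median-of-Gaussian) noise contribution from $\xi$. The hypotheses $C \geq 100Pk$ and $R = \lceil 800\log(2d/\beta)\rceil$ are exactly the kind of parameters needed to make $|A_j|$ uniformly negligible via the same bucketing + median argument used in the proofs of Lemmas~\ref{lem:countsketch-jl-hh} and~\ref{lem:cs-noise-hh}, so that effectively $\eta_j \approx B_j$. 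To handle the adaptive choice of $I$, I would then compare against the deterministic support $T$ consisting of the top-$k$ coordinates of $\hat\mu$: by definition of $\mathrm{Top}_k$, $\sum_{j\in I}\bar z_j^2 \geq \sum_{j \in T}\bar z_j^2$, and expanding $\bar z_j = \hat\mu_j + \eta_j$ on both sides and rearranging gives
\[
    \sum_{j \in I}\eta_j^2 \;\geq\; \Bigl(\sum_{j \in T}\hat\mu_j^2 - \sum_{j \in I}\hat\mu_j^2\Bigr) + 2\Bigl(\sum_{j \in T}\hat\mu_j\eta_j - \sum_{j \in I}\hat\mu_j\eta_j\Bigr) + \sum_{j \in T}\eta_j^2,
\]
where the first bracket is nonnegative by optimality of $T$, the deterministic cross-term $\sum_{j \in T}\hat\mu_j\eta_j$ is a Gaussian-concentrable quantity of scale $\sigma\|\hat\mu_T\|\sqrt{\log(1/\beta)}$, and the last term is a deterministic $k$-sum of noise squares that is $\Omega(\sigma^2 k)$ w.h.p.\ by a $\chi^2$-type lower-tail bound (or by per-coordinate Gaussian anti-concentration for very small $k$).

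The main obstacle will be controlling the adaptive cross-term $\sum_{j\in I}\hat\mu_j\eta_j$ without a circular dependence on $\|\eta_I\|$: an AM--GM split $2|\sum_{j\in I}\hat\mu_j\eta_j| \leq \tfrac{1}{2}\|\eta_I\|^2 + 2\|\hat\mu_I\|^2$ absorbs a fraction of the target quantity on the left, but introduces a residual $\|\hat\mu_I\|^2$ that can in principle be as large as $\|\hat\mu_{\mathrm{top}(k)}\|^2$. The resolution is a case split: in the signal-dominated regime $I$ essentially coincides with $T$ w.h.p.\ (so the comparison is tight and $\|\hat\mu_I\|^2 \approx \|\hat\mu_T\|^2$), whereas in the noise-dominated regime many coordinates of $I$ lie outside $T$ and their $|\eta_j|$ is forced to be large by $|\bar z_j| \geq |\bar z_{(k)}|$, directly giving $\sum_{j\in I}\eta_j^2 = \Omega(\sigma^2 k)$. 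A secondary subtlety is that $B_j$ is a median of $R$ approximately-Gaussian row estimates rather than a single Gaussian, so one needs order-statistic / quantile arguments (rather than direct Gaussian concentration) to pin down both the negligibility of $A_j$ and the constant-factor $\chi^2$-style lower bound on $\sum_{j\in T}\eta_j^2$; the large calibration constants $100$ and $800$ in the hypothesis appear designed precisely to make these constants work out cleanly into the final $0.125$.
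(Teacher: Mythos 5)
Your decomposition $\|\tilde\mu-\hat\mu\|^2=\sum_{j\in I}\eta_j^2+\sum_{j\notin I}\hat\mu_j^2$ and the deterministic bound $\sum_{j\notin I}\hat\mu_j^2\ge\|\hat\mu_{\mathrm{tail}(k)}\|^2$ match the paper. For the noise term, though, you take the wrong turn, and the comparison against the top-$k$ support $T$ of $\hat\mu$ is a self-inflicted complication. The paper never compares $I$ to $T$ and never needs to control any adaptive cross-term $\sum_{j\in I}\hat\mu_j\eta_j$. Instead it proves a \emph{uniform} per-coordinate anti-concentration statement: with probability at least $1-\beta$, simultaneously for every $q\in[d]$,
\[
(\bar z_q-\hat\mu_q)^2\;\ge\;\varepsilon^2,\qquad
\varepsilon^2=0.25\sigma^2-\frac{4\sigma\sqrt{\alpha}\,\|\hat\mu_{\mathrm{tail}(k)}\|}{\sqrt{k}}.
\]
The proof is a single-row computation $(A^{(i)}_q+B^{(i)}_q)^2\ge B^2-2|A||B|$, where $B\sim\mathcal{N}(0,\sigma^2)$ satisfies $\mathbb{P}(B^2\ge 0.25\sigma^2)\ge 0.6$ and $\mathbb{P}(B^2\le4\sigma^2)\ge 0.95$ by direct CDF evaluation and $|A|$ is controlled by the bucketing event (this is where $C\ge100Pk$ enters), followed by the observation that if a single-row estimate is $\varepsilon$-far from $\hat\mu_q$ with probability $>1/2$, then the median over $R=\lceil 800\log(2d/\beta)\rceil$ rows is $\varepsilon$-far with probability $1-\beta/(2d)$; a union bound over coordinates finishes the claim. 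Once you have this uniform bound, $\sum_{j\in I}\eta_j^2\ge k\varepsilon^2$ holds for \emph{any} size-$k$ set $I$, adaptive or not, and your worry about circularity in $\|\eta_I\|$ evaporates.

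Two further issues. First, the cross-term in $\varepsilon^2$ does not vanish, so you cannot obtain a clean $\sum_{j\in I}\eta_j^2\ge 0.25\sigma^2 k$ and then halve. The paper instead adds the two pieces, obtaining $\|\hat\mu_{\mathrm{tail}(k)}\|^2+0.25\sigma^2 k-4\sigma\sqrt{k\alpha}\|\hat\mu_{\mathrm{tail}(k)}\|$, and case-splits on whether $\|\hat\mu_{\mathrm{tail}(k)}\|$ is large or small relative to $\sigma\sqrt{k\alpha}$; choosing $\alpha=1/128$ makes the two cases exhaustive and yields exactly the constants $1/2$ and $0.125$. Your proposed case split (whether $I$ coincides with $T$) is a different and substantially harder statement to make rigorous. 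Second, the approximation $\eta_j\approx B_j$ is not a valid decomposition: the estimator is $\eta_j=\mathrm{median}_i\{A_j^{(i)}+B_j^{(i)}\}$ and the median of sums is not a sum of medians. The paper sidesteps this by computing the lower bound on the per-row deviation and then only reasoning about the median as a whole; trying to separate the median into a signal part and a noise part as you suggest would fail.
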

\begin{proof}
Since the sketching operation is linear, it suffices to show the result for sketching the mean of the dataset, $\hat \mu: = \frac{1}{n}\sum_{i=1}^n z_i$.
For a fixed coordinate $q$, let $\bar  \mu_q^{(i)}$ denote its estimate from the $i$-th row of the count-median-of-means sketch. We have,
\begin{align*}
    \br{\bar  \mu_q^{(i)} - \hat  \mu_q}^2 &= \br{
    \underbrace{\frac{1}{P}\sum_{j=1, j\neq q}^d \sum_{p=1}^P s^{(i)}_{p}(j) s^{(i)}_{p}(q) \mathbbm{1}\br{h^{(i)}_{p}(j) = h_{p}^{(i)}(q)} \hat \mu_j}_{=:A} + \underbrace{\frac{1}{\sqrt{P}}\sum_{p=1}^P 
    s^{(i)}_{p}(q) \xi_{p}^i}_{=:B}}^2 \\
    &= A^2 +B^2 + 2AB \geq B^2 - 2\abs{A}\abs{B}
\end{align*}
Since $\xi_p^j \sim \cN(0, \sigma^2)$ and $s_{p}^{(i)}$ are random signs, the random variable 
$B \sim \cN(0, \sigma^2)$. 
Using the CDF table for the normal distribution,
we have that with 
\begin{align*}
    \mathbb{P}\br{B^2 \geq 4\sigma^2} =2\mathbb{P}_{X\sim \cN(0,1)}\br{X\geq 1}\leq 0.04
\end{align*}
Similarly,
\begin{align*}
        \mathbb{P}\br{B^2\leq 0.25\sigma^2} 
    = 1-   \mathbb{P}\br{B^2> 0.25\sigma^2}
    =1-2\mathbb{P}_{X\sim \cN(0,1)}\br{X> 0.5} \leq 0.4
\end{align*}
With $C\geq \max\br{100Pk, \frac{k}{8P\alpha}}$ as in the proof of Lemma \ref{lem:countsketch-jl-hh}, $\abs{A}\leq  \frac{\sqrt{\alpha}\norm{\hat \mu}_{\text{tail}(k)}}{\sqrt{k}}$ with probability at least $0.99$. Hence, with probability at least $0.55$, we have that
\begin{align*}
    \br{\bar \mu_q^{(i)} - \hat \mu_q}^2 \geq 0.25\sigma^2 -\frac{ 4    \sigma\sqrt{\alpha} \norm{\hat \mu_{\text{tail}(k)}}}{\sqrt{k}} =: \varepsilon^2
\end{align*}
We now argue amplification for the median: Define $I_q^i = \mathbbm{1}\br{\hat \mu_q \leq \bar \mu_q^i \leq \hat \mu_q + \varepsilon}$. Note that $\mathbb{E}I_q^i \leq 0.45$. Then,
\begin{align*}
    \mathbb{P}\br{\hat \mu_q \leq \bar \mu_q \leq \hat \mu_q + \varepsilon} \leq \mathbb{P}\br{\sum_{i=1}^RI_q^i \geq \frac{R}{2}} \leq \mathbb{P}\br{\sum_{i=1}^R\br{I_q^i -\mathbb{E}I_q^i}  \geq 0.05R} \leq \exp{-\frac{R}{800}} = \frac{\beta}{2d}
\end{align*}
by setting of $R$. Now, similarly, we have 
\begin{align*}
    \mathbb{P}\br{\hat \mu_q \geq \bar \mu_q \geq \hat \mu_q - \varepsilon} \leq \frac{\beta}{2d}
\end{align*}
Combining both gives us that with probability at least $1-\beta$, for all $q \in [d]$
\begin{align}
\label{eqn:countsketch-lower-bound}
 \br{\bar \mu_q  - \hat \mu_q}^2 \geq \varepsilon^2
\end{align}

Now, let 
$\tilde  \mu = \text{Top}_k\br{U\br{\frac{1}{n}\sum_{i=1}^nS(z_i)+\xi_i}}$
and let $\tilde I$ be the set of coordinates achieving its Top $k$. We have,
\begin{align*}
    \norm{\tilde  \mu- \hat \mu}^2  = \norm{ \tilde \mu_{\tilde I} - \hat \mu_{\tilde I} + \hat \mu_{\tilde I^c}}^2 = \norm{\br{\tilde \mu - \hat \mu}_{\tilde I}}^2 + \norm{\hat \mu_{{\tilde I}^c}}^2 \geq \norm{\br{\tilde \mu - \hat \mu}_{\tilde I}}^2 + \norm{\hat \mu_{\text{tail}(k)}}^2
\end{align*}

For the other term, from Eqn. \eqref{eqn:countsketch-lower-bound}, we have
\begin{align*}
    \norm{\br{\tilde \mu - \hat \mu}_{\tilde I}}^2 \geq k\varepsilon^2 
\end{align*}
Combining, we get,
\begin{align*}
    \norm{\tilde \mu - \hat \mu}^2 \geq  \norm{\hat \mu_{\text{tail}(k)}}^2 + k\varepsilon^2 =  \norm{\hat \mu_{\text{tail}(k)}}^2 
    + 0.25\sigma^2k
    - 4    \sigma\sqrt{k}\sqrt{\alpha} \norm{\hat \mu_{\text{tail}(k)}}
\end{align*}
Consider two cases: 

\paragraph{Case 1}: $\norm{ \hat \mu_{\text{tail}(k)}} \geq 8\sigma\sqrt{k}\sqrt{\alpha}$. Note that
\begin{align*}
    \norm{ \hat \mu_{\text{tail}(k)}}^2  -  4    \sigma\sqrt{k}\sqrt{\alpha} \norm{ \hat \mu_{\text{tail}(k)}} \geq \frac{\norm{ \hat \mu_{\text{tail}(k)}}^2}{2} \iff \norm{ \hat \mu_{\text{tail}(k)}} \geq 8\sigma\sqrt{k}\sqrt{\alpha}
\end{align*}
In this case, the bound becomes, 
\begin{align*}
    \norm{\tilde \mu -  \hat \mu}^2 \geq  \frac{\norm{ \hat \mu_{\text{tail}(k)}}^2}{2} + 0.25\sigma^2k
\end{align*}

\paragraph{Case 2}: $\norm{ \hat \mu_{\text{tail}(k)}} \leq \frac{0.125\sigma\sqrt{k}}{4\sqrt{\alpha}}$. Note that
\begin{align*}
  0.25\sigma^2k -  4\sigma\sqrt{k}\sqrt{\alpha} \norm{ \hat \mu_{\text{tail}(k)}} \geq  0.125\sigma^2k \iff \norm{ \hat \mu_{\text{tail}(k)}} \leq \frac{0.125\sigma\sqrt{k}}{4\sqrt{\alpha}}
\end{align*}
In this case, the bound becomes, 
\begin{align*}
    \norm{\tilde  \mu -  \hat \mu}^2 \geq  \norm{ \hat \mu_{\text{tail}(k)}}^2 + 0.125\sigma^2k
\end{align*}
We now set $\alpha$ to make sure the two cases exhaust all the possibilities. In particular, 
\begin{align*}
    8\sigma\sqrt{k}\sqrt{\alpha} = \frac{0.125\sigma\sqrt{k}}{4\sqrt{\alpha}} \iff \alpha = \frac{1}{128}
\end{align*}
Combining, this gives us, 
\begin{align*}
    \norm{\tilde \mu -  \hat \mu}^2 \geq  \frac{\norm{ \hat \mu_{\text{tail}(k)}}^2}{2} + 0.125\sigma^2k
\end{align*}
which completes the proof.
\end{proof}

In the following, we compute the mean-squared error of count-mean sketch with noise, for any instance.
\begin{lemma}
\label{lem:sketching-instance-tightness}
For the count-median-of-means sketching operation described in Section \ref{sec:adaptivity_tail_norm}, with $R=1$ and $\xi_i \sim \cN(0, \sigma^2\bbI_{PC}), i \in [n]$, we have that for any dataset $D=\bc{z_i}_{i=1}^n$ with $z_i\in \bbR^d$
\begin{align*}
    \mathbb{E}\norm{U\br{\frac{1}{n}\sum_{i=1}^n S(z_i)+\xi_i} -z}^2  = \frac{d-1}{PC} \norm{\hat \mu(D)}^2 + d\sigma^2
\end{align*}
\end{lemma}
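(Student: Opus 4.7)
}
The plan is to show this as an \emph{exact} identity (not just an upper bound) by decomposing the error into a sketching-error term and an independent noise term, and computing each in closed form. With $R=1$ the unsketching is the linear map $U(v)=S^\top v$, so
\[
U\!\Big(\tfrac{1}{n}\textstyle\sum_{i=1}^n S(z_i)+\xi\Big) \;=\; S^\top S\,\hat\mu(D)\;+\;S^\top\eta\,,
\]
where $\eta$ is the effective $\cN(0,\sigma^2 I_{PC})$ noise in the aggregate. Since $\eta$ is mean-zero and independent of $S$, the cross term vanishes in expectation, giving a clean split
\[
\mathbb{E}\big\|U(\cdot)-\hat\mu(D)\big\|^2 \;=\; \underbrace{\mathbb{E}\big\|(S^\top S-I)\hat\mu\big\|^2}_{\text{(A) sketching error}} \;+\; \underbrace{\mathbb{E}\big\|S^\top \eta\big\|^2}_{\text{(B) noise error}}.
\]
(This interpretation of the noise scaling is what is needed for the stated $d\sigma^2$ term; the $1/\sqrt P$ normalization in $S$ ensures no extra factors.)

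For (A), I would write $S^\top S = \tfrac{1}{P}\sum_{p=1}^{P} S_p^\top S_p$, where $(S_p^\top S_p)_{qq'}=s_p(q)s_p(q')\mathbb{1}[h_p(q)=h_p(q')]$. The diagonal is identically $1$, so $(S^\top S\hat\mu)_q = \hat\mu_q + E_q$ with
\[
E_q \;=\; \tfrac{1}{P}\sum_{q'\neq q}\sum_{p=1}^{P} s_p(q)s_p(q')\mathbb{1}[h_p(q)=h_p(q')]\hat\mu_{q'}.
\]
Using $\mathbb{E}[s_p(q)s_p(q')]=0$ for $q\neq q'$ and independence across $p$, all cross terms in $\mathbb{E}[E_q^2]$ vanish except the diagonal $q'=q''$, $p=p'$ contributions, yielding $\mathbb{E}[E_q^2]=\tfrac{1}{PC}\sum_{q'\neq q}\hat\mu_{q'}^2=\tfrac{1}{PC}(\|\hat\mu\|^2-\hat\mu_q^2)$. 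Summing over $q\in[d]$ collapses to $\tfrac{d\|\hat\mu\|^2-\|\hat\mu\|^2}{PC}=\tfrac{d-1}{PC}\|\hat\mu\|^2$.

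For (B), since $\eta\sim\cN(0,\sigma^2 I_{PC})$,
\[
\mathbb{E}\|S^\top \eta\|^2 \;=\; \sigma^2\,\mathrm{tr}(S^\top S) \;=\; \tfrac{\sigma^2}{P}\sum_{p=1}^P \mathrm{tr}(S_p^\top S_p) \;=\; \tfrac{\sigma^2}{P}\cdot P\cdot d \;=\; d\sigma^2,
\]
where I used that each $S_p^\top S_p$ has ones on the diagonal (since $s_p(q)^2=1$). Combining (A) and (B) gives the claimed identity.

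The substantive step is (A): I do not merely bound the variance of the count-mean estimator but must evaluate it exactly, so the main technical care is in tracking which terms in the double expansion survive under the pairwise independence of the hash and sign functions—this is where a looser analysis would only deliver an $O(\cdot)$ bound rather than the $(d-1)/(PC)$ constant. Everything else (linearity, decomposition, trace identity for Gaussian noise) is routine.
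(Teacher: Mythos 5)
Your proof is correct and follows essentially the same route as the paper's: use linearity to reduce to sketching $\hat\mu(D)$, split the error into an independent sketching term and noise term, and evaluate each exactly by expanding per-coordinate and invoking pairwise independence of the hash and sign functions. The only (minor, beneficial) difference is that you compute the noise contribution $\mathbb{E}\|S^\top\eta\|^2=\sigma^2\,\mathrm{tr}(S^\top S)=d\sigma^2$ cleanly via the trace identity, whereas the paper writes out a coordinate-wise expansion that arrives at the same value.
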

\begin{proof}
Since the sketching operation is linear, it suffices to show the result for sketching the mean of the dataset, $\hat \mu: = \frac{1}{n}\sum_{i=1}^n z_i$. Further, since $R=1$, the sketching and unsketching operation simplifies and we get,
\begin{align*}
    \mathbb{E}\norm{\br{S^{(1)}}^\top\br{ S^{(1)}\hat \mu+\xi} - \hat \mu}^2 &=  \mathbb{E}\norm{\br{S^{(1)}}^\top S^{(1)}\hat \mu - \hat \mu}^2  + \mathbb{E}\norm{\br{S^{(1)}}^\top S^{(1)} \xi}^2
\end{align*}
For the first term, we look at the error in every coordinate -- recall $h^{(1)}_{j}: [d]\rightarrow [C]$ denotes the bucketing hash function and $C$ and $P$ denotes the number of columns and rows of the Count-mean sketch. We have
\begin{align*}
    \mathbb{E}\br{\br{\br{S^{(1)}}^\top S^{(1)}\hat \mu}_j - \hat \mu_j}^2 &= \frac{1}{P^2}\sum_{j=1}^P \sum_{q=1, q\neq j}^d\mathbb{E} \mathbbm{1}\br{h^{(1)}_{j}(q)=j} \hat \mu_q^2 \\
    &= \frac{1}{PC}\sum_{q=1, q\neq j}^d \hat \mu_q^2
\end{align*}
where in the above, we use that $\mathbb{P}\br{h^{(1)}_{j}(q)=j} = \frac{1}{C}$.
Hence, 
\begin{align*}
     \mathbb{E}\norm{\br{S^{(1)}}^\top S^{(1)}\hat \mu - \hat \mu}^2  = \frac{d-1}{PC}\norm{\hat \mu}^2
\end{align*}
For the other term, by direct computation, we have that,
\begin{align*}
    \mathbb{E}\norm{\br{S^{(1)}}^\top S^{(1)} \xi}^2 &= \sum_{j=1}^d \mathbb{E}\br{\br{S^{(1)}}^\top S^{(1)} \xi}_j^2\\
    &=\sum_{j=1}^d \mathbb{E}\br{\sum_{q=1}^d\br{\br{S^{(1)}}^\top S^{(1)}}_{jq} \xi_q}^2\\
    &=\sum_{j=1}^d \mathbb{E}\br{\br{S^{(1)}}^\top S^{(1)}}_{jj}^2 \xi_j^2\\
    &=\sum_{j=1}^d \mathbb{E}\br{\br{S^{(1)}}^\top S^{(1)}}_{jj}^2 \mathbb{E}\xi_j^2\\
   &=  d\sigma^2 
\end{align*}
where the above uses the fact that the sketching matrix and the Gaussian noise vector are independent and the the variance of the diagonal entries of $\br{S^{(1)}}^\top S^{(1)}$ are $1$.
Combining the above yields the statement in the lemma.
\end{proof}

The following result shows that count median-of-means sketch, (even) with added noise, is an unbiased estimator.
\begin{lemma}
\label{lem:unbiased-count-median-of-means}
For the count-median-of-means sketching and unsketching operations described in Section \ref{sec:adaptivity_tail_norm},
for any $P,R,C\geq 1$, $\sigma^2\geq 0$, any $z\in \bbR^d$, we have that $$\mathbb{E}[U(S(z) + \xi)] = z$$ where $\xi = [\xi^{(1)},\xi^{(2)}, \cdots \xi^{(R)}]^\top$, $\xi^{(i)} \sim \cN(0, \sigma^2\bbI_{PC})$ for $i\in [R]$.
\end{lemma}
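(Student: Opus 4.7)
The plan is to reduce the claim to two observations: (i) each individual row's estimate $\hat z^{(i)} := (S^{(i)})^\top(S^{(i)} z + \xi^{(i)})$ is, componentwise, a random variable whose distribution is symmetric about $z_q$; and (ii) the coordinate-wise median of $R$ independent random variables each symmetric about the same point $z_q$ has mean $z_q$. Since the final estimator $U(S(z)+\xi)_q$ is, by definition, $\mathrm{Median}\{\hat z^{(i)}_q\}_{i=1}^R$, combining (i) and (ii) yields the result.

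For (i), I would fix a coordinate $q \in [d]$ and a row $i \in [R]$, and write out
\[
\hat z^{(i)}_q \;=\; z_q \;+\; \frac{1}{P}\!\!\sum_{j\neq q}\sum_{p=1}^P s^{(i)}_p(j)\,s^{(i)}_p(q)\,\mathbbm{1}[h^{(i)}_p(j)=h^{(i)}_p(q)]\,z_j \;+\; \frac{1}{\sqrt{P}}\sum_{p=1}^P s^{(i)}_p(q)\,\xi^{(i)}_{(p-1)C+h^{(i)}_p(q)},
\]
where the diagonal $j=q$ terms collapse to $z_q$ because $s^{(i)}_p(q)^2=1$. Now consider the distribution-preserving map that negates $s^{(i)}_p(q)$ for every $p\in[P]$ while leaving all other sign hashes, all bucketing hashes, and all Gaussian noise variables untouched. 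Because the sign hashes are independent Rademachers, this map preserves the joint distribution of $(S^{(i)},\xi^{(i)})$. Under it, both the $j\neq q$ sum and the noise sum flip sign, while $z_q$ is fixed. Hence $\hat z^{(i)}_q - z_q \stackrel{d}{=} -(\hat z^{(i)}_q - z_q)$, proving the symmetry.

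For (ii), independence of $\{(S^{(i)},\xi^{(i)})\}_{i=1}^R$ makes $\{\hat z^{(i)}_q\}_{i=1}^R$ i.i.d.\ and symmetric about $z_q$; applying the componentwise symmetry map $\hat z^{(i)}_q - z_q \mapsto -(\hat z^{(i)}_q - z_q)$ jointly across $i$, and using that the median is an odd function of its arguments around the common center $z_q$, gives $\mathrm{Median}\{\hat z^{(i)}_q\} - z_q \stackrel{d}{=} -(\mathrm{Median}\{\hat z^{(i)}_q\} - z_q)$, so the median has mean $z_q$ whenever the expectation exists. Integrability is not an issue since each $\hat z^{(i)}_q$ is Gaussian conditional on the hashes (which take only finitely many values), and hence so is the median of $R$ such variables.

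I do not anticipate a serious obstacle: the only subtlety is being explicit that the sign-flip symmetry applies jointly (and independently per coordinate $q$) so that it negates the full median vector, and that it preserves the joint law of all randomness even in the presence of the Gaussian noise term $\xi^{(i)}$ (which is trivially preserved because it is untouched by the sign flip). With that care, the argument is a short two-step symmetrization.
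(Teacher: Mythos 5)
Your proposal is correct and follows essentially the same route as the paper: decompose the per-row estimate $\hat z^{(i)}_q$ into $z_q$ plus a cross-term error $A^{(i)}_q$ plus a noise term $B^{(i)}_q$, argue that $A^{(i)}_q+B^{(i)}_q$ is symmetric about zero, use independence across $i$ to pass the symmetry through to the joint law, and invoke the odd-function property $\mathrm{Median}(-x_1,\ldots,-x_R)=-\mathrm{Median}(x_1,\ldots,x_R)$ to conclude that the median of the errors has mean zero. The only difference is one of detail: the paper asserts that $A^{(i)}_j$ and $B^{(i)}_j$ are ``symmetric about zero by construction,'' whereas you supply the explicit distribution-preserving sign-flip (negating $s^{(i)}_p(q)$ for all $p$, which fixes $z_q$, negates both $A^{(i)}_q$ and $B^{(i)}_q$, and leaves the joint law of $(S^{(i)},\xi^{(i)})$ invariant), and you also note integrability. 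That added care is harmless and arguably an improvement, but it is the same argument in substance.
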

\begin{proof}
We have that $j$-th co-ordinate is,
$U(S(z) + \xi)_j = \text{Median}\br{\bc{\br{{{S^{(i)}}^\top\br{S^{(i)}z + \xi^{(i)}}}}_j}_{i=1}^R}$.
Let $\overline z^{(i)} = \bc{{{S^{(i)}}^\top\br{S^{(i)}z + \xi^{(i)}}}}$ and let $\overline z = U(S(z) + \xi)$. Note that    $\overline z_j = \text{Median}\br{\bc{\overline z^{(1)}_j, \overline z^{(2)}_j, \cdots \overline z^{(R)}_j}}$.
Further, the $j$-th co-ordinate of $ \bar z^{(i)}$ is,
\begin{align*}
    \bar z_j^{(i)} &= 
    \underbrace{z_j +\frac{1}{P}\sum_{q=1, q\neq j}^d \sum_{p=1}^P s^{(i)}_{p}(q) s^{(i)}_{p}(j) \mathbbm{1}\br{h^{(i)}_{p}(q) = h_{p}^{(i)}(j)}z_q}_{=:A^{(i)}_j} + \underbrace{\frac{1}{\sqrt{P}}\sum_{p=1}^P 
    s^{(i)}_{p}(j) \xi_{p}^{(i)}}_{=:B^{(i)}_j}
\end{align*}
Hence, 
\begin{align*}
    \overline z_j &= \text{Median}\br{\bc{\overline z^{(1)}_j, \overline z^{(2)}_j, \cdots \overline z^{(R)}_j} } = \text{Median}\br{\bc{z_j +  A^{(1)}_j + B^{(1)}_j , z_j +  A^{(2)}_j + B^{(2)}_j, \cdots, z_j +  A^{(R)}_j + B^{(R)}_j} } \\
    & =  z_j + \text{Median}\br{\bc{A^{(1)}_j + B^{(1)}_j , A^{(2)}_j + B^{(2)}_j, \cdots,  A^{(R)}_j + B^{(R)}_j} }
\end{align*}
Thus, $\mathbb{E}[\overline z_j] =  z_j + \mathbb{E}[\text{Median}(\{A^{(1)}_j + B^{(1)}_j , A^{(2)}_j + B^{(2)}_j, \cdots,  A^{(R)}_j + B^{(R)}_j\})]$.
Observe that the random variables $A^{(i)}_j$ and $B^{(i)}_j$ and thus $A^{(i)}_j+B^{(i)}_j$ are symmetric about zero by construction. 
Therefore, 
\begin{align*}
    &\mathbb{E}[\text{Median}(\{A^{(1)}_j + B^{(1)}_j , A^{(2)}_j + B^{(2)}_j, \cdots,  A^{(R)}_j + B^{(R)}_j\})] \\
    &= \mathbb{E}[\text{Median}(\{-A^{(1)}_j - B^{(1)}_j , -A^{(2)}_j - B^{(2)}_j, \cdots,  -A^{(R)}_j - B^{(R)}_j\})]\\
    & = - \mathbb{E}[\text{Median}(\{A^{(1)}_j + B^{(1)}_j , A^{(2)}_j + B^{(2)}_j, \cdots,  A^{(R)}_j + B^{(R)}_j\})]
\end{align*}
Hence, $\mathbb{E}[\text{Median}(\{A^{(1)}_j + B^{(1)}_j , A^{(2)}_j + B^{(2)}_j, \cdots,  A^{(R)}_j + B^{(R)}_j\})] = 0$. Repeating this for all co-ordinates completes the proof.
\end{proof}

In the following, we show that count median-of-means sketch, with added noise, is an unbiased estimator, even when its size is estimated using data. 
\begin{lemma}
\label{lem:unbiased-sketch-size-data}
Let $S:\bbR^d \rightarrow \bbR^{R\times PC}$ and $U:\bbR^{R\times PC} \rightarrow \bbR^d$ be the count-median-of-means sketching and unsketching operations described in Section \ref{sec:adaptivity_tail_norm}.
Let $z  \in \bbR^d$ be a random variable with mean $\mu \in \bbR^d$.
For any functions $f_1, f_2, f_3$ with range $\bbN$, random variable $\tilde \xi, \sigma^2 \geq 0$,  such that sketch-size $P = f_1(z,\tilde \xi), R = f_2(z,\tilde \xi), C=f_3(z,\tilde \xi)$, we have that
$$\mathbb{E}[U(S(z) + \xi)] = \mu$$ where $\xi = [\xi^{(1)},\xi^{(2)}, \cdots \xi^{(R)}]^\top$, $\xi^{(i)} \sim \cN(0, \sigma^2\bbI_{PC})$ for $i\in [R]$.
\end{lemma}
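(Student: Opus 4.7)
\textbf{Proof plan for Lemma \ref{lem:unbiased-sketch-size-data}.}
The plan is to reduce this to Lemma \ref{lem:unbiased-count-median-of-means} by conditioning on the randomness that determines the sketch size. First, I would make explicit the independence assumption implicit in the setup: the hash functions $\{h_p^{(i)}, s_p^{(i)}\}$ defining $S$ and the Gaussian noise $\xi$ are drawn independently of $(z, \tilde\xi)$; the data $(z, \tilde\xi)$ only determines the \emph{dimensions} $P, R, C$ of these objects via $f_1, f_2, f_3$, not their values once the dimensions are fixed.

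Next, condition on $(z, \tilde\xi) = (z_0, \tilde\xi_0)$. This fixes the sketch sizes to deterministic integers $P_0 = f_1(z_0, \tilde\xi_0)$, $R_0 = f_2(z_0, \tilde\xi_0)$, $C_0 = f_3(z_0, \tilde\xi_0)$, and the conditional law of the hashes and of $\xi$ is exactly the law of a fresh count-median-of-means sketch of those sizes together with independent Gaussian noise of variance $\sigma^2$. I can then invoke Lemma \ref{lem:unbiased-count-median-of-means} with the deterministic vector $z_0$ and parameters $(P_0, R_0, C_0)$ to conclude
\begin{equation*}
\mathbb{E}\bigl[U(S(z)+\xi) \,\big|\, z=z_0,\,\tilde\xi = \tilde\xi_0\bigr] = z_0.
\end{equation*}

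Taking the outer expectation over $(z, \tilde\xi)$ and using the tower property gives $\mathbb{E}[U(S(z)+\xi)] = \mathbb{E}[z] = \mu$, as required.

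The only delicate point, and the main thing to state carefully, is that the symmetry argument underlying Lemma \ref{lem:unbiased-count-median-of-means} (the sign hashes $s_p^{(i)}$ and the Gaussian $\xi$ make each $A_j^{(i)} + B_j^{(i)}$ symmetric about zero) continues to hold conditionally on $(z, \tilde\xi)$; this is immediate from independence, since the sign-symmetrization of the hashes and noise does not disturb $(z_0, P_0, R_0, C_0)$. No measurability subtleties arise because $f_1, f_2, f_3$ are deterministic functions, so $P, R, C$ are $\sigma(z, \tilde\xi)$-measurable. Thus the proof is essentially a one-line conditioning argument on top of Lemma \ref{lem:unbiased-count-median-of-means}, and there is no real obstacle beyond being explicit about the independence structure.
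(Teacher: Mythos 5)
Your proposal is correct and is essentially the same as the paper's proof: the paper also conditions on $(z,\tilde\xi)$, invokes Lemma~\ref{lem:unbiased-count-median-of-means} to get that the conditional expectation equals $z$, and then applies the law of total expectation. Your additional remarks about the independence structure and measurability simply make explicit what the paper leaves implicit.
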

\begin{proof}
The proof follows simply from the law of total expectation. We have that,
\begin{align*}
    \mathbb{E}[U(S(z) + \xi)] = \mathbb{E}_{z,\tilde \xi}[ \mathbb{E}_{S, \xi}[U(S(z) + \xi) | \tilde \xi, z] = \mathbb{E}_{z,\tilde \xi}[z| \tilde \xi, z] = \mu
\end{align*}
where the second equality follows from Lemma \ref{lem:unbiased-count-median-of-means}.
\end{proof}

We state the guarantee for AboveThreshold mechanism with the slight modification that we want to stop when the query output is below (as opposed to above) a threshold $\gamma$.
\begin{lemma}
(\cite{dwork2010differential}, Theorem 3.4)
\label{lem:above-threshold}
Given a sequence of $T$ $B$-sensitive queries $\bc{q_t}_{t=1}^T$, dataset $D$ and a threshold $\gamma$, the AboveThreshold mechanism guarantees that with probability at least $1-\beta$,
\begin{enumerate}
    \item If the algorithm halts at time $t$, then $q_t(D) \leq \gamma + \frac{8B\br{\log{T} + \log{2/\beta}}}{\epsilon}$.
    \item If the algorithm doesn't halt at time $t$, then $q_t(D) \geq \gamma - \frac{8B\br{\log{T} + \log{2/\beta}}}{\epsilon}$.
\end{enumerate}
\end{lemma}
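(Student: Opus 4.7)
The plan is to prove the standard accuracy guarantee for the AboveThreshold (here, below-threshold) mechanism directly from tail bounds of the Laplace noises used internally. Recall that, for $\epsilon$-DP with $B$-sensitive queries, AboveThreshold samples $\eta_0 \sim \mathrm{Lap}(2B/\epsilon)$ once for the threshold and, at each step $t$, samples a fresh $\eta_t \sim \mathrm{Lap}(4B/\epsilon)$ for the $t$-th query; it halts the first time $q_t(D) + \eta_t \leq \gamma + \eta_0$. Consequently, for each $t$ we have the two deterministic implications: (i) if the mechanism halts at $t$ then $q_t(D) \leq \gamma + (\eta_0 - \eta_t)$, and (ii) if it does not halt at $t$ then $q_t(D) > \gamma + (\eta_0 - \eta_t)$. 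Both bounds therefore reduce to controlling $|\eta_0 - \eta_t|$ uniformly in $t$.

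I would then invoke the standard Laplace tail bound, $\mathbb{P}[|Y| > s] \leq e^{-s\lambda}$ for $Y \sim \mathrm{Lap}(1/\lambda)$, to argue that with probability at least $1-\beta/2$, $|\eta_0| \leq \frac{2B}{\epsilon}\log\frac{2}{\beta}$, and, by a union bound over the $T$ query-noises $\eta_1,\dots,\eta_T$, with probability at least $1-\beta/2$, $\max_{t\in[T]} |\eta_t| \leq \frac{4B}{\epsilon}\log\frac{2T}{\beta}$. Combining via the triangle inequality yields, on an event of probability at least $1-\beta$,
\begin{equation*}
\max_{t \in [T]} |\eta_0 - \eta_t|
\;\leq\; \frac{2B}{\epsilon}\log\frac{2}{\beta} + \frac{4B}{\epsilon}\log\frac{2T}{\beta}
\;\leq\; \frac{8B(\log T + \log(2/\beta))}{\epsilon}\,,
\end{equation*}
where the last inequality absorbs constants into the factor $8$.

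Plugging this uniform bound back into the two deterministic implications gives exactly the two conclusions in the lemma: on the high-probability event, (i) whenever the mechanism halts at time $t$, $q_t(D) \leq \gamma + \frac{8B(\log T + \log(2/\beta))}{\epsilon}$, and (ii) whenever it has not halted at time $t$, $q_t(D) \geq \gamma - \frac{8B(\log T + \log(2/\beta))}{\epsilon}$. Both statements hold simultaneously because they rely only on the same noise-magnitude event.

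The only non-routine step is choosing the correct accounting of the union bound: one wants the threshold noise controlled at level $\beta/2$ and all $T$ query noises jointly controlled at level $\beta/2$, so each individual $\eta_t$ gets $\beta/(2T)$ slack, which is where the $\log T$ enters. No other part is subtle — linearity and symmetry of Laplace, together with the deterministic halting rule, immediately give the two-sided conclusion. I do not expect any real obstacle; the main bookkeeping is just verifying that the constants and the asymmetric noise scales ($2B/\epsilon$ vs.\ $4B/\epsilon$) combine into the stated $8B/\epsilon$ prefactor.
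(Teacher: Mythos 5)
Your proof is correct, but note that the paper does not prove this lemma at all: it imports it by citation (it is the standard accuracy guarantee for AboveThreshold, Theorem 3.24 in Dwork and Roth, adapted to the halt-when-below-threshold variant), so there is no in-paper argument to compare against. Your derivation is exactly the standard one — deterministic reduction of both claims to the event $\max_t |\eta_0-\eta_t|\leq\alpha$, followed by Laplace tails and a union bound — and the constant bookkeeping checks out, since $\frac{2B}{\epsilon}\log\frac{2}{\beta}+\frac{4B}{\epsilon}\log\frac{2T}{\beta}=\frac{4B\log T+6B\log(2/\beta)}{\epsilon}\leq\frac{8B(\log T+\log(2/\beta))}{\epsilon}$.
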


\subsection{Proof of Theorem \ref{thm:dp-mean-estimation-upper-bound-1}}

We start with the privacy analysis. Note that in the first round, only the second sketch is used, and in the second round, only the first sketch is used. In both cases, the clipping operation ensures that the sensitivity is bounded.
To elaborate, in the first round, as in \cite{chen2022fundamental}, the sensitivity of $\nu_1 = \frac{1}{n}\sum_{c=1}^nQ_1^{(c), {\text{clipped}}}$ is at most $\frac{2B}{n}$. Similarly, in the second round, let $\tilde \nu_2$ and $\tilde \nu_2'$ be the norm estimates on neighbouring datasets. The sensitivity is bounded as,
\begin{align*}
    \abs{\norm{\tilde \nu_2}_2 -  \norm{\tilde \nu_2'}_2} \leq \norm{\tilde \nu_2-\tilde \nu_2'}_2 = \frac{1}{n}\norm{\text{clip}_B(\tilde Q_2^{(1))} - \text{clip}_B(\tilde Q_2^{'(1)})}_2 \leq \frac{2B}{n}
\end{align*}

Hence, using the guarantees of Gaussian and Laplace mechanism \cite{dwork2014algorithmic}, with the stated noise variances, the first two rounds satisfy $\br{\frac{\epsilon}{2},\delta}$ and $\br{\frac{\epsilon}{2},0}$-DP respectively. Finally, applying standard composition, we have that the algorithm satisfies $(\epsilon,\delta)$-DP.

The unbiasedness claim follows from linearity of expectation and  \cref{lem:unbiased-sketch-size-data} wherein $\tilde \xi$ is the random second sketch $\tilde S$ used to set $C$ in the sketch size.

We now proceed to the utility analysis.
Let $\hat \mu_1$ and $\hat \mu_2$ denote that empirical means of clients data sampled in the first and second rounds. From concentration of empirical mean of i.i.d bounded random vectors (see Lemma 1 in \cite{jin2019short}), we have that with probability at least $1-\frac{\beta}{4}$, for $j \in \bc{1,2}$, 
\begin{align}
\label{eqn:thm-one-vec-concentration}
    \norm{\hat \mu_j - \mu}^2 \leq \frac{2B^2\log{16/\beta}}{n}
\end{align}

Now, from linearity and $\ell_2$-approximation property of CountSketch with the prescribed sketch size (see Lemma \ref{lem:countsketch-jl-hh}),
we have that with probability $1-\frac{\beta}{4}$
\begin{align}
\label{eqn:norm-estimate-approx}
    \bar n_1^2 = \norm{\tilde S_1\br{\emean_1}}^2 \in \left[\frac{1}{2},\frac{3}{2}\right]\norm{\emean_1}^2.
\end{align}

Define $\bar n_1 := \norm{\tilde \nu_1}$, $\hat n_1 = \text{clip}_B(\bar n_1) +\text{Laplace}(\tilde \sigma)$, and $\tilde n_1:= \hat n_1 + \overline \gamma$.
Note that from the setting of $B$ and the above $\ell_2$-approximation guarantee, with probability at least $1-\frac{\beta}{4}$, no clipping occurs.
Further, from concentration of Laplace random variables (see Fact 3.7 in \cite{dwork2014algorithmic}), we have that with probability at least $1-\beta/4$,

\begin{align}
\label{eqn:thm-one-concentration-of-norm-estimate}
\abs{\hat n_1 -\overline n_1} \leq \tilde \sigma\log{8/\beta} = \frac{2B\log{8/\beta}}{n\epsilon}
\end{align}

Therefore, with probability at least $1-\frac{3\beta}{4}$, we have,
\begin{align}
\nonumber
    \tilde n_1 = \hat n_1 + \overline \gamma &\geq \overline  n_1 - \abs{\overline n_1 - \hat n_1} + \overline \gamma \geq \frac{1}{2}\norm{\hat \mu_1} - \frac{2B\log{8/\beta}}{n\epsilon} + \overline \gamma \\
    & \geq \frac{1}{2}\norm{\hat \mu_2} - \frac{1}{2}\norm{\hat \mu_1 - \hat \mu_2}  - \frac{2B\log{8/\beta}}{n\epsilon} + \overline \gamma \geq \frac{1}{2}\norm{\hat \mu_2}
    \label{eqn:norm-adapt-n2-bound}
\end{align}

where the last inequality follows from the setting of $\overline \gamma$.
We now decompose the error of the output $\overline \mu :=\omean_2$ as,
\begin{align*}
    \norm{\overline \mu - \mu}^2 \leq 2\norm{\overline \mu - \hat \mu_2}^2 + 2\norm{\hat \mu_2 - \mu}^2 = 2\norm{\overline \mu - \hat \mu_2}^2 + \frac{4B^2\log{16/\beta}}{n},
\end{align*}
where the last inequality holds with probability at least $1-\beta/4$ from Eqn. \eqref{eqn:thm-one-vec-concentration}.
Let $\xi_2 \sim \cN(0,\sigma^2\bbI_{PC})$ be the Gaussian noise added to the sketch.
For the first term above, 
\begin{align}
\nonumber
    \norm{\overline \mu - \mu_2}^2 \leq \norm{U_2(\nu_2) - \hat \mu_2}^2 &= \norm{U_2\br{\frac{1}{n}\sum_{c=1}^n \text{clip}_B(Q_2^{(c)})+\xi_2}-\hat \mu_2}^2 \\ 
    \nonumber
   &= \norm{U_2\br{\frac{1}{n}\sum_{c=1}^n Q_2^{(c)}+\xi_2}-\hat \mu_2}^2
\end{align}
where the last equality follows from the $\ell_2$-approximation property of count sketch (Lemma \ref{lem:countsketch-jl-hh}) which, with the setting of $B$, ensures that no clipping occurs with probability at least $1-\beta/4$.

Now, since we are only using one row of the sketch, the sketching and unsketching operations simplify to yield,
\begin{align*}
    \norm{\overline \mu - \mu}^2 &= \norm{\br{S_2}^{\top}\br{S_2\hat \mu_2 +\xi_2}-\hat \mu_2}^2.
\end{align*}

We 
now use Lemma \ref{lem:sketching-instance-tightness} to get, \begin{align*}
   &\mathbb{E}\norm{\overline \mu - \hat \mu_2}^2 = \mathbb{E}\left[
    \frac{d-1}{PC_2}\norm{\hat \mu_2}^2\right] + \frac{d\sigma^2}{n^2} \\
   &= \frac{d-1}{P}\mathbb{E}\left[
    \frac{\norm{\hat \mu_2}^2}{C_2} \mathbb{P}\br{C_2\geq \min\br{\frac{n^2\epsilon^2 }{\log{1/\delta}},nd}\frac{\norm{\hat \mu_2}^2}{2PG^2}}
    +\frac{\norm{\hat \mu_2}^2}{C_2} \mathbb{P}\br{C_2> \min\br{\frac{n^2\epsilon^2 }{\log{1/\delta}},nd}\frac{\norm{\hat \mu_2}^2}{2PG^2}}
    \right] + \frac{d\sigma^2}{n^2}  \\
    & \leq \frac{G^2}{n}+ \frac{dG^2\log{1/\delta}}{n^2\epsilon^2} + d\beta G^2 + \frac{d\sigma^2}{n^2} \\
    & = O\br{\frac{G^2}{n}+\frac{G^2\log{1/\delta}}{n^2\epsilon^2}} 
\end{align*}

where the first inequality follows from the setting of $C_2$ and Eqn \eqref{eqn:norm-adapt-n2-bound} which gives us that $C_2
\geq
\min\br{\frac{n^2\epsilon^2 }{\log{1/\delta}},nd}\frac{\norm{\hat \mu_2}^2}{2PG^2}$ with probability at least $1-\frac{3\beta}{4}$ and that $C_2, P\geq 1$; the last inequality follows from setting of $\beta \leq
\frac{\log{1/\delta}}{n^2\epsilon^2}$.

The communication complexity is $ \tilde P \tilde C + PC_2$.
Note that 
$C_2 =\max\br{\min\br{\frac{n^2\epsilon^2}{\log{1/\delta}},nd}\frac{\tilde n_1^2}{G^2P}, 2}$
and,
\begin{align*}
    \tilde n_1 &=\hat n_1+ \overline\gamma \leq \overline n_1 + \abs{\overline n_1 - \hat n_1}+ \overline\gamma \\
    & \leq \frac{3}{2}\norm{\hat \mu_1} + \frac{2B\log{8/\beta}}{n\epsilon} + \overline \gamma \\
    &\leq \frac{3}{2}\norm{\mu} + \frac{3}{2}\norm{\hat \mu_1 - \mu} + \frac{2B\log{8/\beta}}{n\epsilon} + \overline \gamma \\
    &\leq \frac{3}{2}MG + \frac{3\sqrt{B\log{16/\beta}}}{\sqrt{n}} + \frac{2B\log{8/\beta}}{n\epsilon} + \overline \gamma \leq \frac{3}{2}MG + 4\overline \gamma
\end{align*}
where the first and third inequality holds with probability at least $1-\frac{3\beta}{4}$ from Eqn. \eqref{eqn:thm-one-vec-concentration}
\eqref{eqn:norm-estimate-approx} and
\eqref{eqn:thm-one-concentration-of-norm-estimate}, and the last inequality follows from setting of $\overline \gamma$.

This gives a total communication complexity of $  \max\br{4P, 16\min\br{\frac{n^2\epsilon^2 }{\log{1/\delta}},nd}
\frac{M^2G^2+\overline \gamma^2}{G^2}}$ with probability at least $1-\beta$. Plugging in the values of $P$ and $\overline \gamma$ gives the claimed statement.

\subsection{Proof of \texorpdfstring{\cref{thm:dp-mean-estimation-upper-bound-2}}{}}
We start with the privacy analysis. There are two steps in each round of interaction which accesses data: sketching the vectors and error estimate. For the first access, from the clipping operation, the $\ell_2$ sensitivity of each row of the combined sketch $\nu_j = \frac{1}{n}\sum_{c=1}^{n}\text{clip}_B(Q^{(c)}_j)$ is bounded by $\frac{2B}{n}$. 
We apply the Gaussian mechanism guarantee in terms of  R\'enyi Differential Privacy (RDP) \cite{mironov2017renyi} together with composition over rows and number of rounds. 
Converting the RDP guarantee to approximate DP guarantee gives us that
with the stated noise variance, the quantity $\bc{\nu_{j}}_{j=1}^{\lfloor\log{d}\rfloor}$ satisfies $\br{\frac{\epsilon}{2},\delta}$-DP. For the second, we compute the $\ell_2$ sensitivity of error estimate as follows. Given two neighbouring datasets $D$ and $D'$ such that w.l.o.g. the datapoint $z_1$ in $D$ is replaced by $z_1'$ in $D'$. 
Let $\tilde \nu_j$ and $\tilde \nu_j$ be the quantities for $D$ and $D'$ respectively. Since $\omean_j$ is private, we fix it and compute sensitivity as, 
\begin{align*}
    \abs{\norm{\tilde S_j(\omean_j) - \bar \nu_j} - \norm{\tilde S_j(\omean_j) - \bar \nu_j'}} &\leq \norm{\bar \nu_j - \bar \nu_j'} \\
    & \leq  \frac{1}{n}\sum_{i=1}^{\tilde R}\norm{\text{clip}_B(\tilde S_j^{(i)} (z_1)) - \text{clip}_B(\tilde S_j^{(i)} (z_1'))} \\
    & \leq \frac{2B\tilde R}{n} = \frac{2B}{n}
\end{align*}
where the first and second steps follow from triangle inequality, the third from the fact that clipped vectors have norm at most $B$ and the last from setting of $\tilde R$.
From the AboveThreshold guarantee (Lemma \ref{lem:above-threshold}),
 the prescribed settings of noise added to the threshold, of standard deviation $\tilde \sigma$ and to the query $\overline e_j$, of standard deviation $2\tilde \sigma$,
 imply the error estimation steps satisfies $\br{\frac{\epsilon}{2},0}$-DP. 
 We remark that in our case, the threshold $\overline \gamma_j$ changes for $j$-th query, but in standard AboveThreshold, the threshold is fixed. However, note that $\overline \gamma_j = \overline\gamma \text{(some  fixed value)} + 16\sqrt{k_j}\sigma \text{(changing)}$. This changing part of the threshold can be absorbed in the query itself, without changing its sensitivity, thereby reducing it to standard AboveThrehsold with fixed threshold.
 Finally, combining the above DP guarantees using basic composition of differential privacy gives us $(\epsilon,\delta)$-DP.

The unbiasedness claim follows from linearity of expectation and  \cref{lem:unbiased-sketch-size-data} wherein $\tilde \xi$ is the random second sketch $\tilde S$ used to set $C$ in the sketch size.

We now proceed to the utility proof.  
The proof consists of two parts. First, we show that when the algorithm stops, it guarantees that the error of the output is small. Then, we give a high probability bound on the stopping time.

We start with the first part.
Recall that $\mu$ is the true mean and let $\hat \mu_j$ denote the empirical mean of the cohort selected in step $j$ of the algorithm.
We first decompose the error into statistical and empirical error as follows; for any $j \in \lfloor \log{d} \rfloor$
\begin{align}
\label{eqn:proof-mean-estimation-decompostion}
    \norm{\bar \mu - \mu}^2  \leq 2 \norm{\hat \mu_j -  \mu}^2 + 2\norm{\bar \mu - \hat \mu_j}^2
\end{align}

We bound the first term $\norm{\hat \mu_j - \mu}^2$ by standard concentration arguments (see Lemma 1 in \cite{jin2019short}).
Specifically, with probability at least $1-\frac{\beta}{4}$, for all $j \in \lfloor \log{d} \rfloor$, we have,
\begin{align}
\label{eqn:vector-concentration}
    \norm{\hat \mu_j -  \mu}^2 \leq \frac{2B^2\log{8\log{d}/\beta}}{n}
\end{align}

We now bound the second term in Eqn. \eqref{eqn:proof-mean-estimation-decompostion}.
Let $\hat e_j$ be the error in sketching with $S_j$, defined as, $\hat e_j = \norm{\omean_j - \emean_j}$.
Note that $\bar e_j$, defined in Algorithm \ref{alg:dp-mean-estimation-adapt-server_adapt_tail}, is an estimate of $\hat e_j$. Specifically, fixing the random $\bc{S_j}_{j}$, from linearity and $\ell_2$-approximation property of CountSketch with the prescribed sketch size (see Lemma \ref{lem:countsketch-jl-hh})
we have that with probability $1-\frac{\beta}{4}$, for all $j \in \lfloor \log{d} \rfloor$, we have,
\begin{align}
\label{eqn:error-estimate-approx}
    \bar e_j^2 = \norm{\tilde S_j\br{\omean_j} - \tilde \nu_j}^2 = \norm{\tilde S_j\br{\omean_j - \hat \mu_j}}^2 \in \left[\frac{1}{2},\frac{3}{2}\right]\norm{\omean_j - \emean_j}^2 = \left[\frac{1}{2},\frac{3}{2}\right]\hat e_j^2
\end{align}
Let $\hat j$ be the guess on which the algorithm stops.
Using the utility guarantee of AboveThreshold mechanism 
(Lemma \ref{lem:above-threshold}),
with probability at least $1-\frac{\beta}{4}$,
\begin{align}
\label{eqn:above-th-error}
    \bar e_{\hat j} \leq \overline \gamma_{\hat j} +\tilde \alpha
\end{align}
where $\tilde \alpha = \frac{32B\br{\log{\lfloor \log{d} \rfloor} + \log{8/\beta}}}{n\epsilon}$
are as stated in the Theorem statement.
Combining Eqn. \eqref{eqn:error-estimate-approx} and Eqn. \eqref{eqn:above-th-error}, we have that when the algorithm halts,
with probability at least $1-\frac{\beta}{4}$,
\begin{align*}
    \bar e_{\hat j}^2 \leq \br{\overline \gamma_{\hat j}+\tilde \alpha}^2 \implies \hat e_{\hat j}^2 \leq 3\br{\overline \gamma_{\hat j}+\alpha}^2
\end{align*}
To compute the error of the output $\overline \mu$, from Eqn. \eqref{eqn:proof-mean-estimation-decompostion}, and above, we have that with probability at least $1-\beta$,
\begin{align}
\label{eqn:error-bound-to-start-proof-of-thm3}
    \norm{\bar \mu - \mu}^2 &\leq \frac{16G^2\log{8\log{d}/\beta}}{n}+ 6\br{\overline \gamma_{\hat j}+\tilde \alpha}^2 \\
    \label{eqn:error-bound-thm2}
& 
= O\br{\frac{G^2\log{8\log{d}/\beta}}{n} +
\frac{dG^2\log{1/\delta}\br{\log{8d/\beta}}^3}{n^2\epsilon^2}+
\frac{G^2\br{\log{\lfloor \log{d} \rfloor} + \log{8/\beta}}^2}{n^2\epsilon^2}}
\end{align}

We now give a high-probability bound on the stopping time.
Given a vector $z \in \bbR^d$, define $\norm{z_\text{tail(k)}}_2
    = \min_{\tilde z: \norm{\tilde z}_0\leq k
    }\norm{z-\tilde z}_2$.
The error is bounded as,
\begin{align}
    \hat e_j^2  &= \norm{\text{Top}_{{k_j}}\br{U_j
    \br{\frac{1}{n}\sum_{c=1}^n     \text{clip}_{B}(Q_j^{(c)})}} 
    - \emean_j}^2 \\
    &= \norm{U_j
    \br{\frac{1}{n}\sum_{c=1}^n     Q_j^{(c)}} 
    - \emean_j}^2 \\
    & \leq  d\norm{U_j(S_j(\emean_j) + \xi_j) - \emean_j}_\infty^2  \\
    &\leq 
    32\frac{d}{k_j}\norm{\br{\hat \mu_j}_{\text{tail}({k_j})}}^2 +
    \frac{10d\sigma^2}{n^2}
\end{align}
where the second equality holds using the JL property in Lemma \ref{lem:countsketch-jl-hh} which shows that with probability at least $1-\frac{\beta}{4}$,
norms of all vectors are preserved upto a relative tolerance $\in [\frac{1}{2},\frac{3}{2}]$, hence no clipping is done in all rounds.
The bound on the second term follows from the sketch recovery guarantee in Lemma \ref{lem:cs-noise-hh} (with $\alpha =1$ in the lemma).

Let $\gamma^2 = \max\br{\frac{1}{n},\frac{\sigma^2d}{n^2G^2}}$
and $g(k) = \frac{\gamma^2}{d} k-\frac{8\log{8\log{d}/\beta}}{n}$. 
By construction, our strategy 
uses guesses $\underline j$ and $\overline j$ such that $2^{\underline j} \leq k_{\text{tail}}(g(k);\mu) \leq 2^{\overline j} $, and $k_{\overline j} = 2^{\overline j}  = 2\cdot 2^{\underline j} \leq 2 k_{\text{tail}}(g(k);\mu)$.

Let $I$ be the set of indices corresponding tail$(k_{\text{tail}}(g(k);\mu))$ of $\mu$ 
i.e. the $d-k_{\text{tail}}(g(k);\mu)$ smallest coordinates of $\mu$  
and let $\mu_I$ be a vector such that $(\mu_I)_i = \mu_i$ if $i \in I$, otherwise $(\mu_I)_i=0$.
From monotonicity of the errors, we have that 
\begin{align*}
      \norm{\br{\hat \mu_j}_{\text{tail}(k(\overline j)}}^2\leq \norm{\br{\hat \mu_j}_{\text{tail}(\abs{I})}}^2
    \leq \norm{\br{\hat \mu_j}_{I}}^2 &\leq 2\norm{\br{\emean_j}_I - \mu_I}^2 + 2\norm{\pmean_{\text{tail}(k_{\text{tail}}(g(k);\mu))}}^2 \\ 
    & \leq 2\norm{\br{\emean_j -  \pmean}_I}^2 + 
    2\norm{\pmean_{\text{tail}(k_{\text{tail}}(g(k);\mu))}}^2 \\
    & \leq 2\norm{\emean_j -  \mu}^2 + 
    2\norm{\pmean_{\text{tail}(k_{\text{tail}}(g(k);\mu))}}^2 \\
    & \leq \frac{4B^2\log{8\log{d}/\beta}}{n} + 
    2\norm{\pmean_{\text{tail}(k_{\text{tail}}(g(k);\mu))}}^2  \\
    & \leq
    \frac{2\gamma^2G^2 k_{\text{tail}}(g(k);\mu)}{d}
\end{align*}
where the second last inequality follows from vector concentration bound in Eqn. \eqref{eqn:vector-concentration} which holds with probability at least $1-\frac{\beta}{4}$ for all $j \in \lfloor \log{d} \rfloor$, and the last inequality holds from the definition of $g(k)$.

From a union bound over the success of sketching and error estimation and using Eqn. \eqref{eqn:error-estimate-approx}, 
with probability at least $1-\beta$, 
\begin{align}
    \nonumber
    \hat e_{\overline j}^2 \leq 
    32\frac{d}{k_{\overline j}}\norm{\br{\hat \mu_j}_{\text{tail}(k(\overline j))}}^2 + \frac{10\sigma^2d}{n^2}
    \implies \bar e_{\overline j}^2 &\leq \frac{3}{2}\hat e_{\overline j}^2 \\
    \nonumber
    &\leq 48\frac{d}{k_{\overline j}} \norm{\br{\hat \mu_j}_{\text{tail}(k(\overline j))}}^2  + \frac{15d\sigma^2}{n^2} \\
    \label{eqn:bound-on-error}
    & \leq
    96\gamma^2G^2
    + \frac{15 d\sigma^2}{n^2} \leq 121 \gamma^2 G^2
\end{align}

Finally, from the Above Threshold guarantee (using contra-positive of second part of Lemma \ref{lem:above-threshold}), since 
$\bar e_{\overline j} \leq 
\overline \gamma -\tilde \alpha = 
15\gamma G$,
with probability at least $1-\frac{\beta}{4}$, it halts.

The communication complexity now follows by the setting of the two sketch sizes.
The total communication for sketches $\bc{S_j}_j$ is
\begin{align}
    \label{eqn:comm-one}
    \sum_{j=1}^{\overline j} \rows \pads \cols_j &\leq 
    \sum_{j=1}^{\overline j} \br{8RP^2 +\frac{R}{4}}2^j
    = \Theta\br{k_{\text{tail}}(g(k);\mu)\log{8d\log{d}/\beta}\text{log}^2\br{16\log{8d\log{d}/\beta}\log{d}/\beta}}
\end{align}
where we use that $\overline j\leq 2k_{\text{tail}}(g(k);\mu)$, with holds with probability at least $1-\beta$, as argued above.
Similarly, the total communication for sketches $\bc{\tilde S_j}_j$, is 
\begin{align}
  \label{eqn:comm-two}
    \sum_{j=1}^{\overline j} \tilde \rows \tilde \cols \tilde \pads = \overline j 2\tilde P^2 
    = \Theta\br{\log{k_{\text{tail}}(g(k);\mu)}\log{4d\log{d}/\beta}}
\end{align}

The total communication is the sum of the two terms in Eqn. \eqref{eqn:comm-one} and \eqref{eqn:comm-two} which is dominated by the first term.
This completes the proof.

\subsection{Proof of \texorpdfstring{\cref{thm:dp-mean-estimation-upper-bound-3}}{}}

The privacy analysis follows as in \cref{thm:dp-mean-estimation-upper-bound-2}.
However, we note that in our application of AboveThreshold here, the threshold $\overline \gamma_j$ changes for $j$-th query, but in standard AboveThreshold, the threshold is fixed. 
In our case, $\overline \gamma_j = \overline\gamma \text{(\textit{some  fixed value})} + 16\sqrt{k_j}\sigma \text{(\textit{changing})}$ -- this \textit{changing} part of the threshold can be absorbed in the query itself, without changing its sensitivity, thereby reducing it to standard AboveThrehsold with fixed threshold.

We now proceed to the utility proof, which again consists of two parts.
First, we show that when the algorithm stops, it guarantees that the error of the output is small. Secondly, we give a high probability bound on the stopping time.

We start with the first part.
The proof is identical to that of \cref{thm:dp-mean-estimation-upper-bound-3} up to Eqn. \eqref{eqn:error-bound-to-start-proof-of-thm3}.
Recall that $\mu$ is the true mean and $\hat \mu_j$ denotes the empirical mean of the cohort selected in step $j$ of the algorithm.
Let ${k_j} = 2^j$ and $\hat e_j$ be the error in sketching with $S_j$, defined as, $\hat e_j = \norm{\omean_j - \emean_j}$.
Further, let $\hat j$ be the guess on which the algorithm stops.
From Eqn. \eqref{eqn:error-bound-to-start-proof-of-thm3} in \cref{thm:dp-mean-estimation-upper-bound-3},
With probability at least $1-\beta$,
\begin{align}
\nonumber
    \norm{\bar \mu - \mu}^2 &\leq \frac{16G^2\log{8\log{d}/\beta}}{n}+ 6\br{\overline \gamma_{\hat j}+2\tilde \alpha}^2 \\
    \label{eqn:error-bound-thm3}
   & \leq \frac{16G^2\log{8\log{d}/\beta}}{n}+ 6\br{16\br{\gamma G+ \frac{G\sqrt{\log{8\log{d}/\beta}}}{\sqrt{n}}
+\sqrt{k_{\hat j}}\frac{\sigma}{n}}+2\tilde \alpha}^2 
\end{align}
We now give a high probability bound on $\hat j$, where the algorithm stops, which we will plug-in in the above bound.
Given a vector $z \in \bbR^d$, define $\norm{z_\text{tail(k)}}_2
    = \min_{\tilde z: \norm{\tilde z}_0\leq k
    }\norm{z-\tilde z}_2$.
The error is bounded as,
\begin{align*}
    \hat e_j^2  &= \norm{\text{Top}_{{k_j}}\br{U_j
    \br{\frac{1}{n}\sum_{c=1}^n     \text{clip}_B(Q_j^{(c)})}} 
    - \emean_j}^2 \\
    &= \norm{\text{Top}_{{k_j}}\br{U_j
    \br{\frac{1}{n}\sum_{c=1}^n     Q_j^{(c)}}}
    - \emean_j}^2 \\
    & =  \norm{\text{Top}_{{k_j}}\br{U_j(S_j(\emean_j) + \xi_j)} - \emean_j}^2  \\
    &\leq 
    32\norm{\br{\hat \mu_j}_{\text{tail}({k_j})}}^2 +
    \frac{10\sigma^2{k_j}}{n^2}
\end{align*}
where the second equality holds using the JL property in Lemma \ref{lem:countsketch-jl-hh} which shows that with probability at least $1-\frac{\beta}{4}$,
norms of all vectors are preserved upto a relative tolerance $\in [\frac{1}{2},\frac{3}{2}]$, hence no clipping is done in all rounds.
The bound on the second term follows from the sketch recovery guarantee in Lemma \ref{lem:cs-noise-hh} (with $\alpha =1$ in the lemma).

By construction, our strategy 
uses guesses $\underline j$ and $\overline j$ such that $2^{\underline j} \leq k_{\text{tail}}(\gamma^2;\mu) \leq 2^{\overline j} $, and $k_{\overline j} = 2^{\overline j}  = 2\cdot 2^{\underline j} \leq 2 k_{\text{tail}}(\gamma^2;\mu)$.

Let $I$ be the set of indices corresponding tail$(k_{\text{tail}}(\gamma^2;\mu))$ of $\mu$ i.e. the $d-k_{\text{tail}}(\gamma^2;\mu)$ smallest coordinates of $\mu$  and let $\mu_I$ be a vector such that $(\mu_I)_i = \mu_i$ if $i \in I$, otherwise $(\mu_I)_i=0$.
We have that 
\begin{align*}
      \norm{\br{\hat \mu_j}_{\text{tail}(k(\overline j)}}^2\leq \norm{\br{\hat \mu_j}_{\text{tail}(\abs{I})}}^2
    \leq \norm{\br{\hat \mu_j}_{I}}^2 &\leq 2\norm{\br{\emean_j}_I - \mu_I}^2 + 2\norm{\pmean_{\text{tail}(k_{\text{tail}}(\gamma^2;\mu))}}^2 \\ 
    & \leq 2\norm{\br{\emean_j -  \pmean}_I}^2 + 
    2\norm{\pmean_{\text{tail}(k_{\text{tail}}(\gamma^2;\mu))}}^2 \\
    & \leq 2\norm{\emean_j -  \mu}^2 + 
    2\norm{\pmean_{\text{tail}(k_{\text{tail}}(\gamma^2;\mu))}}^2 \\
    & \leq \frac{4B^2\log{8\log{d}/\beta}}{n} + 
    2\norm{\pmean_{\text{tail}(k_{\text{tail}}(\gamma^2;\mu))}}^2
\end{align*}
where the second last inequality follows from vector concentration bound in Eqn. \eqref{eqn:vector-concentration} which holds with probability at least $1-\frac{\beta}{4}$ for all $j \in \lfloor \log{d} \rfloor$.

From a union bound over the success of sketching and error estimation and using Eqn. \eqref{eqn:error-estimate-approx}, 
with probability at least $1-\beta$, 
\begin{align}
    \nonumber
    \hat e_{\overline j}^2 \leq 
    32\norm{\br{\hat \mu_j}_{\text{tail}(k(\overline j))}}^2 + \frac{10\sigma^2k_{\overline j}}{n^2}
    \implies \bar e_{\overline j}^2 &\leq \frac{3}{2}\hat e_{\overline j}^2 \\
    \nonumber
    &\leq 48 \norm{\br{\hat \mu_j}_{\text{tail}(k(\overline j))}}^2  + \frac{15k_{\overline j}\sigma^2}{n^2} \\
    \nonumber
    &\leq  48\norm{\br{\hat \mu_j}_{\text{tail}(\abs{I})}}^2 + 
    \frac{15k_{\overline j}\sigma^2}{n^2} \\
    \label{eqn:bound-on-error-thm3}
    & \leq \frac{196B^2\log{8\log{d}/\beta}}{n} + 
    96\norm{\pmean_{\text{tail}(k_{\text{tail}}(\gamma^2;\mu))}}^2 + 
    \frac{15k_{\overline j}\sigma^2}{n^2}
\end{align}
This gives us that
\begin{align*}
    \bar e_{\overline j}^2 \leq  \br{16\br{\gamma G+ \frac{G\sqrt{\log{8\log{d}/\beta}}}{\sqrt{n}} +
    \sqrt{k_{\overline j}}\frac{\sigma}{n}}}^2
\end{align*}

Finally, from the Above Threshold guarantee (using contra-positive of second part of Lemma \ref{lem:above-threshold}), since 
$\bar e_{\overline j} \leq  \overline \gamma_j - \tilde \alpha =
16\br{\gamma + \frac{G\sqrt{\log{8\log{d}/\beta}}}{\sqrt{n}}
+\sqrt{k_{\overline j}}\frac{\sigma}{n}}
$,
with probability at least $1-\frac{\beta}{4}$, it halts. Thus, $k_{\hat j} \leq k_{\overline j} \leq 2 k_{\text{tail}}(\gamma^2;\mu)$. Plugging this in Eqn. \eqref{eqn:error-bound-thm3} gives the claimed error bound.

The communication complexity now follows by the setting of the two sketch sizes.
The total communication for sketches $\bc{S_j}_j$ is
\begin{align}
    \label{eqn:comm-one-thm3}
    \sum_{j=1}^{\overline j} \rows \pads \cols_j &\leq 
    \sum_{j=1}^{\overline j} \br{8RP^2 +\frac{R}{4}}k_j
    = \Theta\br{k_{\text{tail}}(\gamma^2;\mu)\log{8d\log{d}/\beta}\text{log}^2\br{16\log{8d\log{d}/\beta}\log{d}/\beta}}
\end{align}
where we use that $\overline j\leq 2k_{\text{tail}}(\gamma^2;\mu)$, with holds with probability at least $1-\beta$, as argued above.
Similarly, the total communication for sketches $\bc{\tilde S_j}_j$, is 
\begin{align}
  \label{eqn:comm-two-thm3}
    \sum_{j=1}^{\overline j} \tilde \rows \tilde \cols \tilde \pads = \overline j 2\tilde P^2 
    = \Theta\br{\log{k_{\text{tail}}(\gamma^2;\mu)}\log{4d\log{d}/\beta}}
\end{align}

The total communication is the sum of the two terms in Eqn. \eqref{eqn:comm-one-thm3} and \eqref{eqn:comm-two-thm3} which is dominated by the first term.
This completes the proof.
\section{Proofs of Error Lower Bounds for FME}
\label{app:proofs_lower-bounds_error}
\subsection{Non-private statistical error lower bounds}
\label{app:lower-bounds-statistcal-error}

We first recall some notation: for a probability distribution $\cD$, let $\mu(\cD): = \mathbb{E}_{z\sim \cD}[z]$ denote its mean, when it exists.
\begin{theorem}
\label{thm:statistical-lower-bound-mean}
Let $d\in \bbN, G$ and $M$ such that $0<M\leq 1$, define the instance class, 
\begin{align*}
    \cP_1(d,G,M) = \bc{\text{Probability distribution } \cD \text{ over }  \bbR^d : \norm{z}\leq G \text{ for } z\sim \cD \text{ and }
    \norm{\mu(\cD)}
    \leq MG}
\end{align*}
Then, for any $n \in \bbN$, we have,
\begin{align*}
    \min_{\cA: \br{\bbR^d}^n \rightarrow \bbR^d} \max_{\cD \in  \cP_1(d,G,M)} \underset{D\sim \cD^n, \cA}{\mathbb{E}}\norm{\cA(D) - \mu(\cD)}^2 = \Theta\br{G^2\min\br{M^2,\frac{1}{n}}}
\end{align*}
\end{theorem}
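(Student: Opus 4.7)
The plan is to prove the upper and lower bounds separately, and within each direction consider the two regimes $M^2 \leq 1/n$ and $M^2 > 1/n$, glued together by the $\min$.

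For the upper bound $O(G^2 \min(M^2, 1/n))$, I will exhibit an algorithm depending on the known quantities $n, G, M$ (these parametrize the class, so the minimax estimator may depend on them). When $M^2 \leq 1/n$, simply output $\hat\mu = 0$; the risk is $\|\mu(\cD)\|^2 \leq M^2 G^2$ by definition of $\cP_1(d,G,M)$. When $M^2 > 1/n$, output the empirical mean $\bar z = \frac{1}{n}\sum_{i=1}^n z_i$; for i.i.d.\ bounded vectors with $\|z\| \leq G$ one has $\mathbb{E}\|\bar z - \mu\|^2 = \frac{1}{n}\mathbb{E}\|z - \mu\|^2 \leq \frac{\mathbb{E}\|z\|^2}{n} \leq \frac{G^2}{n}$. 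Taking the better of the two estimators per regime yields the claimed $O(G^2 \min(M^2, 1/n))$.

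For the lower bound, I will use Le Cam's two-point method. Fix any $v \in \bbR^d$ with $\|v\| = r := \min\{MG,\; c G/\sqrt{n}\}$ for a small absolute constant $c \in (0, 1/2)$, say $v = r \cdot e_1$. Consider the two product measures on $\{-Ge_1, +Ge_1\}^n$ obtained from the (signed) Bernoulli distributions $\cD_b$ putting mass $(1 + \langle \mu_b, e_1\rangle / G)/2$ on $+Ge_1$, with $\mu_0 = 0$ and $\mu_1 = v$. Both lie in $\cP_1(d,G,M)$ since $\|\mu_b\| \leq r \leq MG$ and $\|z\| \leq G$ almost surely. The KL divergence per sample is $O(\langle v,e_1\rangle^2 / G^2) = O(r^2/G^2)$, so $\mathrm{KL}(\cD_0^n \,\|\, \cD_1^n) = O(nr^2/G^2) = O(\min(nM^2, c^2)) = O(1)$; by Pinsker, $\mathrm{TV}(\cD_0^n, \cD_1^n)$ is bounded away from $1$. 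Le Cam then gives the minimax squared-error lower bound $\Omega(\|v\|^2) = \Omega(G^2 \min(M^2, 1/n))$.

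The only slightly delicate step is ensuring the two-point construction stays in the class $\cP_1(d,G,M)$ simultaneously with making KL small enough; this is precisely why the choice $r = \min\{MG, cG/\sqrt{n}\}$ is forced and why both regimes of the $\min$ appear. Everything else is routine: the upper bound uses only the norm bound on data and the constraint $\|\mu\| \leq MG$, and the lower bound is a standard one-dimensional Le Cam argument that we embed into $\bbR^d$ along a single coordinate. Combining the two bounds yields the $\Theta(G^2 \min(M^2, 1/n))$ characterization.
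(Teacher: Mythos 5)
Your proposal is correct and follows essentially the same route as the paper: the upper bound is achieved by taking the better of the zero estimator (risk $M^2G^2$) and the sample mean (risk $G^2/n$), and the lower bound is a two-point Le Cam / testing argument using Bernoulli distributions supported on $\{\pm G e_1\}$ with the separation parameter forced to $r = \min\{MG,\, cG/\sqrt{n}\}$ precisely so that both hypotheses stay in $\cP_1(d,G,M)$ while the $n$-sample KL stays bounded. The one real difference is cosmetic but worth flagging: you pass from KL to TV via Pinsker, whereas the paper deliberately uses the Bretagnolle--Huber inequality and remarks that Pinsker ``suffices'' only in the unconstrained setting and not here. Your computation shows that remark is overcautious --- once $r$ is capped at $cG/\sqrt{n}$, the total KL is $O(c^2)=O(1)$ and Pinsker already gives $\mathrm{TV}$ bounded away from $1$, so both inequalities yield the same $\Omega(G^2\min(M^2,1/n))$ bound. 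Bretagnolle--Huber would only matter if one tried to push the KL to be large; here it is not needed.
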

\begin{proof}
The upper bound is achieved by the best of the following two estimators: $M^2G^2$ for the trivial estimator which outputs ``zero" regardless of the problem instance, and $\frac{G^2}{n}$ for the sample mean estimator.

For the lower bound, first note that it suffices to consider $d=1$ since the right hand side doesn't explicit depend on $d$ and one can always embed a one-dimensional instance in any higher dimensions simply by appending zero co-ordinates to the data points. 
Our lower bound uses the (symmetric) Bernoulli mean estimation problem, the proof of which is classical (see Example 7.7 in \cite{duchi2016lecture}). However, in its standard formulation, there is no constraint on the mean that $\norm{\mu(\cD)}\leq MG$. Consequently, we modify the proof to incorporate it and we present it (almost) in entirety, below.

As is standard in minimax lower bounds, we reduce the estimation problem to (binary) hypothesis testing:
 let $\cZ = \bc{-G,G}$ and consider two distributions $\cD_1$ and $\cD_2$ over it defined as follows,
\begin{align*}
    \mathbb{P}_{z\sim \cD_1}\br{z=G} = \frac{1+\tau}{2} \qquad   \mathbb{P}_{z\sim \cD_2}\br{z=G} = \frac{1-\tau}{2} 
\end{align*}
where $\tau\geq0$ is a parameter to be set later. Note that $\abs{\mu(\cD_1)} = \abs{\mu(\cD_2)} = \tau G$. Note that since $\cD_1, \cD_2 \in  \cP_1(1,G,M)$, this gives us the  constraint that $\tau \leq \frac{MG}{G} = M$. 

The minimax risk is lower bounded as,
\begin{align*}
       \min_{\cA: \br{\bbR^d}^n \rightarrow \bbR^d} \max_{\cD \in  \cP_1(d,G,M)} \underset{D\sim \cD^n, \cA}{\mathbb{E}}\norm{\cA(D) - \mu(\cD)}^2 &\geq 
        \min_{\cA: \br{\bbR^d}^n \rightarrow \bbR^d} \max_{\cD \in \bc{\cD_1,\cD_2}} \underset{D\sim \cD^n, \cA}{\mathbb{E}}\norm{\cA(D) - \mu(\cD)}^2 \\
        & \geq \min_{\cA: \br{\bbR^d}^n \rightarrow \bbR^d} \underset{\cD \sim \text{Unif}\bc{\cD_1,\cD_2}}{\mathbb{E}} \underset{D\sim \cD^n, \cA}{\mathbb{E}}\norm{\cA(D) - \mu(\cD)}^2\\
        &\geq \min_{\Psi: \br{\bbR^d}^n \rightarrow \bc{\cD_1,\cD_2}} G^2\tau^2 \underset{\cD\sim \text{Unif}(\cD_1,\cD_2), D\sim\cD^n}{\mathbb{P}}\br{\Psi(D) \neq \cD} \\
        & = \frac{G^2\tau^2}{2}\br{1-\text{TV}(\cD_1^n, \cD_2^n)^2}
\end{align*}
where $\text{TV}(\cD_1^n, \cD_2^n)$ denotes the total variation distance between distributions.
In the above, the third inequality is the reduction from estimation to (Bayessian) testing (see Proposition 7.3 in \cite{duchi2016lecture}) with $\Psi$ being the test function: herein, the nature first chooses $\cD \sim \text{Unif}\bc{\cD_1,\cD_2}$, and conditioned on this choice, we observe $n$ i.i.d. samples from $\cD$, and the goal is to infer $\cD$. Further, the last equality follows from Proposition 2.17 in \cite{duchi2016lecture}.

We now upper bound total variation distance by KL divergence using Bretagnolle–Huber inequality \cite{bretagnolle1978estimation} -- this is the key step which differs from the proof in the standard setup wherein a weaker bound based on Pinsker's inequality suffices. 
We get,
\begin{align*}
    \text{TV}(\cD_1^n, \cD_2^n)^2 \leq
    1- \text{exp}\br{-\text{KL}\br{\cD_1^n\Vert \cD_2^n}}
     &=     1- \text{exp}\br{-n\text{KL}\br{\cD_1\Vert \cD_2}}\\
     &=1- \text{exp}\br{-n\tau \text{log} \br{\frac{1+\tau}{1-\tau}}}\\
     & \leq 1-\text{exp}\br{-3n\tau^2}
\end{align*}
where the first equality uses the chain rule for KL divergence, the second equality follows from direct computation and the last inequality holds for $\tau \leq \frac{1}{2}$ -- note that, in our setup above, we have the constraint that 
$\tau \leq M$,
which isn't necessarily violated with the assumption $\tau \leq \frac{1}{2}$. This gives us that,
\begin{align*}
       \min_{\cA: \br{\bbR^d}^n \rightarrow \bbR^d} \max_{\cD \in  \cP_1(d,G,M)} \underset{D\sim \cD^n, \cA}{\mathbb{E}}\norm{\cA(D) - \mu(\cD)}^2 &\geq \frac{G^2\tau^2}{2}\text{exp}\br{-3n\tau^2}
\end{align*}
Finally, setting $\tau^2 = \min\br{\frac{M^2}{2}, \frac{1}{3n}}$ yields that,
\begin{align*}
       \min_{\cA: \br{\bbR^d}^n \rightarrow \bbR^d} \max_{\cD \in  \cP_1(d,G,M)} \underset{D\sim \cD^n, \cA}{\mathbb{E}}\norm{\cA(D) - \mu(\cD)}^2 &\geq \frac{G^2\min\br{\frac{M^2}{2}, \frac{1}{3n}}}{2}\text{exp}\br{-1
       } \geq \frac{G^2}{12}\min\br{M^2, \frac{2}{3n}},
\end{align*}
which completes the proof.
\end{proof}

\begin{theorem}
\label{thm:statistical-lower-bound-tail}
Let $k,d\in \bbN$ such that $k<d$, $G,\gamma>0$ such that $\gamma\leq 1$. Define the instance class
\begin{align*}
    \cP_2(d,G,\gamma,k) = \bc{\text{Probability distribution } \cD \text{ over } \bbR^d: \norm{z}\leq G \text{ for }z\sim \cD, \text{ and } \norm{\mu(\cD)_{\ntail{k}}}^2\leq \gamma^2}
\end{align*}
Then, for any $n \in \bbN$, we have,
\begin{align*}
    \min_{\cA: \br{\bbR^d}^n \rightarrow \bbR^d} \max_{\cD \in \cP_2(d,G,k,\gamma)} \underset{D\sim \cD^n, \cA}{\mathbb{E}}\norm{\cA(D) - \mu(\cD)}^2 = \Theta\br{\min\br{G^2,\frac{G^2}{n}}}
\end{align*}
\end{theorem}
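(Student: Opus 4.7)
The plan is to observe that this theorem is essentially a corollary of \cref{thm:statistical-lower-bound-mean} via a coordinate embedding that makes the tail-norm constraint vacuous. For the upper bound $O(\min(G^2, G^2/n))$, the sample mean estimator achieves expected squared error at most $G^2/n$ (since $\|z\|\le G$ implies total coordinate variance at most $G^2$), while the constant estimator that outputs $0$ achieves $\|\mu(\cD)\|^2 \le G^2$. Taking the better of the two gives the matching upper bound, without using the tail-norm constraint at all.

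For the lower bound, I would reduce to the one-dimensional two-point Bernoulli construction already used in the proof of \cref{thm:statistical-lower-bound-mean}, embedded along a single coordinate. Concretely, let $e_1$ be the first standard basis vector and define $\cD_1, \cD_2$ supported on $\{-G e_1, G e_1\}$ by $\Pr_{\cD_1}[z = G e_1] = (1+\tau)/2$ and $\Pr_{\cD_2}[z = G e_1] = (1-\tau)/2$ for a parameter $\tau$ to be chosen. Every sample has $\|z\| = G$, and every mean $\mu(\cD_i) = \pm \tau G e_1$ is $1$-sparse. Since $k \ge 1$, the tail-norm satisfies $\|\mu(\cD_i)_{\ntail{k}}\|^2 = 0 \le \gamma^2$, so both distributions lie in $\cP_2(d,G,\gamma,k)$ regardless of how small $\gamma$ is.

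With the tail-norm constraint trivially satisfied, the rest of the argument is identical to the proof of \cref{thm:statistical-lower-bound-mean} with $M = 1$ (i.e., the norm-of-mean constraint becomes vacuous). Reduce estimation to the binary hypothesis test between $\cD_1$ and $\cD_2$; apply the Bretagnolle–Huber inequality together with the KL chain rule to bound
$\mathrm{TV}(\cD_1^n, \cD_2^n)^2 \le 1 - \exp(-n\,\mathrm{KL}(\cD_1\,\|\,\cD_2)) = 1 - \exp(-\Theta(n\tau^2))$; and optimize by choosing $\tau^2 = \Theta(1/n)$ (capped at a constant to keep $\tau \le 1/2$ so the KL computation is valid), yielding the $\Omega(G^2/n) = \Omega(\min(G^2, G^2/n))$ lower bound.

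The main ``obstacle'' is really just a verification rather than a genuine difficulty: one needs to confirm that a sparse-mean hard instance can be placed inside the tail-norm-constrained class. The key observation is that tail-norm constraints are vacuous on $k$-sparse (or sparser) mean vectors, so they do not reduce the statistical complexity of mean estimation whenever $k \ge 1$. Everything else reuses machinery already developed for \cref{thm:statistical-lower-bound-mean}.
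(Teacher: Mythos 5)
Your proposal is correct and takes essentially the same approach as the paper. The paper likewise establishes the lower bound by embedding a one-coordinate Bernoulli hard instance (so that the mean is $1$-sparse and the tail-norm constraint is satisfied for any $k\ge 1$ and $\gamma\ge 0$), differing only in presentation: the paper phrases it as a black-box reduction to \cref{thm:statistical-lower-bound-mean} via contradiction (append $d-1$ zero coordinates, then project the output), whereas you inline the Bretagnolle--Huber/two-point testing calculation directly; the key observation — that tail-norm constraints are vacuous on sparse means — is identical.
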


Before presenting the proof, we note that in the above statement, we exclude $k=d$. This is because, for $k=d$, $\mu(\cD)_{\ntail{k}} = G\mu(\cD)$ and hence $\cP_2(d,G,\gamma,k) = \cP_1(d,G,\gamma)$. The optimal rate then follows from Theorem \ref{thm:statistical-lower-bound-mean}.

\begin{proof}
The upper bound follows from the guarantees in the standard setting of mean estimation with bounded data, ignoring the additional structure of bound on tail norm. Specifically, the bound $G^2$ is achieved from outputting zero, and $\frac{G^2}{2}$ is achieved by the sample mean estimator.

For the lower bound, we will establish the result for $\gamma=0$, and the main claim will follow since $P_2(d,G,k,\gamma) \subseteq P_2(d,G,k,0)$ for any $\gamma\geq0$. Further, assume that $d\geq 2$, otherwise, the claim follows from Theorem \ref{thm:statistical-lower-bound-mean}.

Suppose for contradiction that there exists $G>0$, $d,n,k\in \bbN$ with $k<d$ such that there exists an algorithm $\cA: \br{\bbR^d}^n \rightarrow \bbR^d$ with the guarantee, 
\begin{align*}
    \max_{\cD \in \cP_2(d,G,k,0)} \underset{D\sim \cD^n, \cA}{\mathbb{E}}\norm{\cA(D) - \mu(\cD)}^2 = o\br{\min\br{G^2,\frac{G^2}{n}}}
\end{align*}

Now, consider an instance $\tilde \cD \in \cP_1(1,G,G)$, defined in Theorem \ref{thm:statistical-lower-bound-mean}.
We will use Algorithm $\cA$ to break the lower bound in Theorem \ref{thm:statistical-lower-bound-mean} hence establishing a contradiction.

Given $\tilde D \sim \tilde \cD^n$, we simply append $d-1$ ``zero" co-ordinates to every data point in $\tilde D$ -- let $D$ denote this constructed dataset. Note that every data point in $D$ can be regarded as a sample from a product distribution $\cD = \tilde \cD \times (\cD_0)^{d-1}$, where distribution $\cD_0$ is a point mass at zero. Further, note that $\cD \in \cP_2(d,G,k,0)$. Applying algorithm $\cA$ yields, 
\begin{align*}
     \underset{D\sim \cD^n, \cA}{\mathbb{E}}\norm{\cA(D) - \mu(\cD)}^2 = o\br{\min\br{G^2,\frac{G^2}{n}}}
\end{align*}
However, note that $\mu(\cD) = [\mu(\tilde\cD), 0, \cdots, 0]^\top$. Hence, post-processing $\cA$ by only taking its first two co-ordinates defining $\tilde \cA$ as $\tilde \cA(\tilde D) : = [\br{\cA(D)}_1,\br{\cA(D)}_2]^\top$ gives us that $\norm{\cA(D)-\mu(\cD)}^2 \geq \norm{\tilde \cA(\tilde D) - \mu(\tilde \cD)}^2$. Plugging this above, we get,
\begin{align*}
  \max_{\tilde \cD \in \cP_1(1,G,G)}
     \underset{\tilde D\sim \tilde \cD^n, \cA}{\mathbb{E}}\norm{\tilde \cA(\tilde D) - \mu(\tilde \cD)}^2 = o\br{\min\br{G^2,\frac{G^2}{n}}}
\end{align*}
This contradicts the statement in Theorem \ref{thm:statistical-lower-bound-mean} establishing the claimed lower bound.
\end{proof}

\subsection{Lower Bounds on Error under Differential Privacy}
\label{app:lower-bounds-dp}

\begin{proof}[Proof of \cref{thm:lower-bound-dp-mean-estimation-bounded-mean}]
The upper bound is established by the known guarantees of the best of the following procedures: output zero for $M^2G^2$ bound and Gaussian mechanism for $\frac{G^2d\log{1/\delta}}{n^2\epsilon^2}$ bound. For the lower bound, we consider three cases (a). $M\geq 1$, (b). $M\leq \frac{4}{n}$ and (c). $\frac{4}{n}< M< 1$. The first case $M\geq 1$ is trivial wherein we simply ignore the bound $M$ on norm of mean and apply known lower bounds for DP-mean estimation with bounded data  (see \cite{steinke2015between, kamath2020primer}). Specifically, define the following instance class,
\begin{align*}
    \hat \cP\br{n,d, G}= \bc{\bc{z_1,z_2, \cdots z_n}:z_i\in \bbR^d, \norm{z_i}\leq G}
 \end{align*}
     The above instance class is the standard setting of mean estimation with bounded data for which it is known (see \cite{steinke2015between, kamath2020primer}) that for $\epsilon=O(1)$ and $2^{-\Omega(n)} \leq \delta \leq \frac{1}{n^{1+\Omega(1)}}$, we have
     \begin{align}
     \label{eqn:dp-mean-estimation-lower-bound}
         \min_{\cA: \cA \text{ is }(\epsilon,\delta)-\text{DP}} \max_{D \in \hat \cP(n,d,G)} \mathbb{E}\norm{\cA(D)-\hat \mu(D)}^2 = \Theta\br{\min\bc{G^2,\frac{G^2d\log{1/\delta}}{n^2\epsilon^2}}}
\end{align}

Using the fact that for $M\geq 1$, we have, $\hat \cP_1(n,d,G,M) = \hat \cP(n,d,G)$, and hence,
 \begin{align*}
         \min_{\cA: \cA \text{ is }(\epsilon,\delta)-\text{DP}} \max_{D \in \hat \cP_1(n,d,G,M)} \mathbb{E}\norm{\cA(D)-\hat \mu(D)}^2 &=
          \min_{\cA: \cA \text{ is }(\epsilon,\delta)-\text{DP}} \max_{D \in \hat \cP(n,d,G)} \mathbb{E}\norm{\cA(D)-\hat \mu(D)}^2\\
          & = \Omega\br{\min\bc{G^2,\frac{G^2d\log{1/\delta}}{n^2\epsilon^2}}} 
         \\&= \Omega\br{\min\bc{M^2G^2,\frac{G^2d\log{1/\delta}}{n^2\epsilon^2}}}
     \end{align*}

For the second case $M\leq \frac{4}{n}$, suppose to contrary that there exists $d,n\in \bbN$,  $G, M>0$ satisfying $M\leq \frac{4}{n}$ and $\epsilon=O(1), 2^{-\Omega(n)} \leq \delta \leq \frac{1}{n^{1+\Omega(1)}}$ such that there exists an $(\epsilon, \delta)$-DP algorithm $\cA$ with the following guarantee,
\begin{align*}
    \max_{D \in \hat \cP_1(n,d,G,M)}\mathbb{E}\norm{\cA(D)-\hat \mu(D)}^2 = o\br{\min\bc{M^2G^2,\frac{G^2d\log{1/\delta}}{n^2\epsilon^2}}}
\end{align*}
We will use the above algorithm to break a known lower bound, hence implying a contradiction.
Consider the instance class
$\hat \cP\br{1,d,MG}$, i.e. $n=1$, defined above.
From the DP-mean estimation lower bound (Eqn. \eqref{eqn:dp-mean-estimation-lower-bound}), we have that for $\epsilon=O(1)$ and $\delta = \Theta(1)$,
  \begin{align}
  \nonumber
         \min_{\cA: \cA \text{ is }(\epsilon,\delta)-\text{DP}} \max_{D \in \hat \cP(1,d,M)} \mathbb{E}\norm{\cA(D)-\hat \mu(D)}^2 &=
         \nonumber\Theta\br{\min\bc{M^2G^2,\frac{M^2G^2d\log{1/\delta}}{\epsilon^2}}}\\
         \nonumber
         &=\Theta\br{G^2\min\bc{M^2,\frac{M^2d}{\epsilon^2}}}\\
         \nonumber
         & = \Omega\br{G^2\min\bc{M^2, M^2d}} \\
         & = \Omega(M^2G^2)
         \label{eqn:dp-lowerbound-case-two}
\end{align}
where the second and third equality follows from conditions on $\epsilon$ and $\delta$.

We now modify $\cA$ to solve DP mean estimation for datasets in 
$\hat \cP\br{1,d,MG}$
. Towards this, given 
$D \in \hat \cP\br{1,d,MG}$
i.e. $D=\bc{z}$, construct dataset $\tilde D$ of $n$ samples as follows: $\tilde D=\bc{\frac{nz}{4}, \vzero, \vzero, \cdots \vzero}$. 
It is easy to see that $\tilde D \in \hat \cP_1(n,d,G,M)$ since $M\leq \frac{4}{n}$.
Applying $(\epsilon,\delta)$-DP $\cA$ to $\tilde D$ gives us that,
\begin{align*}
    \mathbb{E}\norm{\cA(\tilde D)-\hat \mu(\tilde D)}^2 = o\br{G^2\min\bc{M^2,\frac{d\log{1/\delta}}{n^2\epsilon^2}}}
\end{align*}

We now post-process $\cA(\tilde D)$ to obtain a solution for $\cA(\tilde D)$. Firstly, define $\tilde \delta = \frac{1}{2}-\delta$. With probability $\tilde \delta$, output $z$ as the mean of dataset $\tilde D$, in the other case, output $\cA(\tilde D)$; call this output $\bar \mu$. Note that in the first case, we don't preserve privacy, whereas the other case satisfies $(\epsilon,\delta)$-DP by DP property of $\cA$ and post-processing. Hence, overall, the reduced method satisfies $(\epsilon, \frac{1}{2})$-DP. We remark that this is done since since the usual lower bounds (Eqn. \eqref{eqn:dp-mean-estimation-lower-bound}) assumes that $\delta=\Theta(1)$ as therein $n=1$.
The accuracy is, 
\begin{align*}
    \mathbb{E}\norm{\bar \mu - \hat \mu(D)}^2 \leq \delta \mathbb{E}\norm{\cA(\tilde D)-\hat \mu(\tilde D)}^2 &= o\br{G^2\min\bc{M^2, \frac{d\delta \log{1/\delta}}{n^2\epsilon^2}}} \\
    & = o\br{G^2\min\bc{M^2, \frac{d}{n^2\epsilon^2}}} \\
    & =  o\br{G^2\min\bc{M^2, \frac{d}{n^2}}} \\
    & = o(M^2G^2)
\end{align*}
where the second equality follows since $\delta\log{1/\delta}\leq 1$ for $2^{-\Omega(n)} \leq \delta \leq \frac{1}{n^{1+\Omega(1)}}$, third follows since $\epsilon=O(1)$ and in the final equality, we use the assumption that $M\leq \frac{4}{n}$. This contradicts the lower bound in Eqn. \eqref{eqn:dp-lowerbound-case-two}.

We now proceed to the final case, $\frac{4}{n}<M<1$ wherein we will proceed as in case two.
Suppose to contrary that there exists $d,n\in \bbN$,  $G, M>0$ satisfying $\frac{4}{n}<M<1$ and $\epsilon=O(1), 2^{-\Omega(n)} \leq \delta \leq \frac{1}{n^{1+\Omega(1)}}$ such that there exists an $(\epsilon, \delta)$-DP algorithm $\cA$ with the following guarantee,
\begin{align*}
    \max_{D \in \hat \cP_1(n,d,G,M)}\mathbb{E}\norm{\cA(D)-\hat \mu(D)}^2 = o\br{G^2\min\bc{M^2,\frac{d\log{1/\delta}}{n^2\epsilon^2}}}
\end{align*}

Define $N := \lceil \frac{Mn}{2} \rceil$ and consider the instance class $\hat \cP(N,d,G)$, as defined above. Again, from the DP-mean estimation lower bound, we have that for $\epsilon=O(1)$ and $2^{-\Omega(N)} \leq \delta \leq \frac{1}{N^{1+\Omega(1)}}$,
  \begin{align}
         \min_{\cA: \cA \text{ is }(\epsilon,\delta)-\text{DP}} \max_{D \in \hat \cP(N,d,G)} \mathbb{E}\norm{\cA(D)-\hat \mu(D)}^2 
         &=\Theta\br{\min\bc{G^2,\frac{G^2d\log{1/\delta}}{N^2\epsilon^2}}}
         \label{eqn:dp-lowerbound-case-three}
\end{align}

We now modify $\cA$ to solve DP mean estimation for datasets in $\hat \cP(N,d,G)$. Towards this, given $D \in \hat \cP(N,d,G)$, construct dataset $\tilde D$ of $n$ samples by appending $n-N$ samples of $\vzero \in \bbR^d$ to $D$. Note that the norm of data points in $\tilde D$ is upper bounded by $G$ and $\norm{\hat \mu(\tilde D)} = \frac{N}{n}\norm{\hat \mu(D)} \leq \frac{N}{n} G \leq \frac{MG}{2}+ \frac{G}{2n}\leq MG$. Hence, we have that $\tilde D \in \hat \cP_1(n,d,G,M)$. Applying $(\epsilon,\delta)$-DP algorithm $\cA$ to $\tilde D$ gives us that, 
\begin{align*}
    \mathbb{E}\norm{\cA(\tilde D)-\hat \mu(\tilde D)}^2 = o\br{G^2\min\bc{M^2,\frac{d\log{1/\delta}}{n^2\epsilon^2}}}
\end{align*}
We now post-process $\cA(\tilde D)$ to obtain a solution for dataset $D$. 
With probability $\frac{1}{N^{1+\Omega(1)}} -\delta$, output $\hat \mu(D)$ as the solution, in the other case, output $\frac{n}{N}\cA(\tilde D)$; call this solution $\bar \mu$. As in the previous case, in the first case, we don't preserve privacy while the second case satisfies $(\epsilon,\delta)$-DP from DP property of $\cA$ and post-processing. Hence, the combined method satisfies $\br{\epsilon, \frac{1}{N^{1+\Omega(1)}}}$-DP. Further, the accuracy guarantee is, 
\begin{align*}
    \mathbb{E}\norm{\bar \mu - \hat \mu(D)}^2 \leq \delta \mathbb{E}\norm{\frac{n}{N}\cA(\tilde D) - \hat \mu(D)}^2 &\leq\delta \frac{n^2}{N^2}\mathbb{E}\norm{\cA(\tilde D)-\hat \mu(\tilde D)}^2 \\
    &= o\br{\delta\frac{n^2G^2}{N^2}\min\bc{M^2, \frac{\log{1/\delta}}{n^2\epsilon^2}}}\\
    &= o\br{\frac{n^2G^2}{N^2}\min\bc{\delta M^2, \frac{\delta\log{1/\delta}}{n^2\epsilon^2}}}\\
  &= o\br{G^2\min\bc{1, \frac{\log{N}}{N^2\epsilon^2}}}
\end{align*}
where the last equality follows from the setting of $N$ and from $\delta\log{1/\delta}\leq 1\leq \log{N}$ as $N\geq 2$ when $Mn>4$. This contradicts the lower bound in Eqn. \eqref{eqn:dp-lowerbound-case-three} for $\delta = \frac{1}{N^{1+\Omega(1)}}$. Combining the three cases finishes the proof.
\end{proof}

\begin{proof}[Proof of \cref{thm:dp-mean-estimation-lower-bound-2}]
The upper bound is obtained by the best of the following procedures: output zero for $G^2$ term, our proposed method 
\cref{thm:dp-mean-estimation-upper-bound-2}
for $\gamma^2 + \frac{kG^2\log{1/\delta}}{n^2\epsilon^2}$ term, and $\frac{dG^2\log{1/\delta}}{n^2\epsilon^2}$ from Gaussian mechanism. For the lower bound, we first define the following instance classes, 
\begin{align*}
    &\hat \cP_1(n,d,G,M) = \bc{\bc{z_1,z_2,\cdots z_n}: z_i \in \bbR^d, \norm{z_i} \leq G, \norm{\hat \mu(\bc{z_i}_{i=1}^n)} \leq MG}\\
     &\hat \cP\br{n,d, G}= \bc{\bc{z_1,z_2, \cdots z_n}:z_i\in \bbR^d, \norm{z_i}\leq G}
     \end{align*}
     The second instance class is the standard setting of mean estimation with bounded data and it is known (see \cite{steinke2015between, kamath2020primer}) that for $\epsilon=O(1)$ and $2^{-\Omega(n)} \leq \delta \leq \frac{1}{n^{1+\Omega(1)}}$, we have
     \begin{align}
     \label{eqn:dp-mean-estimation-lower-bound-proof2}
         \min_{\cA: \cA \text{ is }(\epsilon,\delta)-\text{DP}} \max_{D \in \hat \cP(n,d,G)} \mathbb{E}\norm{\cA(D)-\hat \mu(D)}^2 = \Theta\br{\min\bc{G^2,\frac{G^2d\log{1/\delta}}{n^2\epsilon^2}}}
\end{align}
For the first instance class, we showed a lower bound in \cref{thm:lower-bound-dp-mean-estimation-bounded-mean}.
We will now use these two results to construct a reduction argument for the lower bound claimed in the theorem statement. Suppose to contrary that there exists $k,d,n\in \bbN$, $G,\gamma>0$ and $\epsilon=O(1)$, $2^{-\Omega(n)} \leq \delta \leq \frac{1}{n^{1+\Omega(1)}}$ such that there exists an algorithm $\cA$ with the following guarantee, 
\begin{align*}
    \max_{D \in \cP(n,d,G,k,\gamma)} \mathbb{E}\norm{\cA(D)-\hat \mu(D)}^2= o\br{\min \bc{ G^2, \gamma^2G^2 + \frac{kG^2\log{1/\delta}}{n^2\epsilon^2}, \frac{dG^2\log{1/\delta}}{n^2\epsilon^2}}}
\end{align*}

Consider instance classes $\hat \cP_1\br{n,d-k,\frac{G}{\sqrt{2}},\gamma}$ and $\hat\cP\br{n,k, \frac{G}{\sqrt{2}}}$. Given datasets $D_1 \in \hat \cP_1\br{n,d-k,\frac{G}{\sqrt{2}},\gamma}$ and $D_2 \in \hat \cP\br{n,k, \frac{G}{\sqrt{2}}}$, we will use Algorithm $\cA$ to solve DP mean estimation for both these datasets. Specifically, construct dataset $D$ by stacking samples from $D_2$ to corresponding samples from $D_1$. Note that each sample in $D$ has norm bounded by $G$. Further, $\hat \mu(D) = [\hat \mu(D_1), \hat \mu(D_2)]^\top$. Therefore, we have,
\begin{align*}
    \norm{\hat \mu(D)_{\text{tail}(k)}} = G\norm{\hat \mu(D)_{\ntail{k}}}
    = \min_{\tilde \mu: \tilde \mu \text{ is }k \text{-sparse}}\norm{\hat \mu(D) - \tilde \mu} \leq \norm{\hat \mu(D)-[\vzero,\hat \mu(D_2)]^\top} = \norm{\hat \mu(D_1)} \leq \gamma G
\end{align*}
We therefore have that $D \in \cP(n,d,G,k,\gamma)$. Apply $(\epsilon,\delta)$-DP $\cA$ to get $\cA(D) = [\bar \mu_1, \bar \mu_2]$. The accuracy guarantee of $\cA$ gives us that, 
\begin{align*} 
&\mathbb{E}\norm{\cA(D)- \hat \mu(D)}^2= o\br{\min \bc{ G^2, \gamma^2G^2 + \frac{kG^2\log{1/\delta}}{n^2\epsilon^2}, \frac{dG^2\log{1/\delta}}{n^2\epsilon^2}}} \\
& \implies \mathbb{E}\norm{\bar \mu_1 - \hat \mu(D_1)}^2 +\mathbb{E}\norm{\bar \mu_2 - \hat \mu(D_2)}^2 = o\br{\min \bc{ G^2, \gamma^2G^2 + \frac{kG^2\log{1/\delta}}{n^2\epsilon^2}, \frac{dG^2\log{1/\delta}}{n^2\epsilon^2}}} 
\end{align*}
Note that both the outputs $\mu_1$ and $\mu_2$ satisfy $(\epsilon,\delta)$-DP from DP guarantee of $\cA$ and post-processing property. From the above, we get the following accuracy guarantees,
\begin{align}
\label{eqb:dp-lower-bound-concat-one}
& \mathbb{E}\norm{\bar \mu_1 - \hat \mu(D_1)}^2 = o\br{\min \bc{ G^2, \gamma^2G^2 + \frac{kG^2\log{1/\delta}}{n^2\epsilon^2}, \frac{dG^2\log{1/\delta}}{n^2\epsilon^2}}} \\
\label{eqb:dp-lower-bound-concat-two}
& \mathbb{E}\norm{\bar \mu_2 - \hat \mu(D_2)}^2 = o\br{\min \bc{ G^2, \gamma^2G^2 + \frac{kG^2\log{1/\delta}}{n^2\epsilon^2}, \frac{dG^2\log{1/\delta}}{n^2\epsilon^2}}} 
\end{align}
We now consider three cases, (a). $k>d/2$, (b). $k\leq d/2 $ and $\gamma^2<\frac{k\log{1/\delta}}{n^2\epsilon^2}$ and (c). $k\leq d/2$ and
$\gamma^2\geq \frac{k\log{1/\delta}}{n^2\epsilon^2}$. For the first case (a). $k>d/2$, from Eqn. \eqref{eqb:dp-lower-bound-concat-one}, we get, 
\begin{align*}
    \mathbb{E}\norm{\bar \mu_2 - \hat \mu(D_2)}^2 &= o\br{\min \bc{ G^2, \gamma^2G^2+ \frac{kG^2\log{1/\delta}}{n^2\epsilon^2}, \frac{dG^2\log{1/\delta}}{n^2\epsilon^2}}} \\
    &= o\br{\min \bc{ G^2, G^2+ \frac{kG^2\log{1/\delta}}{n^2\epsilon^2}, \frac{dG^2\log{1/\delta}}{n^2\epsilon^2}}} \\
    & = o\br{\min \bc{ G^2, \frac{kG^2\log{1/\delta}}{n^2\epsilon^2}}}
\end{align*}
where the second equality uses $\gamma^2\leq 1$ and the third equality uses that $d<2k$.

For the second case  $k\leq d/2 $ and $\gamma^2<\frac{k
\log{1/\delta}}{n^2\epsilon^2}$, we again use Eqn. \eqref{eqb:dp-lower-bound-concat-one} to get,
\begin{align*}
    \mathbb{E}\norm{\bar \mu_2 - \hat \mu(D_2)}^2 &= o\br{\min \bc{ G^2, \gamma^2G^2+ \frac{kG^2\log{1/\delta}}{n^2\epsilon^2}, \frac{dG^2\log{1/\delta}}{n^2\epsilon^2}}} \\
    &= o\br{\min \bc{ G^2, \frac{2kG^2\log{1/\delta}}{n^2\epsilon^2}, \frac{dG^2\log{1/\delta}}{n^2\epsilon^2}}} \\
    & = o\br{\min \bc{ G^2, \frac{kG^2\log{1/\delta}}{n^2\epsilon^2}}}
\end{align*}
where the second equality uses  $\gamma^2<\frac{kG^2\log{1/\delta}}{n^2\epsilon^2}$ and the third equality uses $2k\leq d$.

Combining cases (a). and (b). together contradicts the lower bound for DP mean estimation, mentioned in Eqn. \eqref{eqn:dp-mean-estimation-lower-bound-proof2}.
 For the third case $k\leq d/2$ and $\gamma^2\geq \frac{k\log{1/\delta}}{n^2\epsilon^2}$, from Eqn. \eqref{eqb:dp-lower-bound-concat-two}, we get 
\begin{align*}
    \mathbb{E}\norm{\bar \mu_1 - \hat \mu(D_1)}^2 &=
    o\br{\min \bc{ G^2, \gamma^2G^2 + \frac{kG^2\log{1/\delta}}{n^2\epsilon^2}, \frac{dG^2\log{1/\delta}}{n^2\epsilon^2}}} \\
    &=o\br{\min \bc{ G^2, 2\gamma^2G^2, \frac{dG^2\log{1/\delta}}{n^2\epsilon^2}}}  \\
    & = o\br{\min \bc{\gamma^2G^2, \frac{(k + (d-k))G^2\log{1/\delta}}{n^2\epsilon^2}}} \\
    & = o\br{\min \bc{\gamma^2G^2,\frac{ 2(d-k)G^2\log{1/\delta}}{n^2\epsilon^2}}} \\
     & = o\br{\min \bc{\gamma^2G^2,\frac{ (d-k)G^2\log{1/\delta}}{n^2\epsilon^2}}} 
\end{align*}

where the second equality uses that
$\gamma^2\geq \frac{k\log{1/\delta}}{n^2\epsilon^2}$,
and the second last equality uses that $k\leq d/2 \implies k \leq d-k$. This contradicts the lower bound for DP mean estimation in \cref{thm:lower-bound-dp-mean-estimation-bounded-mean} since $D_2 \in \hat \cP_1\br{n,d-k,\frac{G}{\sqrt{2}}, \gamma}$. Combining all the above cases finishes the proof.
\end{proof}

\section{Proofs of Communication Lower Bounds for FME}
\label{app:proofs_lower-bounds_communication}

\label{sec:multi-round-protocols}
\paragraph{Multi-round protocols with SecAgg:}
We set up some notation for multi-round protocols.
For $K\in \bbN$, a $K$-round protocol $\cA$ consists of a sequence of encoding schemes, denoted as $\bc{\cE_t}_{t=1}^K$ and a decoding scheme $\cU$. The encoding scheme in the $t$-th round, $\cE_t: \bc{\cO_j}_{j<t} \times \bbR^d \rightarrow \cO_t$ where
$\cO_j$  denote its output space in the $j$-th round.
Since we are operating under the SecAgg constraint, the set $\cO_t$ is identified with a finite field over which SecAgg operates.
and let $b_t:=\log{\abs{\cO_t}}$ be the number of bits used to encode messages in the $t$-th round.
For dataset $D=\bc{z_i}_{i=1}^n$ distributed among $n$ clients, we recursively define outputs of SecAgg, $O_1 = \text{SecAgg}(\bc{\cE_1(z_i)}_{i=1}^n)$ and $O_t = \text{SecAgg}\left(\left\{\cE_t\br{\bc{O_j}_{j<t},z_i}\right\}_{i=1}^n\right)$.
Finally, after $K$ rounds, the decoding scheme, denoted as $\cU: \cO_1 \times \cO_2 \times \cdots \times \cO_K \rightarrow \bbR^d$, outputs $\cU\br{\bc{O_t}_{t\leq K}}=:\cA(D)$.  The total per-client communication complexity is $\sum_{t=1}^K b_t$.

Given a set $\cM$ in a normed space $(\cX,\norm{\cdot})$, and $\gamma>0$, we use $\cN\br{\cP,\gamma,\norm{\cdot}}$
and $\cN^{\text{ext}}\br{\cP,\gamma,\norm{\cdot}}$
to denote its covering
and exterior covering numbers at scale $\gamma$ -- see Section 4.2 in \cite{vershynin2018high} for definitions.

\begin{proof}[Proof of \cref{thm:lower-bound-communication-bounded-mean-1}]
This follows uses the one-round compression lower bound for unbiased schemes, Theorem 5.3 in \cite{chen2022fundamental}.
Specifically, \cite{chen2022fundamental} showed that for any
given $d, M$, for any unbiased encoding and decoding scheme with $\ell_2$ error at most $\alpha$, there exists a data point $z \in \bbR^d$ with $\norm{z}\leq M$ such that its encoding requires $\Omega\br{\frac{dM^2}{\alpha}}$ bits. 
We first extend the same lower bound to multi-round protocols. Assume to the contrary that there exists a $K>0$ and a $K$-round protocol with encoding and decoding schemes $\bc{\cE_i}_{i=1}^K$ and $\cU$ respectively, such that the total size of messages communicated, $\sum_{i=1}^K b_i = o\br{\frac{dM^2}{\alpha}}$ bits. Now, note that the set of possible encoding schemes are range (or communication) restricted, but otherwise arbitrary. Thus, we can construct a one round encoding scheme which simulates the $K$ round scheme $\bc{\cE_i}_{i=1}^K$. Specifically, it first encodes the data point using $\cE_1$ to get a message size $b_1$, then uses $\cE_2$ on the output and data to get a message of size $b_2$, and so on, until it has used all the $K$ encoding schemes. Finally, it concatenates all messages and sends it to the server, which decodes it using $\cU$. The total size of the messages thus is $\sum_{i=1}^K b_i =o\br{\frac{dM^2}{\alpha}}$, which contradicts the result in \cite{chen2022fundamental}, and hence establishes the same lower bound for multi-round schemes. 
We now extend the compression lower bound by reducing FME under SecAgg to compression.
We simply define the probability distribution as supported on a single data point $z$, so that all clients have the same data point. Due to SecAgg constraint, the decoder simply observes the average of messages i.e the same number of bits as communicated by each client. Hence, if there exists a multi-round scheme SecAgg-compatible scheme for FME for this instance, with total per-client communication of $o\br{\frac{dM^2}{\alpha}}$ bits, then we can simulate it to solve the (one client) compression problem with the same communication complexity, implying a contradiction.
Finally, plugging in the
optimal error $\alpha^2$ as $\min\br{M^2,\frac{G^2}{n}+\frac{G^2\log{1/\delta}}{n^2\epsilon^2}}$, where optimality is established in \cref{thm:lower-bound-dp-mean-estimation-bounded-mean}, gives us the claimed bound.
\end{proof}

\begin{proof}[Proof of \cref{thm:lower-bound-communication-bounded-mean-2}]
It suffices to show that a one-round i.e. $K=1$, SecAgg-compatible, unbiased, 
communication protocol cannot be adaptive i.e have
optimal error and 
optimal communication complexity simultaneously for all values of the norm of the mean of instance.
Note that such a 
protocol $\cA$ gets as inputs $n,d,G,\epsilon,\delta$, and outputs an encoding scheme $\cE: \bbR^d \rightarrow \cO$, and decoding scheme $\cU: \cO \rightarrow \bbR^d$ where 
$\cO$ is the output space of the encoding scheme and can be identified with the finite field over which SecAgg operates.
In the proof of \cref{thm:lower-bound-communication-bounded-mean-1},
we showed that the communication complexity for optimal error, even under knowledge of $M=\norm{\mu(\cD)}$, is $b=
\Omega\br{\min\br{d,\frac{M^2n^2\epsilon^2}{G^2\log{1/\delta}}, \frac{M^2nd}{G^2}}}$ bits.
To show that, we proved a compression lower bound -- 
for any protocol $\cA$, there exists a distribution $\cD$, supported on a single point $z$ such that $\norm{\mu(\cD)}\leq M$, and that $\cE(z)$ has size at least $b$ bits w.p. 1. 
We modify this, by defining a distribution $\tilde \cD$ supported on $\bc{\vzero,z}$ such that $\mathbb{P}_{\tilde \cD}(z) = \frac{\ln{n}}{n}$. Note that the norm of the mean shrinks to $\frac{M\log{n}}{n} \ll M$, and hence the optimal communication complexity for such a distribution is smaller.
By direct computation, it follows that upon sampling $n$ i.i.d. points from $\tilde \cP$, with probability at least $1-\frac{1}{n}$, there is at least one client with data point $z$.
However, the encoding function $\cE$, oblivious to the knowledge of the 
data points in other clients, 
still produces an encoding of $z$ in $b$ bits, and thus fails with probability at least $1-\frac{1}{n}$.
\end{proof}

\begin{theorem}
\label{thm:lower-bound-communication-covering}
For any $K>0$, set $\cM\subset \bbR^d$ and $K$-round encoding schemes $\bc{\cE_i}_{i=1}^K$ and decoding scheme $\cU$, if the following holds,
\begin{align*}
    \max_{\cD:  \mu(\cD) \in \cM} \mathbb{E}_{\bc{\cE_i},\cU, D\sim \cD^n}\norm{\cU\br{\bc{\cA_{\cE_i}(D)}_{i\leq K}} - \mu(\cD)}^2 \leq \alpha^2,
\end{align*}
then the total communication $\sum_{i=1}^K b_i \geq \log{\cN^{\text{ext}}(\cM,\alpha, \norm{\cdot}_2)}$
\end{theorem}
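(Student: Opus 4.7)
The plan is to establish this via a counting argument reducing to point-mass distributions, exploiting the fact that SecAgg plus a bounded alphabet caps the number of distinct outputs the server can ever produce.

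First, I would reduce to point-mass distributions. For each $\mu \in \cM$, consider the distribution $\cD_\mu$ concentrated at $\mu$: clearly $\mu(\cD_\mu) = \mu \in \cM$, so the hypothesis gives $\mathbb{E}\lVert \cU(\{\cA_{\cE_i}(D)\}_{i\leq K}) - \mu \rVert^2 \leq \alpha^2$, where $D = (\mu, \ldots, \mu)$. Since under SecAgg the server sees only the aggregate $O_t \in \cO_t$ in each round, the full transcript lies in $\prod_{t=1}^K \cO_t$, a set of size at most $2^{\sum_i b_i}$.

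Second, I would argue WLOG that the decoder $\cU$ is deterministic in its randomness. If $\cU$ uses private randomness, replace it with its conditional expectation $\tilde\cU(m) := \mathbb{E}_{\text{dec. rand.}}[\cU(m, \cdot)]$; by the conditional Jensen inequality this only decreases the MSE against any fixed target $\mu$. Thus the set $S$ of possible server outputs satisfies $|S| \leq |\prod_t \cO_t| = 2^{\sum_i b_i}$, and crucially $S$ does not depend on $\mu$.

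Third, I would observe that $S$ must form an external $\alpha$-cover of $\cM$. Fix any $\mu \in \cM$: the (possibly random, due to encoder randomness) output $X_\mu = \cU(\{\cA_{\cE_i}(\cD_\mu)\}_{i})$ takes values in $S$, and satisfies $\mathbb{E}\lVert X_\mu - \mu \rVert^2 \leq \alpha^2$. If every point $o$ in the support had $\lVert o - \mu \rVert > \alpha$, then almost surely $\lVert X_\mu - \mu \rVert^2 > \alpha^2$, contradicting the MSE bound. Hence there exists $o \in S$ with $\lVert o - \mu\rVert \leq \alpha$. Since this holds for every $\mu \in \cM$, the set $S \subseteq \mathbb{R}^d$ is an external $\alpha$-cover of $\cM$, giving $|S| \geq \cN^{\text{ext}}(\cM, \alpha, \lVert\cdot\rVert_2)$. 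Combining, $2^{\sum_i b_i} \geq \cN^{\text{ext}}(\cM, \alpha, \lVert\cdot\rVert_2)$, and taking logarithms yields the claim.

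The only delicate step is handling protocol randomness: naively, a randomized decoder could produce uncountably many outputs. The Jensen-based derandomization of the decoder is what lets us bound $|S|$ by $2^{\sum_i b_i}$, and it is a standard but essential move. Everything else is a straightforward counting/pigeonhole argument exploiting that SecAgg compresses each round's message to a single aggregate in $\cO_t$.
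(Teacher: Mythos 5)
Your proof is correct and takes essentially the same route as the paper's: both reduce to point-mass distributions, derandomize the decoder by taking an expectation over its internal randomness (justified via Jensen), bound the resulting set of possible outputs by $2^{\sum_i b_i}$, and then use a pigeonhole/existence argument over the encoder randomness to show this finite set is an exterior $\alpha$-cover of $\cM$. The only cosmetic difference is that the paper phrases the last step as taking a minimum over encoder randomness while you phrase it as a support argument; these are the same idea.
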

\begin{proof}
The proof follows the covering argument in \cite{chen2022fundamental}. 
Given a dataset $D=\bc{z_i}_{i=1}^n$ distributed among $n$ cleints, define outputs of SecAgg recursively, in the multi-round scheme as, $O_1 = \text{SecAgg}(\bc{\cE_1(z_i)}_{i=1}^n)$ and $O_t = \text{SecAgg}\left(\left\{\cE_t\br{\bc{O_j}_{j<t},z_i}\right\}_{i=1}^n\right)$.
Consider the set
\begin{align*}
   \cC =  \bc{\mathbb{E}_{\cU}\cU(\bc{C_j}_{j\leq K}): C_j \in \cO_j}
\end{align*}
Note that since $\abs{\cO_j}\leq 2^{b_j}$, we have that $\abs{\cC} \leq 2^{\sum_{j=1}^Kb_j}$.

For each value of $\mu(\cD) \in \cM$, 
we simply pick a distribution $\cD$ supported on the singleton $\bc{\mu(\cD)}$.
Now, from the premise in the Theorem statement and Jensen's inequality, we get that,
\begin{align*}
 &\max_{\cD:  \mu(\cD) \in \cM} \min_{\text{randomness in }\bc{\cE_i}_i}\norm{\mathbb{E}_{\cU}\left[\cU\br{\bc{O_i}_{i\leq K}}\right] - \mu(\cD)}^2
    \\& \leq \max_{\cD: \mu(\cD) \in \cM} \mathbb{E}_{\bc{\cE_i}_i}\norm{\mathbb{E}_{\cU}\left[\cU\br{\bc{O_i}_{i\leq K}}\right] - \mu(\cD)}^2\\
     & \leq \max_{\cD: \mu(\cD) \in \cM} \mathbb{E}_{\bc{\cE_i}_i,\cU}\norm{\cU\br{\bc{O_i}_{i\leq K}} - \mu(\cD)}^2 \leq \alpha^2,
\end{align*}

This gives us a mapping, which for each $ \mu(\cD)$, gives a point, which is at most $\alpha^2$ away from $\hat \mu(D)$ in $\norm{\cdot}_2^2$ -- this is a $\alpha$ exterior covering of $\cM$ with respect $\norm{\cdot}_2$. However, every point in the above cover lies in  $\cC$. Therefore, 
\begin{align*}
    \sum_{i=1}^K b_i \geq \log{C} \geq \log{\cN^{\text{ext}}(\cM,\alpha, \norm{\cdot}_2)}.
\end{align*}
which completes the proof.
\end{proof}

\begin{proof}[Proof of \cref{thm:lower-bound-communication-bounded-mean-3}]
Note the for any distribution $\cD \in \cP_2(d,G,\gamma,k)$, its mean $\mu(D)$ lies in $\cM=\bc{z \in \bbR^d: \norm{z}_2\leq G \text{ and } \norm{z_{\text{tail}(k)}}\leq \gamma}$.
The claim follows simply from \cref{thm:lower-bound-communication-covering} by plugging in the the covering number of
$\cM$.
Towards this, define $\tilde \cM :=\bc{z \in \bbR^d: \norm{z}_2\leq G \text{ and } \norm{z_{\text{tail}(k)}}\leq 0}$, which is simply the 
set of $k$-sparse vectors in $d$ dimensions, bounded in norm by $G$. 
Now, since 
$\cM\supseteq \tilde \cM$, 
from monotonicity property of exterior covering numbers (see Section 4.2 in \cite{vershynin2018high}), we have
$\cN^{\text{ext}}\br{\tilde \cM,\alpha,\norm{\cdot}_2} \geq \cN^{\text{ext}}\br{\cM,\alpha,\norm{\cdot}_2}$.
Further, we relate the exterior and (non-exterior) covering numbers as 
$\cN^{\text{ext}}\br{\tilde \cM,\alpha,\norm{\cdot}_2} \geq \cN\br{\tilde \cM,2\alpha,\norm{\cdot}_2}$ (see Section 4.2 in \cite{vershynin2018high}). 
We now compute 
$\cN\br{\tilde \cM,2\alpha,\norm{\cdot}_2}$
using the standard volume argument. Give a set, let $\text{Vol}_k$ denote its $k$-dimensional (Lebesque) volume. We have that $\text{Vol}_k(\tilde \cM) = \binom{d}{k} \text{Vol}_k(\bbB^{k}_G) = \binom{d}{k} G^kC_k$, where $\bbB^{k}_G$ is a ball in $k$ dimensions of radius $G$, and $C_k$ is a $k$-dependent constant. Further, note that $\binom{d}{k} \geq \br{\frac{d}{k}}^k$ for $d\geq 2k$.
Now, since sum of volumes of all balls in the cover should be at least as large as this volume, we get,
\begin{align*}
  & \br{\frac{d}{k}}^k G^k C_k  \leq \text{Vol}_k(\tilde \cM) \leq \cN\br{\tilde \cM,2\alpha,\norm{\cdot}_2} \text{Vol}_{k}(\bbB^{k}_{2\alpha}) = \cN\br{\tilde \cM,2\alpha,\norm{\cdot}_2} C_k\br{2\alpha}^k \\
  & \implies \cN\br{\tilde \cM,2\alpha,\norm{\cdot}_2} \geq \br{\frac{d G}{2k\alpha}}^k.
\end{align*}
Finally, using \cref{thm:lower-bound-communication-covering} plugging in the derived lower bound gives the stated guarantee.
\end{proof}

\section{Empirical Details}\label{sec:app-setup}
In all cases, we use central DP with fixed $\ell_2$ clipping denoted as $\eta$. We tune the server learning rate for each noise multiplier on each dataset, with al other hyperparemters held fixed. On all datasets, we select $n\in[100,1000]$ and train for a total $R=1500$ rounds.

F-EMNIST has $62$ classes and $N=3400$ clients with a total of $671,585$ training samples. Inputs are single-channel $(28,28)$ images. 
On F-EMNIST, the server uses momentum of $0.9$ and $\eta=0.49$ with the client using a learning rate of $0.01$ without momentum and a mini-batch size of $20$. We use $n=100$ clients per round.
Our optimal server learning rates are $\{0.6, 0.4, 0.2, 0.1, 0.08\}$ for noise multipliers in $\{0.1, 0.2, 0.3, 0.5, 0.7\}$, respectively.

The Stack Overflow (SO) dataset is a large-scale text dataset based on responses to questions asked on the site Stack Overflow. The are over $10^8$ data samples unevenly distributed across $N=342,477$ clients. We focus on the next word prediction (NWP) task: given a sequence of words, predict the next words in the sequence.
On Stack Overflow, the server uses a momentum of $0.95$ and $\eta=1.0$ with the client using a learning rate of $1.0$ without momentum and limited to 256 elements per client. Our optimal server learning rates are $\{1.0, 1.0, 0.5, 0.5\}$ for noise multipliers in $\{0.1, 0.3, 0.5, 0.7\}$, respectively. We use $n=1000$ clients per round.

Shakespeare is similar to SONWP but is focused on character prediction and instead built from the collective works of Shakespeare, partitioned so that each client is a speaking character with at least two lines. There are $N=715$ characters (clients) with $16,068$ training samples and $2,356$ test samples.
On Shakespeare, the server uses a momentum of $0.9$ and $\eta=1.85$; the client uses a learning rate of $5.0$ and mini-batch size of $4$. Our optimal server learning rates are $\{0.1,0.1,0.08,0.06,0.04,0.03,\}$ for noise multipliers in $\{0.05, 0.1, 0.2, 0.3, 0.5, 0.7\}$, respectively. We use $=100$ clients per round.

\subsection{Model Architectures}
\begin{figure}[H]
    \centering
    \VerbatimInput{figures/1m_cnn_model_summary.txt}
    \caption{F-EMNIST model architecture.}
    \label{fig:large-model}
\end{figure}
\begin{figure}[H]
    \centering
    \VerbatimInput{figures/4M_sonwp_model_summary.txt}
    \caption{Stack Overflow Next Word Prediction model architecture.}
    \label{fig:sonwp-model}
\end{figure}
\begin{figure}[H]
    \centering
    \VerbatimInput{figures/shakespeare_model_summary.txt}
    \caption{Shakespeare character prediction model architecture.}
    \label{fig:shakespeare-model}
\end{figure}
\section{DP and Sketching Empirical Details}\label{app:DP}
\paragraph{Noise multiplier to $\mathbf{\varepsilon}$-DP}
We specify the privacy budgets in terms of the noise multiplier $z$, which together with the clients per round $n$, total clients $N$, number of rounds $R$, and the clipping threshold completely specify the trained model $\varepsilon$-DP. Because the final $\mathbf{\varepsilon}$-DP values depend on the sampling method: e.g., Poisson vs. fixed batch sampling, which depends on the production implementation of the FL system, we report the noise multipliers instead. Using~\citet{mironov2017renyi}, our highest noise multipliers roughly correspond to $\varepsilon=\{5,10\}$ using $\delta=1/N$ and privacy amplification via fixed batch sampling for SONWP and F-EMNIST.

\paragraph{Sketching} We display results in terms of the noise multiplier which fully specifies the $\varepsilon$-DP given our other parameters ($n$, $N$, and $R$). We use a count-mean sketch as proposed in~\citet{chen2021communication} which compresses gradients to a sketch matrix of size ($t,w$)=($length$,$width$). We find empirically that this had no major difference compared with our proposed median-of-means sketch. We use $t=15$ as used in~\citet{chen2022fundamental}. We use this value for all our experiments and calculate the $width=d/(r * length)$ where $gradient \in \mathcal{R}^d$ and $r$ is the compression rate. The full algorithms for decoding and encoding can be found in~\citet{chen2022fundamental} but we note that we do not normalize by the sketch length and instead normalize by its square root, which gives a (approximately) preserved sketched gradient norm.

\section{Adapt Tail: Empirical Results and Challenges in FL}
\label{app:adapt-tail}
In this section, we describe some \textit{natural} mappings of Adapt Tail for FME to FO, as well as discuss some challenges encountered in FL experiments.

The client protocol
again remains the same with $z_c$ replaced by  model updated via local training. 
For the server-side procedure, as in the case of Adapt Norm (\cref{sec:fme-to-fo-norm}), the key idea is to plug-in our FME procedure, \cref{alg:dp-mean-estimation-adapt-server_adapt_tail}, in the  averaging step of the FedAvg algorithm, yielding \cref{alg:dp-mean-estimation-adapt-server_adapt_experiments_adapt_tail}.
Further rather than using $\log{d}$ interactive rounds to estimate the $k$-th tail norm, we use a \textit{stale estimates} computed from prior rounds as in our Adapt Norm approach. 
Consequently, we do not use the AboveThreshold mechanism (in \cref{alg:dp-mean-estimation-adapt-server_adapt_tail}), but rather account for composition for noise added for adaptivity.
This means we do not noise the threshold $\overline \gamma$, and add Gaussian (instead of Laplace) noise to error.
Further, $c_0$ is a constant which determines the fraction of error relative to DP error, that we want to tolerate is at most, say  $10\%$ as we use across all experiments, including these, in this paper.

The final proposed change yields a few variants of our method.
Instead of doubling the sketch size at every round, we update it as follows, $$C_{j+1} = \br{1+\eta \text{sign}(\tilde e_j - \tilde \gamma_j)}C_j$$
where we set $\eta=0.2<1$.
Setting $\eta=1$ recovers our doubling scheme; in general, this means that
instead of resetting to our initial (small) sketch size for every FME instance, we increase/decrease the sketch size based on if the error $\tilde e_j$ is smaller/larger than the target $\overline \gamma$.
While $\text{sign}(\tilde e_j - \tilde \gamma_j)$ is a convenient choice to measure the distance of error from the target threshold , we also tried a few other options. For instance, we can use the (absolute) error with exponential and linear updates, $C_{j+1} = \br{1+\eta (\tilde e_j - \tilde \gamma_j)}C_j$ and $C_{j+1} = C_j+\lfloor\eta (\tilde e_j - \tilde \gamma_j)\rfloor$, as well as use relative, as opposed to absolute error.

\begin{algorithm}[h]
\caption{Adapt Tail FL
}

\begin{algorithmic}[1]

\REQUIRE Sketch sizes $\ssize_1 = RPC_1$ and $\sssize = \tilde R \tilde P \tilde C$
noise multiplier $\sigma$, model dimension $d$,
a constant $c_0$,
rounds $K$, $\eta$
\FOR{$j=1$ to $K$}
\STATE Select $n$ random clients
and 
broadcast sketching operators $S_j$ and $\tilde S_j$ of sizes 
$\ssize_j = RPC_j$ and $\sssize$.
\STATE {\small $\nu_j=\! \text{SecAgg}(\{Q^{(c)}_{j}\}_{c=1}^n)+\cN\left(0,\frac{\sigma^2B^2}{0.9n} \bbI_{PC}\right)$}, \\ $\tilde \nu_j \!= \!
             \text{SecAgg}(\{\tilde Q^{(c)}_{j}\}_{c=1}^n)$
              {, where  $Q^{(c)}_{j} \!\!\gets\!\! [\mathsf{clip}_{B}(S^{(i)}_{j}(z^{(j)}_c))]_{i=1}^R$, 
$\tilde Q^{(c)}_j\!\! \gets [\mathsf{clip}_{B}(\tilde S^{(i)}_{j}(z^{(j)}_{c}))]_{i=1}^{\tilde R}$ } 
             
\STATE   \textbf{Unsketch DP mean}:$\omean_j = 
U_{j}(\nu_j)$
        \STATE \textbf{Second sketch}:
        $\bar e_j = \norm{\tilde S_j(\omean_j) - \tilde \nu_j}$
        \STATE $\overline \gamma = c_0 \frac{\sqrt{d}\sigma B}{\sqrt{0.9}n}+
\frac{2\sigma B}{\sqrt{0.1}n}
$
 \STATE 
 $\tilde e_j = \bar e_j + \cN(0, \sigma^2/0.1)$
 \STATE $C_{j+1} = \br{1+\eta \text{sign}(\tilde e_j - \overline \gamma)}C_j$
    \ENDFOR
\end{algorithmic}

\label{alg:dp-mean-estimation-adapt-server_adapt_experiments_adapt_tail}
\end{algorithm}

\paragraph{On AdaptTail's performance for federated optimization.} 
We found that the above mappings of \textbf{Adapt Tail}, did not do well in our experiments. In particular, we found that this method almost always led to a severe overestimate of the attainable compression rates, which correspondingly hurt the utility of the final model. Often, we found this deterioration was drastic, even beyond $\Delta=20\%$. 

Next, we discuss our observations and our hypotheses for why this method did not perform well in practice.
 First, recall that the logic behind our mapping from adaptive procedure for FME to FO is that to use it to solve intermediate FME problems arising from FedAvg. Thus, all our theoretical guarantees are for the FME problem. We find the the Adpat Tail procedure indeed finds very high compression rates, within small additional error (e.g., our chosen $10\%$) in FME. However, these compression rates are often too large to result in a reasonable accuracy for the learning task.

From this, and contrasted with the empirical success we observe with the Adapt Norm approach, we believe the most plausible issue is that the federated mean estimation error may not be the best proxy metric. Though we find very favorable compression rates with the Adapt Norm method by also building on this, this method does achieve worse compression-utility tradeoffs than the (potentially unattainable, due to computation requirements) genie.

\end{document}